\definecolor{citrine}{rgb}{0.89, 0.82, 0.04}
\definecolor{blued}{RGB}{70,197,221}
\DeclareRobustCommand{\eg}{e.g.,\@\xspace}                                      
\DeclareRobustCommand{\ie}{i.e.,\@\xspace}                                      
\DeclareRobustCommand{\wrt}{w.r.t.\@\xspace}                                    
\DeclareRobustCommand{\quotes}[1]{``#1''}
\newcommand{\mathbr}[1]{\bm{\mathbf{#1}}}
\newcommand{\ess}{\mathrm{ESS}}
\DeclareMathOperator*{\ev}{\mathbb{E}}
\DeclareMathOperator*{\Var}{\mathbb{V}\mathrm{ar}}
\DeclareMathOperator*{\esssup}{ess\,sup}
\newcommand{\Renyi}{R\'{e}nyi }
\newcommand{\de}{\,\mathrm{d}}
\newcommand{\vtheta}{\mathbr{\theta}}
\newcommand{\vrho}{\mathbr{\rho}}
\newcommand{\hyscoreprime}[1][\vtheta]{\nabla_{\vrho'}\log\nu_{\vrho'}}
\title{Policy Optimization via Importance Sampling}
\author{
  Alberto Maria Metelli\\
  Politecnico di Milano, Milan, Italy\\
  \footnotesize\texttt{\href{mailto:albertomaria.metelli@polimi.it}{albertomaria.metelli@polimi.it}} \\
  \And
  Matteo Papini\\
  Politecnico di Milano, Milan, Italy\\
  \footnotesize\texttt{\href{mailto:matteo.papini@polimi.it}{matteo.papini@polimi.it}} \\
  \And
  Francesco Faccio\\
  Politecnico di Milano, Milan, Italy\\ IDSIA, USI-SUPSI, Lugano, Switzerland\\
  \footnotesize\texttt{\href{mailto:francesco.faccio@mail.polimi.it}{francesco.faccio@mail.polimi.it}} \\
  \And
  Marcello Restelli\\
  Politecnico di Milano, Milan, Italy\\
  \footnotesize\texttt{\href{mailto:marcello.restelli@polimi.it}{marcello.restelli@polimi.it}} \\
}
\begin{document}



\maketitle

\begin{abstract}
	Policy optimization is an effective reinforcement learning approach to solve continuous control tasks. Recent achievements have shown that
	alternating online and offline optimization is a successful choice for efficient
	trajectory reuse. However, deciding when to stop optimizing and collect new trajectories is non-trivial, as it requires to account for the variance of the objective function estimate. In this paper, we propose 
	a novel, model-free, policy search algorithm, POIS, applicable in both action-based and parameter-based settings. We first derive a high-confidence bound
	for importance sampling estimation; then we define a surrogate objective function, which is optimized offline whenever a new batch of trajectories is collected. 
	Finally, the algorithm is tested on a selection of continuous control tasks, with both linear and deep policies, and compared with state-of-the-art policy optimization methods.
\end{abstract}

\section{Introduction}
In recent years, policy search methods~\cite{deisenroth2013survey} have proved to be valuable Reinforcement Learning (RL)~\cite{sutton1998reinforcement} approaches thanks to their successful achievements in continuous control tasks~\cite[\eg][]{lillicrap2015continuous, schulman2015trust, schulman2017proximal, schulman2015high}, robotic locomotion~\cite[\eg][]{tedrake2004stochastic, kober2013reinforcement} and partially observable environments~\cite[\eg][]{ng2000pegasus}. These algorithms can be roughly classified into two categories: \emph{action-based} methods~\cite{sutton2000policy, peters2008reinforcement} and \emph{parameter-based} methods~\cite{sehnke2008policy}. The former, usually known as policy gradient (PG) methods, perform a search in a parametric policy space by following the gradient of the utility function estimated by means of a batch of trajectories collected from the environment~\cite{sutton1998reinforcement}. In contrast, in parameter-based methods, the search is carried out directly in the space of parameters by exploiting global optimizers~\cite[\eg][]{rubinstein1999cross, hansen2001completely, stanley2002evolving, szita2006learning} or following a proper gradient direction like in Policy Gradients with Parameter-based Exploration (PGPE)~\cite{sehnke2008policy, wierstra2008natural, sehnke2010parameter}. A major question in policy search methods is: how should we use a batch of trajectories in order to exploit its information in the most efficient way?
On one hand, \emph{on-policy} methods leverage on the batch to perform a single gradient step, after which new trajectories are collected with the updated policy. Online PG methods are likely the most widespread policy search approaches: starting from the traditional algorithms based on stochastic policy gradient~\cite{sutton2000policy}, like REINFORCE~\cite{williams1992simple} and G(PO)MDP~\cite{baxter2001infinite}, moving toward more modern methods, such as Trust Region Policy Optimization (TRPO)~\cite{schulman2015trust}. These methods, however, rarely exploit the available trajectories in an efficient way, since each batch is
thrown away after just one gradient update. 
On the other hand, \emph{off-policy} methods maintain a behavioral policy, used to explore the environment and to collect samples, and a target policy which is optimized. The concept of off-policy learning is rooted in value-based RL~\cite{watkins1992q, peng1994incremental, munos2016safe} and it was first adapted to PG in~\cite{degris2012off}, using an actor-critic architecture. The approach has been extended to Deterministic Policy Gradient (DPG)~\cite{silver2014deterministic}, which allows optimizing deterministic policies while keeping a stochastic policy for exploration. More recently,
an efficient version of DPG coupled with a deep neural network to represent the policy has been proposed, named Deep Deterministic Policy Gradient (DDPG)~\cite{lillicrap2015continuous}. In the parameter-based framework, even though the original formulation~\cite{sehnke2008policy} introduces an online algorithm, an extension has been proposed to efficiently reuse the trajectories in an offline scenario~\cite{zhao2013efficient}. Furthermore, PGPE-like approaches allow overcoming several limitations of classical PG, like the need for a stochastic policy and the high variance of the gradient estimates.\footnote{Other solutions to these problems have been proposed in the action-based literature, like the aforementioned DPG algorithm, the gradient baselines~\cite{peters2008reinforcement} and the actor-critic architectures~\cite{konda2000actor}.} 

While on-policy algorithms are, by nature, \emph{online}, as they need to be fed with fresh samples whenever the policy is updated, off-policy methods can take advantage of mixing online and \emph{offline} optimization. This can be done by alternately sampling trajectories and performing optimization epochs with the collected data. A prime example of this alternating procedure is Proximal Policy Optimization (PPO)~\cite{schulman2017proximal}, that has displayed remarkable performance on continuous control tasks. Off-line optimization, however, introduces further sources of approximation, as the gradient \wrt the target policy needs to be estimated (off-policy) with samples collected with a behavioral policy. A common choice is to adopt an \emph{importance sampling} (IS)~\cite{mcbook, hesterberg1988advances} estimator in which each sample is reweighted proportionally to the likelihood of being generated by the target policy. However, directly optimizing this utility function is impractical since it displays a wide variance most of the times~\cite{mcbook}. Intuitively, the variance increases proportionally to the distance between the behavioral and the target policy; thus, the estimate is reliable as long as the two policies are close enough. Preventing uncontrolled updates in the space of policy parameters is at the core of the natural gradient approaches~\cite{amari1998natural} applied effectively both on PG methods~\cite{kakade2002approximately, peters2008natural, wierstra2008natural} and on PGPE methods~\cite{miyamae2010natural}. More recently, this idea has been captured (albeit indirectly) by TRPO, which optimizes via (approximate) natural gradient a surrogate objective function, derived from safe RL~\cite{kakade2002approximately, pirotta2013safe}, subject to a constraint on the Kullback-Leibler divergence between the behavioral and target policy.\footnote{Note that this regularization term appears in the performance improvement bound, which contains exact quantities only. Thus, it does not really account for the uncertainty derived from the importance sampling.} Similarly, PPO performs a truncation of the importance weights to discourage the optimization process from going too far. Although TRPO and PPO, together with DDPG, represent the state-of-the-art policy optimization methods in RL for continuous control, they do not explicitly encode in their objective function the uncertainty injected by the importance sampling procedure. A more theoretically grounded analysis has been provided for policy selection~\cite{doroudi2017importance}, model-free~\cite{thomas2015high} and model-based~\cite{thomas2016data} policy evaluation (also accounting for samples collected with multiple behavioral policies), and combined with options~\cite{guo2017using}. Subsequently, in~\cite{thomas2015high2} these methods have been extended for policy improvement, deriving a suitable concentration inequality for the case of truncated importance weights. Unfortunately, these methods are hardly scalable to complex control tasks. A more detailed review of the state-of-the-art policy optimization algorithms is reported in Appendix~\ref{apx:relatedWorks}.

In this paper, we propose a novel, model-free, actor-only, policy optimization algorithm, named \emph{Policy Optimization via Importance Sampling} (POIS) that mixes online and offline optimization to efficiently exploit the information contained in the collected trajectories. POIS explicitly accounts for the uncertainty introduced by the importance sampling by optimizing a surrogate objective function. The latter captures the trade-off
between the estimated performance improvement and the variance injected by the importance sampling. 
The main contributions of this paper are
theoretical, algorithmic and experimental. After revising some notions about importance sampling (Section~\ref{sec:is}), we propose a concentration inequality, of independent interest, for high-confidence \quotes{off-distribution} optimization of objective functions estimated via importance sampling (Section~\ref{sec:optimization}). Then we
show how this bound can be customized into a surrogate objective function in order to either search in the space of policies (Action-based POIS) or to search in the space of parameters (Parameter-bases POIS). The resulting algorithm (in both the action-based and the parameter-based flavor) collects, at each iteration, a set of trajectories. These are used to perform offline optimization of the surrogate objective via gradient ascent (Section~\ref{sec:policyOptimization}), after which a new batch of trajectories is collected using the optimized policy. Finally, we provide an experimental evaluation with both linear policies and deep neural policies to illustrate the advantages and limitations of our approach compared to state-of-the-art algorithms (Section~\ref{sec:experimental}) on classical control tasks~\cite{duan2016benchmarking, todorov2012mujoco}. The proofs for all Theorems and Lemmas are reported in Appendix~\ref{apx:proofs}. The implementation of POIS can be found at \url{https://github.com/T3p/pois}.

\section{Preliminaries}
\label{sec:preliminaries}
A discrete-time Markov Decision Process (MDP)~\cite{puterman2014markov} is defined as
a tuple $\mathcal{M}=(\mathcal{S},\mathcal{A},P,R,\gamma,D)$ where
$\mathcal{S}$ is the state space, $\mathcal{A}$ is the action space,
$P(\cdot|s,a)$ is a Markovian transition model that assigns for each state-action pair $(s,a)$ the probability of reaching the next state $s'$, $\gamma\in{[0,1]}$ is the discount factor, $R(s,a)\in [ -R_{\max}, R_{\max}]$ assigns the expected reward for performing action $a$ in state $s$ and $D$ is the distribution of the initial state. The behavior of an agent is described by a  policy $\pi(\cdot|s)$ that assigns for each state $s$ the probability of performing action $a$. A trajectory $\tau\in \mathcal{T}$ is a sequence of state-action pairs $\tau = (s_{\tau,0},a_{\tau,0},\dots,s_{\tau,H-1},a_{\tau,H-1},s_{\tau,H})$, where $H$ is the actual trajectory horizon. The performance of an agent is evaluated in terms of the \emph{expected return}, \ie the expected discounted sum of the rewards collected along the trajectory: $\ev_{\tau} \left[ R(\tau) \right]$, where $R(\tau) = \sum_{t=0}^{H-1} \gamma^t R(s_{\tau,t}, a_{\tau,t})$ is the trajectory return.

We focus our attention to the case in which the policy belongs to a parametric policy space $\Pi_\Theta = \{ \pi_{\mathbr{\theta}} : \mathbr{\theta} \in \Theta \subseteq \mathbb{R}^p \}$. In parameter-based approaches, the agent is equipped with a \emph{hyperpolicy} $\nu$ used to sample the policy parameters at the beginning of each episode. The hyperpolicy belongs itself to a parametric hyperpolicy space $\mathcal{N}_{\mathcal{P}} = \{ \nu_{\mathbr{\rho}} : \mathbr{\rho} \in \mathcal{P} \subseteq \mathbb{R}^r \}$. 
The expected return can be expressed, in the parameter-based case, as a double expectation: one over the policy parameter space $\Theta$ and one over the trajectory space $\mathcal{T}$:
\begin{equation}
\label{eq:expectedReturn}
	J_D(\mathbr{\rho}) = \int_{{\Theta}}  \int_{\mathcal{T}} \nu_{\mathbr{\rho}} (\mathbr{\theta}) p(\tau | \mathbr{\theta}) R(\tau) \de \tau \de \mathbr{\theta},
\end{equation}
where $p(\tau | \mathbr{\theta})=D(s_0)\prod_{t=0}^{H-1} \pi_{\mathbr{\theta}}(a_t|s_t) P(s_{t+1}|s_t,a_t)$ is the trajectory density function. The goal of a parameter-based learning agent is to determine the hyperparameters $\vrho^*$ so as to maximize $J_D(\vrho)$. If $\nu_{\mathbr{\rho}}$ is stochastic and differentiable, the hyperparameters can be learned according to the gradient ascent update: $\mathbr{\rho}' = \mathbr{\rho} + \alpha \nabla_{\mathbr{\rho}} J_D(\mathbr{\rho})$, where $\alpha > 0$ is the step size and $\nabla_{\mathbr{\rho}} J_D(\mathbr{\rho}) = \int_{\Theta}  \int_{\mathcal{T}} \nu_{\mathbr{\rho}}(\mathbr{\theta}) p(\tau | \mathbr{\theta}) \nabla_{\mathbr{\rho}} \log \nu_{\mathbr{\rho}} (\mathbr{\theta})  R(\tau) \de \tau \de \mathbr{\theta}$. Since the stochasticity of the hyperpolicy is a sufficient source of exploration, deterministic action policies of the kind $\pi_{\vtheta}(a|s) = \delta(a - u_{\vtheta}(s))$ are typically considered, where $\delta$ is the Dirac delta function and $u_{\vtheta}$ is a deterministic mapping from $\mathcal{S}$ to $\mathcal{A}$. In the action-based case, on the contrary, the hyperpolicy $\nu_{\mathbr{\rho}}$ is a deterministic distribution $\nu_{\mathbr{\rho}}(\mathbr{\theta}) = \delta (\mathbr{\theta} - g({\mathbr{\rho}}) )$, where $g({\mathbr{\rho}})$ is a deterministic mapping from $\mathcal{P}$ to $\Theta$. For this reason, the dependence on $\mathbr{\rho}$ is typically not represented and the expected return expression simplifies into a single expectation over the trajectory space $\mathcal{T}$:
\begin{equation}
	J_D(\mathbr{\theta}) = \int_{\mathcal{T}} p(\tau | \mathbr{\theta}) R(\tau) \de \tau .
\end{equation}
An action-based learning agent aims to find the policy parameters $\mathbr{\theta}^*$ that maximize $J_D(\mathbr{\theta})$. In this case, we need to enforce exploration by means of the stochasticity of $\pi_{\mathbr{\theta}}$. For stochastic and differentiable policies, learning can be performed via gradient ascent: $\mathbr{\theta}' = \mathbr{\theta} + \alpha \nabla_{\mathbr{\theta}} J_D(\mathbr{\theta})$, where $\nabla_{\mathbr{\theta}} J_D(\mathbr{\theta}) = \int_{\mathcal{T}} p(\tau | \mathbr{\theta}) \nabla_{\mathbr{\theta}} \log p(\tau | \mathbr{\theta}) R(\tau) \de \tau$.

\section{Evaluation via Importance Sampling}
\label{sec:is}
In off-policy evaluation~\cite{thomas2015high, thomas2016data}, we aim to estimate the performance of a target policy $\pi_T$ (or hyperpolicy $\nu_T$) given samples collected with a behavioral policy $\pi_B$ (or hyperpolicy $\nu_B$). More generally, we face the problem of estimating the expected value of a deterministic bounded function $f$ ($\|f \|_{\infty} < +\infty$) of random variable $x$ taking values in $\mathcal{X}$ under a target distribution $P$, after having collected samples from a behavioral distribution $Q$. The \emph{importance sampling} estimator (IS)~\cite{cochran2007sampling, mcbook} corrects the distribution with the \emph{importance weights} (or Radon–Nikodym derivative or likelihood ratio) $w_{P/Q}(x) = p(x)/q(x)$:
\begin{equation}
\label{eq:is_estimator}
	\widehat{\mu}_{P/Q} = \frac{1}{N} \sum_{i=1}^N \frac{p(x_i)}{q(x_i)} f(x_i) = \frac{1}{N} \sum_{i=1}^N w_{P/Q}(x_i) f(x_i),
\end{equation}
where $\mathbr{x} = (x_1,x_2,\dots,x_N)^T$ is sampled from $Q$ and we assume $q(x) > 0$ whenever $f(x)p(x) \neq 0$. This estimator is unbiased ($\ev_{\mathbr{x}\sim Q}[\widehat{\mu}_{P/Q}] = \ev_{x\sim P}[f(x)]$) but it may exhibit an undesirable behavior due to the variability of the importance weights, showing, in some cases, infinite variance. Intuitively, the magnitude of the importance weights provides an indication of how much the probability measures $P$ and $Q$ are dissimilar. This notion can be formalized by the \Renyi divergence~\cite{renyi1961measures, van2014renyi}, an information-theoretic dissimilarity index
between probability measures.

\paragraph{\Renyi divergence}  Let $P$ and $Q$ be two probability measures on a measurable space $(\mathcal{X}, \mathcal{F})$ such that $P \ll Q$ ($P$ is absolutely continuous \wrt $Q$) and $Q$ is $\sigma$-finite. Let $P$ and $Q$ admit $p$ and $q$ as Lebesgue probability density functions (p.d.f.), respectively. The $\alpha$-\Renyi divergence is defined as:
\begin{equation}
\label{eq:renyiDiv}
	D_{\alpha} (P \| Q) = \frac{1}{\alpha-1} \log \int_\mathcal{X} \left( \frac{\de P}{\de Q} \right)^{\alpha} \de Q  = \frac{1}{\alpha-1} \log \int_\mathcal{X} q(x) \left( \frac{ p(x)}{ q(x) } \right)^{\alpha}  \de x,
\end{equation} 
where $\de P/\de Q $ is the Radon–Nikodym derivative of $P$ \wrt $Q$ and $\alpha \in [0, \infty]$. Some remarkable cases are: $\alpha = 1$ when $D_{1}(P \| Q) = D_{\mathrm{KL}}(P \| Q)$ and $\alpha=\infty$ yielding $D_{\infty}(P \| Q) = \log \esssup_\mathcal{X}  \de P / \de Q$.
Importing the notation from~\cite{cortes2010learning}, we indicate the exponentiated $\alpha$-\Renyi divergence as $d_{\alpha} (P \| Q) = \exp \left( D_{\alpha} (P \| Q) \right)$.
With little abuse of notation, we will replace $D_{\alpha}(P\| Q)$ with $D_{\alpha}(p\| q)$ whenever possible within the context.

The \Renyi divergence provides a convenient expression for the moments of the importance weights: $\ev_{x \sim Q} \left[ w_{P/Q}(x)^\alpha \right] = d_{\alpha} (P \| Q)$. Moreover, $\Var_{x \sim Q} \left[ w_{P/Q}(x) \right] = d_2 (P \| Q) - 1$ and $\esssup_{x \sim Q}  w_{P/Q}(x) = d_\infty (P \| Q)$~\cite{cortes2010learning}. To mitigate the variance problem of the IS estimator, we can resort to the \emph{self-normalized importance sampling} estimator (SN)~\cite{cochran2007sampling}:
\begin{equation}
\label{eq:sn_estimator}
	\widetilde{\mu}_{P/Q} = \frac{\sum_{i=1}^N w_{P/Q}(x_i) f(x_i)}{\sum_{i=1}^N w_{P/Q}(x_i)}  =  \sum_{i=1}^N \widetilde{w}_{P/Q}(x_i) f(x_i),
\end{equation}
where $ \widetilde{w}_{P/Q}(x) =w_{P/Q}(x) /\sum_{i=1}^N w_{P/Q}(x_i)$ is the self-normalized importance weight. Differently from $\widehat{\mu}_{P/Q}$, $\widetilde{\mu}_{P/Q}$ is biased but consistent~\cite{mcbook} and it typically displays a more desirable behavior because of its smaller variance.\footnote{Note that $\left| \widetilde{\mu}_{P/Q} \right| \le \|f\|_{\infty}$. Therefore, its variance is always finite.} Given the realization $x_1,x_2,\dots,x_N$ we can interpret the SN estimator as the expected value of $f$ under an approximation of the distribution $P$ made by $N$ deltas, \ie $\widetilde{p}(x) = \sum_{i=1}^N \widetilde{w}_{P/Q}(x) \delta(x-x_i)$. The problem of assessing the quality of the SN estimator has been extensively studied by the simulation community, producing several diagnostic indexes to indicate when the weights might display
problematic behavior~\cite{mcbook}. The \emph{effective sample size} ($\ess$) was introduced in~\cite{kong1992note} as the number of samples drawn from $P$ so that the variance of the Monte Carlo estimator $\widetilde{\mu}_{P/P}$ is approximately equal to the variance of the SN estimator $\widetilde{\mu}_{P/Q}$ computed with $N$ samples. Here we report the original definition and its most common estimate:
\begin{equation} \label{eq:ess}
	\ess (P \| Q) = \frac{N}{\Var_{x\sim Q} \left[ w_{P/Q} (x) \right] + 1} = \frac{N}{d_2 (P \| Q)}, \quad \widehat{\ess } (P \| Q)  = \frac{1}{\sum_{i=1}^N \widetilde{w}_{P/Q}(x_i)^2}.
\end{equation}
The $\ess$ has an interesting interpretation: if $d_2(P \| Q)=1$, \ie $P=Q$ almost everywhere, then $\ess = N$ since we are performing Monte Carlo estimation. Otherwise, the $\ess$ decreases as the dissimilarity between the two distributions increases. In the literature, other $\ess$-like diagnostics have been proposed that also account for the nature of $f$~\cite{martino2017effective}.

\section{Optimization via Importance Sampling}
\label{sec:optimization}
The off-policy optimization problem~\cite{thomas2015high2} can be formulated as finding the best target policy $\pi_T$ (or hyperpolicy $\nu_T$), \ie the one maximizing the expected return, having access to a set of samples collected with a behavioral policy $\pi_B$ (or hyperpolicy $\nu_B$). In a more abstract sense, we aim to determine the target distribution $P$ that maximizes $\ev_{x\sim P}[f(x)]$ having samples 
collected from the fixed behavioral distribution $Q$. In this section, we analyze the problem of defining a proper objective function for this
purpose. Directly optimizing the estimator $\widehat{\mu}_{P/Q}$ or $\widetilde{\mu}_{P/Q}$ is, in most of the cases, unsuccessful. With enough freedom in choosing $P$, the optimal solution would assign as much probability mass as possible to the maximum value among $f(x_i)$. Clearly, in this scenario, the estimator is unreliable and displays a large variance. For this reason, we adopt a risk-averse approach and we decide to optimize a statistical \emph{lower bound} of the expected value $\ev_{x\sim P}[f(x)]$ that holds with high confidence.
We start by analyzing the behavior of the IS estimator and we provide the following result that bounds the variance of $\widehat{\mu}_{P/Q}$ in terms of the Renyi divergence.
\begin{restatable}[]{lemma}{boundVariance}
Let $P$ and $Q$ be two probability measures on the measurable space $\left(\mathcal{X}, \mathcal{F} \right)$ such that $P \ll Q$. Let $\mathbr{x} = (x_1,x_2,\dots,x_N)^T$ i.i.d. random variables sampled from $Q$ and $f: \mathcal{X} \rightarrow \mathbb{R}$ be a bounded function ($\| f \|_{\infty}<+\infty$). Then, for any $N>0$, the variance of the IS estimator $\widehat{\mu}_{P/Q}$ can be upper bounded as:
\begin{equation}
\label{eq:varianceBound}
	\Var_{\mathbr{x} \sim Q}  \left[ \widehat{\mu}_{P/Q} \right] \le \frac{1}{N} \|f\|_{\infty}^2 d_2 \left( P \| Q \right).
\end{equation}
\end{restatable}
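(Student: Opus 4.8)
The plan is to reduce the claim to a single-sample computation by exploiting the i.i.d. structure of the sample, and then to apply the moment identity for importance weights recalled in Section~\ref{sec:is}. Since $x_1,x_2,\dots,x_N$ are drawn i.i.d. from $Q$, the summands $w_{P/Q}(x_i)f(x_i)$ appearing in $\widehat{\mu}_{P/Q}$ are themselves i.i.d., so the variance of their normalized sum factorizes and I obtain
\[
	\Var_{\mathbr{x} \sim Q}\left[\widehat{\mu}_{P/Q}\right] = \frac{1}{N^2}\sum_{i=1}^N \Var_{x \sim Q}\left[w_{P/Q}(x)f(x)\right] = \frac{1}{N}\Var_{x \sim Q}\left[w_{P/Q}(x)f(x)\right].
\]

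Next I would bound the single-sample variance by its second (raw) moment, discarding the nonnegative squared mean, and then extract the uniform bound on $f$:
\[
	\Var_{x \sim Q}\left[w_{P/Q}(x)f(x)\right] \le \ev_{x \sim Q}\left[w_{P/Q}(x)^2 f(x)^2\right] \le \|f\|_{\infty}^2\, \ev_{x \sim Q}\left[w_{P/Q}(x)^2\right],
\]
where the last step uses $f(x)^2 \le \|f\|_{\infty}^2$ pointwise together with $w_{P/Q}(x)^2 \ge 0$. Finally, I would invoke the \Renyi-moment identity $\ev_{x \sim Q}\left[w_{P/Q}(x)^{\alpha}\right] = d_{\alpha}(P \| Q)$ stated earlier, specialized to $\alpha = 2$, to rewrite $\ev_{x \sim Q}[w_{P/Q}(x)^2] = d_2(P \| Q)$, which combined with the factorization above gives the asserted bound $\tfrac{1}{N}\|f\|_{\infty}^2 d_2(P \| Q)$.

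In truth there is no substantial obstacle here: the result is essentially a two-line computation, and the assumption $P \ll Q$ (plus $\|f\|_\infty < +\infty$) is exactly what is needed for the importance weights to be well defined and for the moment identity to apply. The only point requiring a word of care is the possibility that $d_2(P \| Q) = +\infty$, corresponding to the infinite-variance regime of the IS estimator mentioned in Section~\ref{sec:is}; in that case the right-hand side is $+\infty$ and the inequality holds trivially, so the bound is valid for every $N > 0$ without further hypotheses on the finiteness of the divergence.
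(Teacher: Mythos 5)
Your proof is correct and follows essentially the same route as the paper's: factor the variance over the i.i.d.\ samples, bound the single-sample variance by the second moment, pull out $\|f\|_\infty^2$, and identify $\ev_{x\sim Q}[w_{P/Q}(x)^2]$ with $d_2(P\|Q)$. Your closing remark on the trivial validity of the bound when $d_2(P\|Q)=+\infty$ is a sensible addition but not needed for the argument.
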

When $P = Q$ almost everywhere, we get $\Var_{\mathbr{x} \sim Q}  \left[ \widehat{\mu}_{Q/Q} \right] \le \frac{1}{N} \|f\|_{\infty}^2$, a well-known bound on the variance of a Monte Carlo estimator. Recalling the definition of ESS~\eqref{eq:ess} we can rewrite the previous bound as: $\Var_{\mathbr{x} \sim Q}  \left[ \widehat{\mu}_{P/Q} \right] \le \frac{\|f\|_{\infty}^2 }{\ess (P \| Q)}$, \ie the variance scales with ESS instead of $N$.
While $\widehat{\mu}_{P/Q}$ can have unbounded variance even if $f$ is bounded, the SN estimator $\widetilde{\mu}_{P/Q}$ is always bounded by $\|f\|_{\infty}$ and therefore it always has a finite variance. Since the normalization term makes all the samples $\widetilde{w}_{P/Q}(x_i) f(x_i)$ interdependent, an exact analysis of its bias and variance is more challenging. Several works adopted approximate methods to provide an expression for the variance~\cite{hesterberg1988advances}. We propose an analysis of bias and variance of the SN estimator in Appendix~\ref{apx:SN}.

\subsection{Concentration Inequality}
Finding a suitable concentration inequality for off-policy learning was studied in~\cite{thomas2015high} for offline policy evaluation and subsequently in~\cite{thomas2015high2} for optimization. 
On one hand, fully empirical concentration inequalities, like Student-T, besides the asymptotic approximation, are not suitable in this case since the empirical variance needs to be estimated with importance sampling as well injecting further uncertainty~\cite{mcbook}. On the other hand, several distribution-free inequalities like Hoeffding require knowing the maximum of the estimator, which might not exist ($d_{\infty}(P\| Q) = \infty$) for the IS estimator. Constraining $d_{\infty}(P\| Q)$ to be finite often introduces unacceptable limitations. For instance, in the case of univariate Gaussian distributions, it prevents a step that selects a target variance larger than the behavioral one from being performed (see Appendix~\ref{apx:IS}).\footnote{Although the variance
tends to be reduced in the learning process, there might be cases in which it needs to be increased (\eg suppose we start
with a behavioral policy with small variance, it might be beneficial increasing the variance to enforce exploration).} Even Bernstein inequalities~\cite{bercu2015concentration}, are hardly applicable since, for instance, in the case of univariate Gaussian distributions, the importance weights display a fat tail behavior (see Appendix~\ref{apx:IS}). We believe that a reasonable trade-off is to require the variance of the importance weights to be finite, that is equivalent to require $d_2(P\| Q) < \infty$, \ie $\sigma_P < 2\sigma_Q$ for univariate Gaussians. For this reason, we resort to Chebyshev-like inequalities and we propose the following concentration bound derived from Cantelli's inequality and customized for the IS estimator.
\begin{restatable}[]{thr}{bound}
\label{thr:bound}
	Let $P$ and $Q$ be two probability measures on the measurable space $\left(\mathcal{X}, \mathcal{F} \right)$ such that $P \ll Q$ and $d_2(P \| Q) < +\infty$. Let $x_1,x_2,\dots,x_N$ be i.i.d. random variables sampled from $Q$, and $f: \mathcal{X} \rightarrow \mathbb{R}$ be a bounded function ($\| f \|_{\infty}<+\infty$). Then, for any $0< \delta \le 1$ and $N>0$ with probability at least $1-\delta$ it holds that:
	\begin{equation}
	\label{eq:bound}
		\ev_{x \sim P} \left[ f(x) \right] \ge \frac{1}{N} \sum_{i=1}^N w_{P/Q}(x_i) f(x_i) - \| f \|_{\infty} \sqrt{\frac{(1-\delta)d_2(P \| Q)}{\delta N}}.
	\end{equation}
\end{restatable}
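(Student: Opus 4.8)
The plan is to read the right-hand side as a high-probability control on the lower deviation of the \emph{unbiased} IS estimator $\widehat{\mu}_{P/Q}$ from its mean, and to derive it from Cantelli's (one-sided Chebyshev) inequality combined with the variance bound~\eqref{eq:varianceBound}. First I would recall that $\widehat{\mu}_{P/Q}$ is unbiased, so its mean equals the target quantity $\mu := \ev_{x\sim P}[f(x)]$. The hypotheses $\|f\|_{\infty}<+\infty$ and $d_2(P\|Q)<+\infty$ guarantee, through~\eqref{eq:varianceBound}, that $\sigma^2 := \Var_{\mathbr{x}\sim Q}[\widehat{\mu}_{P/Q}]$ is finite, which is exactly what makes a Chebyshev-type argument legitimate here.

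Next I would apply Cantelli's inequality in its upper-tail form: for every $\lambda>0$,
\begin{equation*}
	\mathbb{P}_{\mathbr{x}\sim Q}\left( \widehat{\mu}_{P/Q} - \mu \ge \lambda \right) \le \frac{\sigma^2}{\sigma^2 + \lambda^2}.
\end{equation*}
The key observation is that the map $\sigma^2 \mapsto \sigma^2/(\sigma^2+\lambda^2)$ is increasing, so I may replace $\sigma^2$ by its upper bound $B := \|f\|_{\infty}^2 \, d_2(P\|Q)/N$ from~\eqref{eq:varianceBound} without reversing the inequality, obtaining the tail estimate $B/(B+\lambda^2)$. Setting this equal to $\delta$ and solving for $\lambda$ gives $\lambda = \sqrt{B(1-\delta)/\delta} = \|f\|_{\infty}\sqrt{(1-\delta)\,d_2(P\|Q)/(\delta N)}$, which is precisely the deviation term in~\eqref{eq:bound}. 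Passing to the complementary event then yields, with probability at least $1-\delta$, that $\mu \ge \widehat{\mu}_{P/Q} - \lambda$, i.e.\ the claim.

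The one delicate point is the direction of the substitution: Cantelli's bound is phrased with the \emph{exact} variance, whereas we only control it from above via~\eqref{eq:varianceBound}. I must therefore invoke monotonicity of $\sigma^2/(\sigma^2+\lambda^2)$ in $\sigma^2$ to be sure that enlarging the variance can only loosen — never tighten — the tail estimate, so the $1-\delta$ guarantee is preserved. A secondary point worth spelling out is that the finiteness of $\sigma^2$, needed for Cantelli to apply at all, is genuinely supplied by the assumption $d_2(P\|Q)<+\infty$ through~\eqref{eq:varianceBound}; this is why the theorem cannot drop that hypothesis, in contrast with the unbounded-variance regime discussed just before the statement.
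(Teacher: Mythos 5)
Your proof is correct and follows essentially the same route as the paper's: Cantelli's inequality applied to the unbiased IS estimator, followed by substitution of the variance bound~\eqref{eq:varianceBound} and solving the tail probability for $\lambda$. The one place you are actually more careful than the paper is in explicitly justifying, via monotonicity of $\sigma^2 \mapsto \sigma^2/(\sigma^2+\lambda^2)$, that replacing the exact variance by its upper bound only loosens the guarantee — the paper performs this substitution without comment.
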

The bound highlights the interesting trade-off between the estimated performance and the uncertainty introduced by changing the distribution. The latter enters in the bound as the 2-\Renyi divergence between the target distribution $P$ and the behavioral distribution $Q$. Intuitively, we should trust  the estimator $\widehat{\mu}_{P/Q}$ as long as $P$ is not too far from $Q$. 
For the SN estimator, accounting for the bias, we are able to obtain a bound (reported in Appendix~\ref{apx:SN}), with a similar dependence on $P$ as in Theorem~\ref{thr:bound}, albeit with different constants. 
Renaming all constants involved in the bound of Theorem~\ref{thr:bound} as $\lambda =  \| f \|_{\infty} \sqrt{(1-\delta)/\delta}$, we get a surrogate objective function. The optimization can be carried out in different ways. The following section shows why using the natural gradient could be a successful choice in case $P$ and $Q$ can be expressed as parametric differentiable distributions.
\subsection{Importance Sampling and Natural Gradient}
We can look at a parametric distribution $P_{\mathbr{\omega}}$, having $p_{\mathbr{\omega}}$ as a density function, as a point on a probability manifold with coordinates $\mathbr{\omega} \in \Omega$. 
If $p_{\mathbr{\omega}}$ is differentiable, the Fisher Information Matrix (FIM)~\cite{rao1992information, amari2012differential} is defined as: $\mathcal{F}(\mathbr{\omega}) = \int_{\mathcal{X}}p_{\mathbr{\omega}}(x) \nabla_{\mathbr{\omega}}\log p_{\mathbr{\omega}}(x)\nabla_{\mathbr{\omega}}\log p_{\mathbr{\omega}}(x)^T\de x$. This matrix is, up to a scale, an invariant metric \cite{amari1998natural} on parameter space $\Omega$, \ie $\kappa{(\mathbr{\omega}' - \mathbr{\omega})^T\mathcal{F}(\mathbr{\omega})(\mathbr{\omega}' - \mathbr{\omega})}$ is independent on the specific parameterization and provides a second order approximation of the distance between $p_{\mathbr{\omega}}$ and $p_{\mathbr{\omega}'}$ on the probability manifold up to a scale factor $\kappa\in\mathbb{R}$. 
Given a loss function $\mathcal{L}(\mathbr{\omega})$, we define the natural gradient~\cite{amari1998natural,kakade2002natural} as $\widetilde{\nabla}_{\mathbr{\omega}}\mathcal{L}(\mathbr{\omega})=\mathcal{F}^{-1}(\mathbr{\omega}) \nabla_{\mathbr{\omega}}\mathcal{L}(\mathbr{\omega})$, which represents the steepest ascent direction in the probability manifold. Thanks to the invariance property, there is a tight connection between the geometry induced by the \Renyi divergence and the Fisher information metric.
\begin{restatable}[]{thr}{riemannMetric}
	Let $p_{\mathbr{\omega}}$ be a p.d.f. differentiable \wrt $\mathbr{\omega} \in \Omega$. Then, it holds that, for the \Renyi divergence: $D_{\alpha} (p_{\mathbr{\omega}'} \| p_{\mathbr{\omega}}) = \frac{\alpha}{2} \left(\mathbr{\omega}' - \mathbr{\omega} \right)^T \mathcal{F}(\mathbr{\omega}) \left(\mathbr{\omega}' - \mathbr{\omega} \right) + o(\| \mathbr{\omega}' - \mathbr{\omega} \|_2^2)$, and for the exponentiated \Renyi divergence: $d_{\alpha} (p_{\mathbr{\omega}'} \| p_{\mathbr{\omega}}) = 1 + \frac{\alpha}{2} \left(\mathbr{\omega}' - \mathbr{\omega} \right)^T \mathcal{F}(\mathbr{\omega}) \left(\mathbr{\omega}' - \mathbr{\omega} \right) +o(\| \mathbr{\omega}' - \mathbr{\omega} \|_2^2)$. 
\end{restatable}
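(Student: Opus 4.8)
The plan is to perform a second-order Taylor expansion of the \Renyi divergence in the displacement $\mathbr{\Delta} := \mathbr{\omega}' - \mathbr{\omega}$ around $\mathbr{\Delta} = \mathbf{0}$, and to recognize the resulting Hessian as a multiple of the Fisher information matrix. Starting from definition~\eqref{eq:renyiDiv}, it is convenient to isolate the integral
\[
	I(\mathbr{\omega}',\mathbr{\omega}) = \int_{\mathcal{X}} p_{\mathbr{\omega}'}(x)^{\alpha} \, p_{\mathbr{\omega}}(x)^{1-\alpha} \de x,
\]
so that $D_{\alpha}(p_{\mathbr{\omega}'} \| p_{\mathbr{\omega}}) = \frac{1}{\alpha-1}\log I(\mathbr{\omega}',\mathbr{\omega})$. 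Holding $\mathbr{\omega}$ fixed and viewing $I$ as a function of $\mathbr{\omega}'$, I would expand it to second order: at $\mathbr{\omega}'=\mathbr{\omega}$ we have $I=\int p_{\mathbr{\omega}}\de x = 1$, so the whole proof reduces to computing $\nabla_{\mathbr{\omega}'} I$ and $\nabla^2_{\mathbr{\omega}'} I$ evaluated at $\mathbr{\omega}'=\mathbr{\omega}$.

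For the gradient, differentiating under the integral sign and using the log-derivative identity $\nabla_{\mathbr{\omega}'} p_{\mathbr{\omega}'}^{\alpha} = \alpha\, p_{\mathbr{\omega}'}^{\alpha} \nabla_{\mathbr{\omega}'}\log p_{\mathbr{\omega}'}$, the gradient at $\mathbr{\omega}'=\mathbr{\omega}$ becomes $\alpha \int p_{\mathbr{\omega}} \nabla_{\mathbr{\omega}}\log p_{\mathbr{\omega}} \de x$, which vanishes since the expected score is zero ($\int \nabla_{\mathbr{\omega}} p_{\mathbr{\omega}} \de x = \nabla_{\mathbr{\omega}} 1 = 0$). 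Differentiating once more produces two contributions at $\mathbr{\omega}'=\mathbr{\omega}$: a term $\alpha^2 \int p_{\mathbr{\omega}} \nabla\log p_{\mathbr{\omega}}\, \nabla\log p_{\mathbr{\omega}}^T \de x = \alpha^2 \mathcal{F}(\mathbr{\omega})$ coming from differentiating the score factor, and a term $\alpha \int p_{\mathbr{\omega}} \nabla^2\log p_{\mathbr{\omega}} \de x$ from the Hessian of the log-density. The key identity $\int p_{\mathbr{\omega}} \nabla^2\log p_{\mathbr{\omega}} \de x = -\mathcal{F}(\mathbr{\omega})$, obtained by writing $\nabla^2\log p = \nabla^2 p / p - \nabla\log p\,\nabla\log p^T$ and using $\int \nabla^2 p_{\mathbr{\omega}} \de x = 0$, then yields $\nabla^2_{\mathbr{\omega}'} I|_{\mathbr{\omega}'=\mathbr{\omega}} = \alpha(\alpha-1)\mathcal{F}(\mathbr{\omega})$. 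Hence $I = 1 + \frac{\alpha(\alpha-1)}{2}\mathbr{\Delta}^T\mathcal{F}(\mathbr{\omega})\mathbr{\Delta} + o(\|\mathbr{\Delta}\|_2^2)$.

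Finally I would compose the expansions. Since the quadratic term is $O(\|\mathbr{\Delta}\|_2^2)$, applying $\log(1+\epsilon)=\epsilon+o(\epsilon)$ gives $\log I = \frac{\alpha(\alpha-1)}{2}\mathbr{\Delta}^T\mathcal{F}(\mathbr{\omega})\mathbr{\Delta} + o(\|\mathbr{\Delta}\|_2^2)$, and dividing by $\alpha-1$ yields the stated expansion of $D_{\alpha}$; applying $\exp(\epsilon)=1+\epsilon+o(\epsilon)$ to $d_{\alpha}=\exp(D_{\alpha})$ then gives the exponentiated form at once. The algebra is light; the genuine obstacle is analytic rather than computational, namely justifying the two interchanges of differentiation and integration together with the vanishing of $\int \nabla p_{\mathbr{\omega}}\de x$ and $\int \nabla^2 p_{\mathbr{\omega}}\de x$. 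This requires a dominated-convergence / uniform-integrability hypothesis (finiteness of $d_2$ in a neighborhood of $\mathbr{\omega}$ and local domination of the first two derivatives of $p_{\mathbr{\omega}}$), which I would state explicitly as the regularity conditions under which all the manipulations above are legitimate.
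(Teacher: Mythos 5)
Your proposal is correct and follows essentially the same route as the paper: both expand $I(\mathbr{\omega}')=\int p_{\mathbr{\omega}'}^{\alpha}p_{\mathbr{\omega}}^{1-\alpha}\de x$ to second order, show the gradient vanishes at $\mathbr{\omega}'=\mathbr{\omega}$ via the zero-mean score, identify the Hessian as $\alpha(\alpha-1)\mathcal{F}(\mathbr{\omega})$, and then transfer the expansion through $\frac{1}{\alpha-1}\log(\cdot)$ and $\exp(\cdot)$. The only differences are cosmetic (you work with scores and compose Taylor expansions of $\log$ and $\exp$, whereas the paper differentiates $D_{\alpha}$ and $d_{\alpha}$ directly), and your explicit flagging of the differentiation-under-the-integral regularity conditions is a point the paper leaves implicit.
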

This result provides an approximate expression for the variance of the importance weights, as $\Var_{x \sim p_{\mathbr{\omega}}} \left[ w_{\mathbr{\omega}'/\mathbr{\omega}}(x) \right] = d_2 (p_{\mathbr{\omega}'} \| p_{\mathbr{\omega}}) - 1 \simeq \frac{\alpha}{2} \left(\mathbr{\omega}' - \mathbr{\omega} \right)^T \mathcal{F}(\mathbr{\omega}) \left(\mathbr{\omega}' - \mathbr{\omega} \right)$. It also justifies the use of natural gradients in off-distribution optimization, since a step in natural gradient direction has a controllable effect on the variance of the importance weights. 

\begin{figure}[!t]
\begin{minipage}[t]{0.48\textwidth}
  \vspace{0pt}  
  \begin{algorithm}[H]
  \small
  \caption{Action-based POIS}
  \label{alg:cbPOIS}
    \begin{algorithmic}
\State Initialize $\mathbr{\theta}^{0}_{0}$ arbitrarily
\For{$j = 0,1,2,..., \text{ until convergence}$}
	\State Collect $N$ trajectories with $\pi_{\mathbr{\theta}^{j}_{0}}$
	\For{$k = 0,1,2,..., \text{ until convergence}$}
		\State Compute ${\mathcal{G}}(\mathbr{\theta}^{j}_{k})$, $\nabla_{\mathbr{\theta}_{k}^{j}}{\mathcal{L}}(\mathbr{\theta}^{j}_{k} / \mathbr{\theta}^{j}_{0})$ and $\alpha_k$
		\State $\mathbr{\theta}^{j}_{k+1} = \mathbr{\theta}^{j}_{k} + \alpha_k {\mathcal{G}}(\mathbr{\theta}^{j}_{k})^{-1} \nabla_{\mathbr{\theta}_{k}^{j}}{\mathcal{L}}(\mathbr{\theta}^{j}_{k} / \mathbr{\theta}^{j}_{0})$
\EndFor
	\State $\mathbr{\theta}^{j+1}_{0} = \mathbr{\theta}^{j}_{k}$
\EndFor
\end{algorithmic}
  \end{algorithm}
\end{minipage}%
\hfill
\begin{minipage}[t]{0.48\textwidth}
  \vspace{0pt}
  \begin{algorithm}[H]
  \small
  \caption{Parameter-based POIS}
  \label{alg:pbPOIS}
    \begin{algorithmic}
\State Initialize $\mathbr{\rho}^{0}_{0}$ arbitrarily
\For{$j = 0,1,2,..., \text{ until convergence}$}
	\State Sample $N$ policy parameters $\vtheta_i^j$ from $\nu_{\mathbr{\rho}_{0}^j}$
	\State Collect a trajectory with each $\pi_{\vtheta_i^j}$
	\For{$k = 0,1,2,..., \text{ until convergence}$}
		\State Compute $\mathcal{G}(\mathbr{\rho}^{j}_{k})$, $\nabla_{\mathbr{\rho}_{k}^{j}}{\mathcal{L}}(\mathbr{\rho}^{j}_{k} /\mathbr{\rho}^{j}_{0})$ and $\alpha_k$
		\State $\mathbr{\rho}^{j}_{k+1} =\mathbr{\rho}^{j}_{k} + \alpha_k \mathcal{G}(\mathbr{\rho}^{j}_{k})^{-1} \nabla_{\mathbr{\rho}_{k}^{j}}{\mathcal{L}}(\mathbr{\rho}^{j}_{k} / \mathbr{\rho}^{j}_{0})$
\EndFor
	\State $\mathbr{\rho}^{j+1}_{0} = \mathbr{\rho}^{j}_{k}$
\EndFor
\end{algorithmic}
  \end{algorithm}
\end{minipage}
\end{figure}

\section{Policy Optimization via Importance Sampling}
\label{sec:policyOptimization}
In this section, we discuss how to customize the bound provided in Theorem~\ref{thr:bound} for policy optimization, developing a novel model-free actor-only policy search algorithm, named \emph{Policy Optimization via Importance Sampling} (POIS). We propose two versions of POIS: \emph{Action-based POIS} (A-POIS), which is based on a policy gradient approach, and \emph{Parameter-based POIS} (P-POIS), which adopts the PGPE framework. A more detailed description of the implementation aspects is reported in Appendix~\ref{apx:impl}. 

\subsection{Action-based POIS}
In Action-based POIS (A-POIS) we search for a policy that maximizes the performance index $J_D(\mathbr{\theta})$ within a parametric space $\Pi_\Theta = \{ \pi_{\mathbr{\theta}} : \mathbr{\theta} \in \Theta \subseteq \mathbb{R}^p \}$ of stochastic differentiable policies. 
In this context, the behavioral (resp. target) distribution $Q$ (resp. $P$) becomes the distribution over trajectories $p(\cdot|\mathbr{\theta})$ (resp. $p(\cdot|\mathbr{\theta}')$) induced by the behavioral policy $\pi_{\mathbr{\theta}}$ (resp. target policy $\pi_{\mathbr{\theta}'}$) and $f$ is the trajectory return $R(\tau)$ which is uniformly bounded as $|R(\tau)|\le R_{\max}\frac{1-\gamma^{H}}{1-\gamma}$.\footnote{When $\gamma\rightarrow 1$ the bound becomes $HR_{\max}$.} 
The surrogate loss function cannot be directly optimized via gradient ascent since computing $d_\alpha \left( p(\cdot|{\mathbr{\theta}'}) \| p(\cdot|{\mathbr{\theta}}) \right)$ requires the approximation of an integral over the trajectory space and, for stochastic environments, to know the transition model $P$, which is unknown in a model-free setting. 
Simple bounds to this quantity, like $d_\alpha \left( p(\cdot|{\mathbr{\theta}'}) \| p(\cdot|{\mathbr{\theta}}) \right) \le \sup_{s \in \mathcal{S}} d_\alpha \left( \pi_{\mathbr{\theta}'} (\cdot|s) \| \pi_{\mathbr{\theta}} (\cdot|s) \right)^H$, besides being hard to compute due to the presence of the supremum, are extremely conservative since the \Renyi divergence is raised to the horizon $H$. 
We suggest the replacement of the \Renyi divergence with an estimate $\widehat{d}_2\left(p(\cdot|\mathbr{\theta}') \| p(\cdot|\mathbr{\theta}) \right) = \frac{1}{N} \sum_{i=1}^N \prod_{t=0}^{H-1} d_{2} \left( \pi_{\mathbr{\theta}'}(\cdot|s_{\tau_i,t}) \| \pi_{\mathbr{\theta}}(\cdot|s_{\tau_i,t})\right)$ defined only in terms of the policy \Renyi divergence (see Appendix~\ref{apx:estRenyi} for details). Thus, we obtain the following surrogate objective:
\begin{equation}
	\mathcal{L}_{\lambda}^{\mathrm{A-POIS}}(\mathbr{\theta'}/\mathbr{\theta}) = \frac{1}{N} \sum_{i=1}^N w_{\mathbr{\theta}'/\mathbr{\theta}}(\tau_i) R(\tau_i) - \lambda \sqrt{ \frac{\widehat{d}_2 \left( p(\cdot|\mathbr{\theta}') \| p(\cdot|\mathbr{\theta}) \right) }{N}  },
\end{equation}
where $w_{\mathbr{\theta}'/\mathbr{\theta}}(\tau_i) = \frac{p(\tau_i|\mathbr{\theta}')}{p(\tau_i|\mathbr{\theta})} = \prod_{t=0}^{H-1} \frac{\pi_{\mathbr{\theta}'}(a_{\tau_i,t}|s_{\tau_i,t})}{\pi_{\mathbr{\theta}}(a_{\tau_i,t}|s_{\tau_i,t})}$. We consider the case in which $\pi_{\mathbr\theta}(\cdot|s)$ is a Gaussian distribution over actions whose mean depends on the state and whose covariance is state-independent and diagonal: $\mathcal{N}(u_{\mathbr{\mu}}(s), \mathrm{diag}(\mathbr{\sigma}^2))$, where $\mathbr{\theta} = (\mathbr{\mu}, \mathbr{\sigma})$. The learning process mixes online and offline optimization. At each online iteration $j$, a dataset of $N$ trajectories is collected by executing in the environment the current policy $\pi_{\mathbr{\theta}^j_{0}}$. These trajectories are used to optimize the surrogate loss function $\mathcal{L}_{\lambda}^{\mathrm{A-POIS}}$. At each offline iteration $k$, the parameters are updated via gradient ascent: $\mathbr{\theta}_{k+1}^j = \mathbr{\theta}_k^j + \alpha_k {\mathcal{G}}(\mathbr{\theta}^{j}_{k})^{-1} {\nabla}_{\mathbr{\theta}_{k}^{j}}{\mathcal{L}}(\mathbr{\theta}^{j}_{k} / \mathbr{\theta}^{j}_{0})$, where $\alpha_k > 0$ is the step size which is chosen via line search (see Appendix~\ref{apx:lineSearch}) and ${\mathcal{G}}(\mathbr{\theta}^{j}_{k}) $ is a positive semi-definite matrix (\eg ${\mathcal{F}}(\mathbr{\theta}_{k}^j)$, the FIM, for natural gradient)\footnote{The FIM needs to be estimated via importance sampling as well, as shown in Appendix~\ref{apx:implFIM}.}. The pseudo-code of POIS is reported in Algorithm~\ref{alg:cbPOIS}. 

\subsection{Parameter-based POIS}
In the Parameter-based POIS (P-POIS) we again consider a parametrized policy space $\Pi_\Theta = \{ \pi_{\mathbr{\theta}} : \mathbr{\theta} \in \Theta \subseteq \mathbb{R}^p \}$, but $\pi_{\mathbr{\theta}}$ needs not be differentiable. The policy parameters $\vtheta$ are sampled at the beginning of each episode from a parametric hyperpolicy $\nu_{\mathbr{\rho}}$ selected in a parametric space ${\mathcal{N}_\mathcal{P} = \{\nu_{\vrho} : \vrho\in \mathcal{P} \subseteq \mathbb{R}^r\}}$. The goal is to learn the \textit{hyperparameters} $\vrho$ so as to maximize $J_D(\vrho)$. In this setting, the distributions $Q$ and $P$ of Section~\ref{sec:optimization} correspond to the behavioral $\nu_{\mathbr{\rho}}$ and target $\nu_{\mathbr{\rho}'}$ hyperpolicies, while $f$ remains the trajectory return $R(\tau)$.
The importance weights~\cite{zhao2013efficient} must take into account all sources of randomness, derived from sampling a policy parameter $\mathbr{\theta}$ and a trajectory $\tau$: $w_{\vrho'/\vrho}(\vtheta) = \frac{\nu_{\vrho'}(\vtheta)p(\tau\vert\vtheta)}{\nu_{\vrho}(\vtheta)p(\tau\vert\vtheta)}= 
\frac{\nu_{\vrho'}(\vtheta)}{\nu_{\vrho}(\vtheta)}$.
In practice, a Gaussian hyperpolicy $\nu_{\vrho}$ with diagonal covariance matrix is often used, \ie $\mathcal{N}(\mathbr{\mu}, \mathrm{diag}(\mathbr{\sigma}^2))$ with $\mathbr{\rho} = (\mathbr{\mu}, \mathbr{\sigma})$. The policy is assumed to be deterministic: $ \pi_{\vtheta}(a|s) = \delta(a - u_{\vtheta}(s))$, where $u_{\mathbr{\theta}}$ is a deterministic function of the state $s$~\cite[\eg][]{sehnke2010parameter, gruttnermulti}.
A first advantage over the action-based setting is that the distribution of the importance weights is entirely known, as it is the ratio of two Gaussians and the \Renyi divergence $d_{2}(\nu_{\mathbr{\rho}'}\|\nu_{\mathbr{\rho}})$ can be computed exactly~\cite{burbea1984convexity} (see Appendix~\ref{apx:IS}). This leads to the following surrogate objective:
\begin{align}
	\mathcal{L}_{\lambda}^{\mathrm{P-POIS}}(\mathbr{\rho}'/\mathbr{\rho}) = \frac{1}{N} \sum_{i=1}^N w_{\mathbr{\rho}'/\mathbr{\rho}}(\mathbr{\theta}_i) R(\tau_i) - \lambda \sqrt{\frac{d_2 \left( \nu_{\mathbr{\rho}'} \| \nu_{\mathbr{\rho}} \right) }{N}},
\end{align}
where each trajectory $\tau_i$ is obtained by running an episode with action policy $\pi_{\vtheta_i}$, and the corresponding policy parameters $\vtheta_i$ are sampled independently from hyperpolicy $\nu_{\vrho}$, at the beginning of each episode. The hyperpolicy parameters are then updated offline as $\mathbr{\rho}_{k+1}^j = \mathbr{\rho}_k^j + \alpha_k \mathcal{G}(\mathbr{\rho}^{j}_{k})^{-1} {\nabla}_{\mathbr{\rho}_{k}^{j}}{\mathcal{L}}(\mathbr{\rho}^{j}_{k} / \mathbr{\rho}^{j}_{0})$ (see Algorithm \ref{alg:pbPOIS} for the complete pseudo-code).
A further advantage \wrt the action-based case is that the FIM $\mathcal{F(\vrho)}$ can be computed exactly, and it is diagonal in the case of a Gaussian hyperpolicy with diagonal covariance matrix, turning a problematic inversion into a trivial division (the FIM is block-diagonal in the more general case of a Gaussian hyperpolicy, as observed in \cite{miyamae2010natural}). This makes natural gradient much more enticing for P-POIS.

\begin{figure}[t!]
\centering
\footnotesize
\begin{subfigure}[c]{.33\textwidth}
  \raggedleft
  \setlength{\tabcolsep}{3pt}
  \begin{tabular}{lcccc}
    \toprule
    \scriptsize Task    & \scriptsize A-POIS     & \scriptsize  P-POIS  & \scriptsize  TRPO  & \scriptsize  PPO\\
    \midrule
    (a) & 0.4 & 0.4 & 0.1 & 0.01\\ 
    (b) & 0.1 & 0.1 & 0.1 & 1\\
    (c) & 0.7 & 0.2 & 1 & 1\\
    (d) & 0.9 & 1 & 0.01 & 1\\
    (e) & 0.9 & 0.8 & 0.01 & 0.01\\
    \bottomrule
    \vspace{0.17cm}
  \end{tabular}
  \hspace{0.4cm} \includegraphics[scale=1]{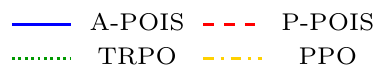}
  \vfill
\end{subfigure}
\begin{subfigure}[l]{.33\textwidth}
  \centering
  \includegraphics[scale=1]{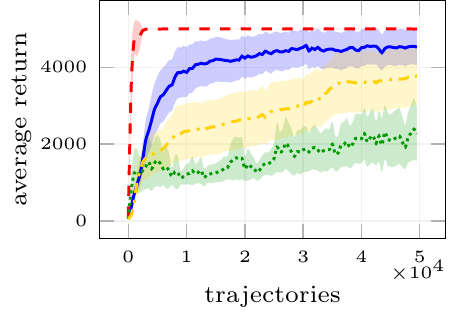}
  \vspace{-0.3cm}
   \caption{Cartpole}
\end{subfigure}%
\hfill
\begin{subfigure}[l]{.33\textwidth}
  \centering
  \includegraphics[scale=1]{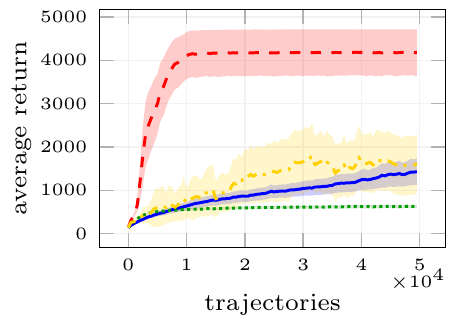}
  \vspace{-0.3cm}
  \caption{Inverted Double Pendulum}
\end{subfigure}%
\hfill
\begin{subfigure}[l]{.33\textwidth}
\vspace{0.4cm}
  \centering
  \includegraphics[scale=1]{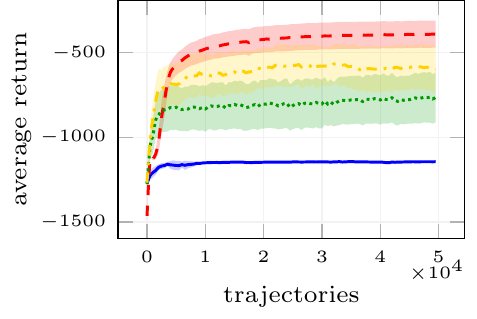}
  \vspace{-0.3cm}
   \caption{Acrobot}
\end{subfigure}
\hfill
\begin{subfigure}[l]{.33\textwidth}
	\vspace{0.4cm}
  \centering
  \includegraphics[scale=1]{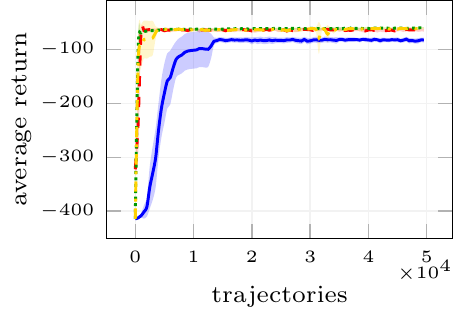}
  \vspace{-0.3cm}
  \caption{Mountain Car}
\end{subfigure}%
\hfill
\begin{subfigure}[l]{.33\textwidth}
\vspace{0.4cm}
  \centering
  \includegraphics[scale=1]{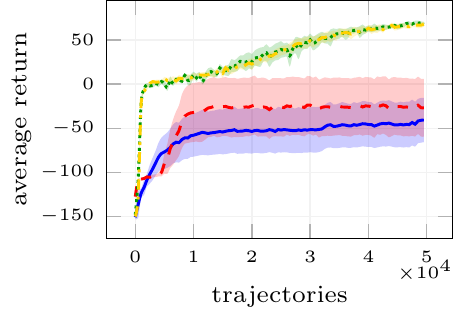}
  \vspace{-0.3cm}
   \caption{Inverted Pendulum}
\end{subfigure}%
\hfill
\caption{Average return as a function of the number of trajectories for A-POIS, P-POIS and TRPO with \emph{linear policy} (20 runs, 95\% c.i.). The table reports the best hyperparameters found ($\delta$ for POIS and the step size for TRPO and PPO).}
\label{fig:plotLin}
\end{figure}

\section{Experimental Evaluation}
\label{sec:experimental}
In this section, we present the experimental evaluation of POIS in its two flavors (action-based and parameter-based). We first provide a
set of empirical comparisons on classical continuous control tasks with linearly parametrized policies; we then show how POIS can be also adopted for learning deep neural policies. In all experiments, for the A-POIS we used the IS estimator, while for P-POIS we employed the SN estimator. All experimental details are provided in Appendix~\ref{apx:exp}.

\subsection{Linear Policies}
Linear parametrized Gaussian policies proved their ability to scale on complex control tasks~\cite{rajeswaran2017towards}. In this section, we compare the learning performance of A-POIS and P-POIS against TRPO~\cite{schulman2015trust} and PPO~\cite{schulman2017proximal} on classical continuous control benchmarks~\cite{duan2016benchmarking}. 
In Figure~\ref{fig:plotLin}, we can see that both versions of POIS are able to significantly outperform both TRPO and PPO in the Cartpole environments, especially the P-POIS. In the Inverted Double Pendulum environment the learning curve of P-POIS is remarkable while A-POIS displays a behavior comparable to PPO. In the Acrobot task, P-POIS displays a
better performance \wrt TRPO and PPO, but A-POIS does not keep up. In Mountain Car, we see yet another behavior: the learning curves of TRPO, PPO and P-POIS are almost one-shot (even if PPO shows a small instability), while A-POIS fails to display such a fast convergence. Finally, in the Inverted Pendulum environment, TRPO and PPO outperform both versions of POIS. This example highlights a limitation of our approach. Since POIS performs an importance sampling procedure at trajectory level, it cannot assign credit to good actions in bad trajectories. On the contrary, weighting each sample, TRPO and PPO are able also to exploit good trajectory segments. In principle, this problem can be mitigated in POIS by resorting to \emph{per-decision importance sampling}~\cite{precup2000eligibility}, in which the weight is assigned to individual rewards instead of trajectory returns.
Overall, POIS displays a performance comparable with TRPO and PPO across the tasks. In particular, P-POIS displays a better performance \wrt A-POIS. However, this ordering is not maintained when moving to more complex policy architectures, as shown in the next section.

In Figure~\ref{fig:plotDelta} we show, for several metrics, the behavior of A-POIS when changing the $\delta$ parameter in the Cartpole environment. We can see that when $\delta$ is small (\eg $0.2$), the Effective Sample Size ($\mathrm{ESS}$) remains large and, consequently, the variance of the importance weights ($\Var[w]$) is small. This means that the penalization term in the objective function discourages the optimization process from selecting policies which are far from the behavioral policy. As a consequence, the displayed behavior is very conservative, preventing the policy from reaching the optimum. On the contrary, when $\delta$ approaches 1, the ESS is smaller and the variance of the weights tends to increase significantly. Again, the performance remains suboptimal as the penalization term in the objective function is too light. The best behavior is obtained with an intermediate value of $\delta$, specifically $0.4$.

\subsection{Deep Neural Policies}\label{sec:dnp}
In this section, we adopt a deep neural network (3 layers: 100, 50, 25 neurons each) to represent the policy. The experiment setup is fully compatible with the classical benchmark~\cite{duan2016benchmarking}. While A-POIS can be directly applied to deep neural networks, P-POIS exhibits some critical issues. A highly dimensional hyperpolicy (like a Gaussian from which the weights of an MLP policy are sampled) can make $d_2(\nu_{\mathbr{\rho}'}\|\nu_{\mathbr{\rho}})$ extremely sensitive to small parameter changes, leading to over-conservative updates.\footnote{This curse of dimensionality, related to $\dim(\vtheta)$, has some similarities with the dependence of the \Renyi divergence on the actual horizon $H$ in the action-based case.} A first practical variant comes from the insight that $d_2(\nu_{\vrho'}\|\nu_{\vrho}) / N$ is the inverse of the effective sample size, as reported in Equation \ref{eq:ess}. We can obtain a less conservative (although approximate) surrogate function by replacing it with $1/\widehat{\ess} (\nu_{\mathbr{\rho}'}\|\nu_{\mathbr{\rho}})$. Another trick is to model the hyperpolicy as a set of independent Gaussians, each defined over a disjoint subspace of $\Theta$ (implementation details are provided in Appendix~\ref{apx:implPGPE}).
In Table~\ref{tab:benchmark}, we augmented the results provided in~\cite{duan2016benchmarking} with
the performance of POIS for the considered tasks. We can see that A-POIS is able to reach an overall behavior comparable with the best of the action-based algorithms, approaching TRPO and beating DDPG. Similarly, P-POIS exhibits a performance similar to CEM~\cite{szita2006learning}, the best performing among the parameter-based methods. The complete results are reported in Appendix~\ref{apx:exp}.

\begin{figure}[t!]
\centering
\footnotesize
\begin{subfigure}[l]{.33\textwidth}
\vspace{0.4cm}
  \centering
  \includegraphics[scale=1]{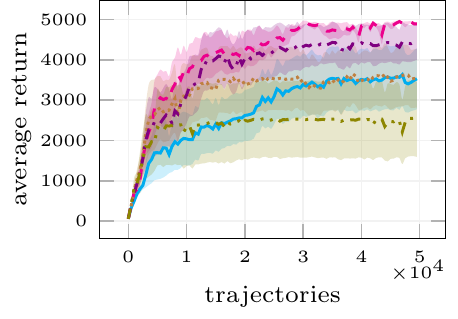}
\end{subfigure}
\hfill
\begin{subfigure}[l]{.33\textwidth}
	\vspace{0.4cm}
  \centering
  \includegraphics[scale=1]{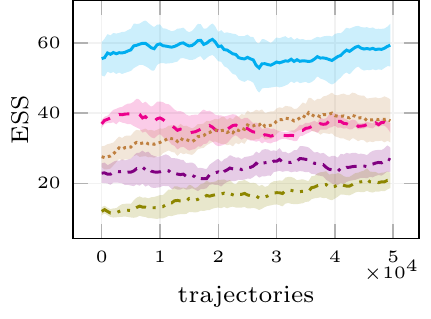}
\end{subfigure}%
\hfill
\begin{subfigure}[l]{.33\textwidth}
\vspace{0.4cm}
  \centering
  \includegraphics[scale=1]{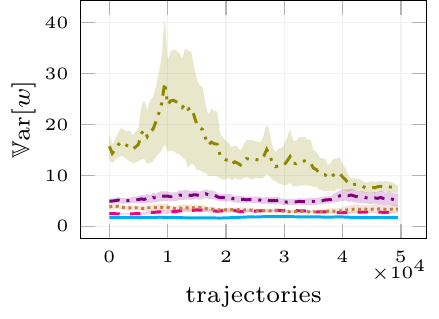}
\end{subfigure}%
\hfill
\centering
 \includegraphics[scale=1]{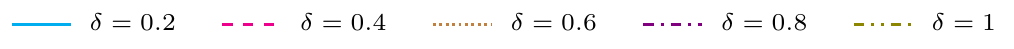}
\caption{Average return, Effective Sample Size (ESS) and variance of the importance weights ($\Var[w]$) as a function of the number of trajectories for A-POIS for different values of the parameter $\delta$ in the Cartpole environment (20 runs, 95\% c.i.).}
\label{fig:plotDelta}
\end{figure}

\setlength{\tabcolsep}{7pt}
\begin{table}[t]
  \caption{Performance of POIS compated with~\cite{duan2016benchmarking} on \emph{deep neural policies} (5 runs, 95\% c.i.). In \textbf{bold}, the performances that are not statistically significantly different from the best algorithm in each task.}
  \label{tab:benchmark}
  \centering
  \footnotesize
  \begin{tabular}{lcccc}
    \toprule
        & Cart-Pole     &   & Double Inverted  &   \\
    	Algorithm		& Balancing & Mountain Car & Pendulum & Swimmer \\
    \midrule
    REINFORCE & $ 4693.7 \pm 14.0 $  & $-67.1 \pm 1.0 $ & $4116.5 \pm 65.2 $ & $92.3 \pm 0.1$   \\
    TRPO & $ \mathbf{4869.8 \pm 37.6} $  & $\mathbf{ -61.7 \pm 0.9 }$ & $\mathbf{4412.4 \pm 50.4}$ & $\mathbf{96.0 \pm 0.2}$      \\
    DDPG & $ 4634.4 \pm 87.6 $  & $ -288.4 \pm 170.3$ & $2863.4 \pm 154.0$ & $85.8 \pm 1.8$     \\
    \rowcolor{blue!20}
    A-POIS & $ \mathbf{4842.8 \pm 13.0} $  & $ -63.7 \pm 0.5$ & $\mathbf{4232.1 \pm 189.5 }$ & $88.7 \pm 0.55$    \\
     \hdashline[3pt/2pt]
    CEM & $ 4815.4 \pm 4.8 $  & $ -66.0 \pm 2.4 $ & $2566.2 \pm 178.9$ & $68.8 \pm 2.4$     \\
    \rowcolor{red!20}
    P-POIS & $ 4428.1 \pm 138.6 $  & $ -78.9 \pm 2.5 $ & $3161.4 \pm 959.2$  & $76.8 \pm 1.6$    \\
    \bottomrule
  \end{tabular}
\end{table}

\section{Discussion and Conclusions}
\label{sec:discussion}
In this paper, we presented a new actor-only policy optimization algorithm, POIS, which
alternates online and offline optimization in order to efficiently exploit the collected trajectories, and can be used in combination with
action-based and parameter-based exploration. In contrast to the state-of-the-art algorithms,
POIS has a strong theoretical grounding, since its surrogate objective function
derives from a statistical bound on the estimated performance, that is able to capture
the uncertainty induced by importance sampling. The experimental evaluation
showed that POIS, in both its versions (action-based and parameter-based), is 
able to achieve a performance comparable with TRPO, PPO and other classical algorithms on continuous control tasks.
Natural extensions of POIS could focus on employing per-decision importance sampling, adaptive batch size, and trajectory reuse. Future work also includes scaling POIS to high-dimensional tasks and highly-stochastic environments.
We believe that this work represents a valuable starting point for a deeper
understanding of modern policy optimization and for the development of effective and scalable
policy search methods.


\newpage
\subsubsection*{Acknowledgments}
The study was partially funded by Lombardy Region (Announcement PORFESR 2014-2020).\\
F.F. was partially funded through ERC Advanced Grant (no: 742870).

\bibliographystyle{plain}
{\small \bibliography{biblio}}

\begin{thebibliography}{10}

\bibitem{amari1998natural}
Shun-Ichi Amari.
\newblock Natural gradient works efficiently in learning.
\newblock {\em Neural computation}, 10(2):251--276, 1998.

\bibitem{amari2012differential}
Shun-ichi Amari.
\newblock {\em Differential-geometrical methods in statistics}, volume~28.
\newblock Springer Science \& Business Media, 2012.

\bibitem{baxter2001infinite}
Jonathan Baxter and Peter~L Bartlett.
\newblock Infinite-horizon policy-gradient estimation.
\newblock {\em Journal of Artificial Intelligence Research}, 15:319--350, 2001.

\bibitem{bercu2015concentration}
Bernard Bercu, Bernard Delyon, and Emmanuel Rio.
\newblock Concentration inequalities for sums.
\newblock In {\em Concentration Inequalities for Sums and Martingales}, pages
  11--60. Springer, 2015.

\bibitem{burbea1984convexity}
Jacob Burbea.
\newblock The convexity with respect to gaussian distributions of divergences
  of order $\alpha$.
\newblock {\em Utilitas Mathematica}, 26:171--192, 1984.

\bibitem{cochran2007sampling}
William~G Cochran.
\newblock {\em Sampling techniques}.
\newblock John Wiley \& Sons, 2007.

\bibitem{cortes2010learning}
Corinna Cortes, Yishay Mansour, and Mehryar Mohri.
\newblock Learning bounds for importance weighting.
\newblock In {\em Advances in neural information processing systems}, pages
  442--450, 2010.

\bibitem{degris2012off}
Thomas Degris, Martha White, and Richard~S Sutton.
\newblock Off-policy actor-critic.
\newblock {\em arXiv preprint arXiv:1205.4839}, 2012.

\bibitem{deisenroth2013survey}
Marc~Peter Deisenroth, Gerhard Neumann, Jan Peters, et~al.
\newblock A survey on policy search for robotics.
\newblock {\em Foundations and Trends in Robotics}, 2(1--2):1--142, 2013.

\bibitem{doroudi2017importance}
Shayan Doroudi, Philip~S Thomas, and Emma Brunskill.
\newblock Importance sampling for fair policy selection.
\newblock UAI, 2017.

\bibitem{duan2016benchmarking}
Yan Duan, Xi~Chen, Rein Houthooft, John Schulman, and Pieter Abbeel.
\newblock Benchmarking deep reinforcement learning for continuous control.
\newblock In {\em International Conference on Machine Learning}, pages
  1329--1338, 2016.

\bibitem{glorot2010understanding}
Xavier Glorot and Yoshua Bengio.
\newblock Understanding the difficulty of training deep feedforward neural
  networks.
\newblock In {\em Proceedings of the thirteenth international conference on
  artificial intelligence and statistics}, pages 249--256, 2010.

\bibitem{gruttnermulti}
Mandy Gr{\"u}ttner, Frank Sehnke, Tom Schaul, and J{\"u}rgen Schmidhuber.
\newblock Multi-dimensional deep memory go-player for parameter exploring
  policy gradients.

\bibitem{guo2017using}
Zhaohan Guo, Philip~S Thomas, and Emma Brunskill.
\newblock Using options and covariance testing for long horizon off-policy
  policy evaluation.
\newblock In {\em Advances in Neural Information Processing Systems}, pages
  2489--2498, 2017.

\bibitem{hansen2001completely}
Nikolaus Hansen and Andreas Ostermeier.
\newblock Completely derandomized self-adaptation in evolution strategies.
\newblock {\em Evolutionary computation}, 9(2):159--195, 2001.

\bibitem{hesterberg1988advances}
Timothy~Classen Hesterberg.
\newblock {\em Advances in importance sampling}.
\newblock PhD thesis, Stanford University, 1988.

\bibitem{kakade2002approximately}
Sham Kakade and John Langford.
\newblock Approximately optimal approximate reinforcement learning.
\newblock In {\em ICML}, volume~2, pages 267--274, 2002.

\bibitem{kakade2002natural}
Sham~M Kakade.
\newblock A natural policy gradient.
\newblock In {\em Advances in neural information processing systems}, pages
  1531--1538, 2002.

\bibitem{kober2013reinforcement}
Jens Kober, J~Andrew Bagnell, and Jan Peters.
\newblock Reinforcement learning in robotics: A survey.
\newblock {\em The International Journal of Robotics Research},
  32(11):1238--1274, 2013.

\bibitem{konda2000actor}
Vijay~R Konda and John~N Tsitsiklis.
\newblock Actor-critic algorithms.
\newblock In {\em Advances in neural information processing systems}, pages
  1008--1014, 2000.

\bibitem{kong1992note}
Augustine Kong.
\newblock A note on importance sampling using standardized weights.
\newblock {\em University of Chicago, Dept. of Statistics, Tech. Rep}, 348,
  1992.

\bibitem{lillicrap2015continuous}
Timothy~P Lillicrap, Jonathan~J Hunt, Alexander Pritzel, Nicolas Heess, Tom
  Erez, Yuval Tassa, David Silver, and Daan Wierstra.
\newblock Continuous control with deep reinforcement learning.
\newblock {\em arXiv preprint arXiv:1509.02971}, 2015.

\bibitem{martino2017effective}
Luca Martino, V{\'{i}}ctor Elvira, and Francisco Louzada.
\newblock Effective sample size for importance sampling based on discrepancy
  measures.
\newblock {\em Signal Processing}, 131:386--401, 2017.

\bibitem{matsubara2010adaptive}
Takamitsu Matsubara, Tetsuro Morimura, and Jun Morimoto.
\newblock Adaptive step-size policy gradients with average reward metric.
\newblock In {\em Proceedings of 2nd Asian Conference on Machine Learning},
  pages 285--298, 2010.

\bibitem{miyamae2010natural}
Atsushi Miyamae, Yuichi Nagata, Isao Ono, and Shigenobu Kobayashi.
\newblock Natural policy gradient methods with parameter-based exploration for
  control tasks.
\newblock In {\em Advances in neural information processing systems}, pages
  1660--1668, 2010.

\bibitem{munos2016safe}
R{\'{e}}mi Munos, Tom Stepleton, Anna Harutyunyan, and Marc Bellemare.
\newblock Safe and efficient off-policy reinforcement learning.
\newblock In {\em Advances in Neural Information Processing Systems}, pages
  1054--1062, 2016.

\bibitem{ng2000pegasus}
Andrew~Y Ng and Michael Jordan.
\newblock Pegasus: A policy search method for large mdps and pomdps.
\newblock In {\em Proceedings of the Sixteenth conference on Uncertainty in
  artificial intelligence}, pages 406--415. Morgan Kaufmann Publishers Inc.,
  2000.

\bibitem{mcbook}
Art~B. Owen.
\newblock {\em Monte Carlo theory, methods and examples}.
\newblock 2013.

\bibitem{peng1994incremental}
Jing Peng and Ronald~J Williams.
\newblock Incremental multi-step q-learning.
\newblock In {\em Machine Learning Proceedings 1994}, pages 226--232. Elsevier,
  1994.

\bibitem{peters2010relative}
Jan Peters, Katharina M{\"u}lling, and Yasemin Altun.
\newblock Relative entropy policy search.
\newblock In {\em AAAI}, pages 1607--1612. Atlanta, 2010.

\bibitem{peters2007reinforcement}
Jan Peters and Stefan Schaal.
\newblock Reinforcement learning by reward-weighted regression for operational
  space control.
\newblock In {\em Proceedings of the 24th international conference on Machine
  learning}, pages 745--750. ACM, 2007.

\bibitem{peters2008natural}
Jan Peters and Stefan Schaal.
\newblock Natural actor-critic.
\newblock {\em Neurocomputing}, 71(7-9):1180--1190, 2008.

\bibitem{peters2008reinforcement}
Jan Peters and Stefan Schaal.
\newblock Reinforcement learning of motor skills with policy gradients.
\newblock {\em Neural networks}, 21(4):682--697, 2008.

\bibitem{pirotta2013safe}
Matteo Pirotta, Marcello Restelli, Alessio Pecorino, and Daniele Calandriello.
\newblock Safe policy iteration.
\newblock In {\em International Conference on Machine Learning}, pages
  307--315, 2013.

\bibitem{precup2000eligibility}
Doina Precup, Richard~S Sutton, and Satinder~P Singh.
\newblock Eligibility traces for off-policy policy evaluation.
\newblock In {\em ICML}, pages 759--766. Citeseer, 2000.

\bibitem{puterman2014markov}
Martin~L Puterman.
\newblock {\em Markov decision processes: discrete stochastic dynamic
  programming}.
\newblock John Wiley \& Sons, 2014.

\bibitem{rajeswaran2017towards}
Aravind Rajeswaran, Kendall Lowrey, Emanuel~V Todorov, and Sham~M Kakade.
\newblock Towards generalization and simplicity in continuous control.
\newblock In {\em Advances in Neural Information Processing Systems}, pages
  6553--6564, 2017.

\bibitem{rao1992information}
C~Radhakrishna Rao.
\newblock Information and the accuracy attainable in the estimation of
  statistical parameters.
\newblock In {\em Breakthroughs in statistics}, pages 235--247. Springer, 1992.

\bibitem{renyi1961measures}
Alfr{\'{e}}d R{\'{e}}nyi.
\newblock On measures of entropy and information.
\newblock Technical report, HUNGARIAN ACADEMY OF SCIENCES Budapest Hungary,
  1961.

\bibitem{rubinstein1999cross}
Reuven Rubinstein.
\newblock The cross-entropy method for combinatorial and continuous
  optimization.
\newblock {\em Methodology and computing in applied probability},
  1(2):127--190, 1999.

\bibitem{schulman2015trust}
John Schulman, Sergey Levine, Pieter Abbeel, Michael Jordan, and Philipp
  Moritz.
\newblock Trust region policy optimization.
\newblock In {\em International Conference on Machine Learning}, pages
  1889--1897, 2015.

\bibitem{schulman2015high}
John Schulman, Philipp Moritz, Sergey Levine, Michael Jordan, and Pieter
  Abbeel.
\newblock High-dimensional continuous control using generalized advantage
  estimation.
\newblock {\em arXiv preprint arXiv:1506.02438}, 2015.

\bibitem{schulman2017proximal}
John Schulman, Filip Wolski, Prafulla Dhariwal, Alec Radford, and Oleg Klimov.
\newblock Proximal policy optimization algorithms.
\newblock {\em arXiv preprint arXiv:1707.06347}, 2017.

\bibitem{sehnke2008policy}
Frank Sehnke, Christian Osendorfer, Thomas R{\"u}ckstie{\ss}, Alex Graves, Jan
  Peters, and J{\"u}rgen Schmidhuber.
\newblock Policy gradients with parameter-based exploration for control.
\newblock In {\em International Conference on Artificial Neural Networks},
  pages 387--396. Springer, 2008.

\bibitem{sehnke2010parameter}
Frank Sehnke, Christian Osendorfer, Thomas R{\"u}ckstie{\ss}, Alex Graves, Jan
  Peters, and J{\"u}rgen Schmidhuber.
\newblock Parameter-exploring policy gradients.
\newblock {\em Neural Networks}, 23(4):551--559, 2010.

\bibitem{silver2014deterministic}
David Silver, Guy Lever, Nicolas Heess, Thomas Degris, Daan Wierstra, and
  Martin Riedmiller.
\newblock Deterministic policy gradient algorithms.
\newblock In {\em ICML}, 2014.

\bibitem{stanley2002evolving}
Kenneth~O Stanley and Risto Miikkulainen.
\newblock Evolving neural networks through augmenting topologies.
\newblock {\em Evolutionary computation}, 10(2):99--127, 2002.

\bibitem{sun2009efficient}
Yi~Sun, Daan Wierstra, Tom Schaul, and Juergen Schmidhuber.
\newblock Efficient natural evolution strategies.
\newblock In {\em Proceedings of the 11th Annual conference on Genetic and
  evolutionary computation}, pages 539--546. ACM, 2009.

\bibitem{sutton1998reinforcement}
Richard~S Sutton and Andrew~G Barto.
\newblock {\em Reinforcement learning: An introduction}, volume~1.
\newblock MIT press Cambridge, 1998.

\bibitem{sutton2000policy}
Richard~S Sutton, David~A McAllester, Satinder~P Singh, and Yishay Mansour.
\newblock Policy gradient methods for reinforcement learning with function
  approximation.
\newblock In {\em Advances in neural information processing systems}, pages
  1057--1063, 2000.

\bibitem{szita2006learning}
Istv{\'{a}}n Szita and Andr{\'{a}}s L{\"o}rincz.
\newblock Learning tetris using the noisy cross-entropy method.
\newblock {\em Neural computation}, 18(12):2936--2941, 2006.

\bibitem{tedrake2004stochastic}
Russ Tedrake, Teresa~Weirui Zhang, and H~Sebastian Seung.
\newblock Stochastic policy gradient reinforcement learning on a simple 3d
  biped.
\newblock In {\em Intelligent Robots and Systems, 2004.(IROS 2004).
  Proceedings. 2004 IEEE/RSJ International Conference on}, volume~3, pages
  2849--2854. IEEE, 2004.

\bibitem{thomas2016data}
Philip Thomas and Emma Brunskill.
\newblock Data-efficient off-policy policy evaluation for reinforcement
  learning.
\newblock In {\em International Conference on Machine Learning}, pages
  2139--2148, 2016.

\bibitem{thomas2015high2}
Philip Thomas, Georgios Theocharous, and Mohammad Ghavamzadeh.
\newblock High confidence policy improvement.
\newblock In {\em International Conference on Machine Learning}, pages
  2380--2388, 2015.

\bibitem{thomas2015high}
Philip~S Thomas, Georgios Theocharous, and Mohammad Ghavamzadeh.
\newblock High-confidence off-policy evaluation.
\newblock In {\em AAAI}, pages 3000--3006, 2015.

\bibitem{todorov2012mujoco}
Emanuel Todorov, Tom Erez, and Yuval Tassa.
\newblock Mujoco: A physics engine for model-based control.
\newblock In {\em Intelligent Robots and Systems (IROS), 2012 IEEE/RSJ
  International Conference on}, pages 5026--5033. IEEE, 2012.

\bibitem{tucker2018mirage}
George Tucker, Surya Bhupatiraju, Shixiang Gu, Richard~E Turner, Zoubin
  Ghahramani, and Sergey Levine.
\newblock The mirage of action-dependent baselines in reinforcement learning.
\newblock {\em arXiv preprint arXiv:1802.10031}, 2018.

\bibitem{van2014renyi}
Tim Van~Erven and Peter Harremos.
\newblock R{\'{e}}nyi divergence and kullback-leibler divergence.
\newblock {\em IEEE Transactions on Information Theory}, 60(7):3797--3820,
  2014.

\bibitem{ver2012invented}
Jay~M Ver~Hoef.
\newblock Who invented the delta method?
\newblock {\em The American Statistician}, 66(2):124--127, 2012.

\bibitem{wang2016sample}
Ziyu Wang, Victor Bapst, Nicolas Heess, Volodymyr Mnih, Remi Munos, Koray
  Kavukcuoglu, and Nando de~Freitas.
\newblock Sample efficient actor-critic with experience replay.
\newblock {\em arXiv preprint arXiv:1611.01224}, 2016.

\bibitem{watkins1992q}
Christopher~JCH Watkins and Peter Dayan.
\newblock Q-learning.
\newblock {\em Machine learning}, 8(3-4):279--292, 1992.

\bibitem{wierstra2008natural}
Daan Wierstra, Tom Schaul, Jan Peters, and Juergen Schmidhuber.
\newblock Natural evolution strategies.
\newblock In {\em Evolutionary Computation, 2008. CEC 2008.(IEEE World Congress
  on Computational Intelligence). IEEE Congress on}, pages 3381--3387. IEEE,
  2008.

\bibitem{williams1992simple}
Ronald~J Williams.
\newblock Simple statistical gradient-following algorithms for connectionist
  reinforcement learning.
\newblock In {\em Reinforcement Learning}, pages 5--32. Springer, 1992.

\bibitem{wu2018variance}
Cathy Wu, Aravind Rajeswaran, Yan Duan, Vikash Kumar, Alexandre~M Bayen, Sham
  Kakade, Igor Mordatch, and Pieter Abbeel.
\newblock Variance reduction for policy gradient with action-dependent
  factorized baselines.
\newblock {\em arXiv preprint arXiv:1803.07246}, 2018.

\bibitem{zhao2011analysis}
Tingting Zhao, Hirotaka Hachiya, Gang Niu, and Masashi Sugiyama.
\newblock Analysis and improvement of policy gradient estimation.
\newblock In {\em Advances in Neural Information Processing Systems}, pages
  262--270, 2011.

\bibitem{zhao2013efficient}
Tingting Zhao, Hirotaka Hachiya, Voot Tangkaratt, Jun Morimoto, and Masashi
  Sugiyama.
\newblock Efficient sample reuse in policy gradients with parameter-based
  exploration.
\newblock {\em Neural computation}, 25(6):1512--1547, 2013.

\end{thebibliography}

\clearpage
\appendix

\section*{Index of the Appendix}
In the following, we briefly recap the contents of the Appendix.
\begin{itemize}[leftmargin=*, label={--}]
	\item Appendix~\ref{apx:relatedWorks} shows a more detailed comparison of POIS with the policy-search algorithms. Table~\ref{tab:comp} summarizes some features of the considered methods.
	\item Appendix~\ref{apx:proofs} reports all proofs and derivations.
	\item Appendix~\ref{apx:IS} provides an analysis of the distribution of the importance weights in the case of univariate Gaussian behavioral and target distributions.
	\item Appendix~\ref{apx:SN} shows some bounds on bias and variance for the self-normalized importance sampling estimator and provides a high confidence bound.
	\item Appendix~\ref{apx:impl} illustrates some implementation details of POIS, in particular line search algorithms, estimation of the \Renyi divergence, computation of the FIM and practical versions of P-POIS.
	\item Appendix~\ref{apx:exp} provides the hyperparameters used in the experiments and further results.
\end{itemize}

\section{Related Works}
\label{apx:relatedWorks}
Policy optimization algorithms can be classified according to different dimensions (Table~\ref{tab:comp}). It is by now established, in the policy-based RL community, that effective algorithms, either on-policy or off-policy, should account for the variance of the gradient estimate. Early attempts, in the
class of action-based algorithms, are the usage of a baseline to reduce the estimated gradient variance without introducing bias~\cite{baxter2001infinite, peters2008reinforcement}. A similar rationale is at the basis of actor-critic architectures~\cite{konda2000actor, sutton2000policy, peters2008natural}, in which
an estimate of the value function is used to reduce uncertainty. Baselines are typically constant (REINFORCE), time-dependent (G(PO)MDP) or state-dependent (actor-critic), but these approaches have been recently extended to account for action-dependent baselines~\cite{tucker2018mirage, wu2018variance}. Even though parameter-based algorithms are, by nature, affected by smaller variance \wrt action-based ones, similar baselines can be derived~\cite{zhao2011analysis}. A first dichotomy in the class of policy-based algorithms comes when considering
the minimal unit used to compute the gradient. \emph{Episode-based} (or episodic) approaches~\cite[\eg][]{williams1992simple, baxter2001infinite} perform the gradient estimation by averaging the gradients of each episode which need to have a finite horizon. On the contrary, \emph{step-based} approaches~\cite[\eg][]{schulman2015trust, schulman2017proximal, lillicrap2015continuous}, derived from the Policy Gradient Theorem~\cite{sutton2000policy}, can estimate the gradient by averaging over timesteps. The latter requires a function approximator
(a critic) to estimate the Q-function, or directly the advantage function~\cite{schulman2015high}. When coming to the on/off-policy dichotomy, the previous distinction has a relevant impact. Indeed, episode-based approaches need to perform importance sampling on trajectories, thus the importance weights are the products of policy ratios for all executed actions within a trajectory, whereas step-based algorithms need just to weight each sample with the corresponding policy ratio. The latter case helps to keep the value of the importance weights close to one, but the need to have a critic prevents from a complete analysis of the uncertainty since the bias/variance injected by the critic is hard to compute~\cite{konda2000actor}. Moreover, in the off-policy scenario, it is necessary to control some notion of dissimilarity between the behavioral and target policy, as the variance increases when moving too far. This is the case of TRPO~\cite{schulman2015trust}, where the regularization constraint based on the Kullback-Leibler divergence helps controlling the importance weights but originates from an exact bound on the performance improvement. Intuitively, the same rationale applies to the truncation of the importance weights, employed by PPO, that avoids performing too large steps in the policy space. Nevertheless, the step size in TRPO and the truncation range $\epsilon$ in PPO are just hyperparameters and have a limited statistical meaning. On the contrary, other actor-critic architectures have been proposed including also experience replay methods, like~\cite{wang2016sample} in which the importance weights are truncated, but the method is able to account for the injected bias. The authors propose to keep a running mean of the best
policies seen so far to avoid a hard constraint on the policy dissimilarity.
Differently from these methods, POIS directly models the uncertainty due to the importance sampling procedure. The bound in Theorem~\ref{thr:bound} introduces the unique hyperparameter $\delta$ which has a precise statistical meaning as confidence level. The optimal value of $\delta$ (like the step size in TRPO and $\epsilon$ in PPO) is task-dependent and might vary during the learning procedure. Furthermore, POIS is an episode-based approach in which the importance weights account for the whole trajectory at once; this might prevent from assigning credit to valuable subtrajectories (like in the case of Inverted Pendulum, see Figure~\ref{fig:plotLin}). A possible solution is to resort to per-decision importance sampling~\cite{precup2000eligibility}.

\begin{landscape}
\renewcommand{\arraystretch}{1.5}
\begin{table}[t]
  \caption{Comparison of some policy optimization algorithms according to different dimensions. For brevity, we will indicate with $w_{\mathbr{\theta}'/\mathbr{\theta}}(a_t|s_t) = \frac{\pi_{\mathbr{\theta}'}(a_t|s_t)}{\pi_{\mathbr{\theta}}(a_t|s_t)}$. For episode-based algorithms we will indicate with $\widehat{\ev}_{\tau \sim \mathbr{\theta}}$ the empirical average over trajectories collected with $\pi_{\mathbr{\theta}}$. For step-based algorithms $\widehat{\ev}_{t \sim \mathbr{\theta}}$ is the empirical average collecting samples with $\pi_{\mathbr{\theta}}$. For parameter-based algorithms we indicate with $\widehat{\ev}_{\mathbr{\theta} \sim \mathbr{\rho}, \tau \sim \mathbr{\theta}}$ the empirical expectation taken \wrt policy parameter $\mathbr{\theta}$ sampled from the hyperpolicy $\nu_{\mathbr{\rho}}$ and trajectory $\tau$ collected with $\pi_{\mathbr{\theta}}$. For the actor-critic architectures, $\widehat{Q}$ and $\widehat{A}$ are the estimated Q-function and advantage function.}
  \label{tab:comp}
  \centering
  \small
  \setlength{\extrarowheight}{0.04cm}
  \begin{tabularx}{\linewidth}{m{2.8cm}>{\centering}m{2.8cm}>{\centering}m{2.5cm}m{7.3cm}>{\centering}m{2cm}m{2.5cm}} 
    \toprule
    Algorithm   & Action/Parameter based & On/Off policy & Optimization problem & Critic & Timestep/Trajectory based\\
    \midrule
    REINFORCE/ G(PO)MDP~\cite{williams1992simple, baxter2001infinite} & action-based & on-policy & $\max \widehat{\ev}_{\tau \sim \mathbr{\theta}} \left[ R(\tau) \right]$  & No & episode-based\\
    TRPO~\cite{schulman2015trust} & action-based & on-policy & $\max \widehat{\ev}_{t \sim \mathbr{\theta}} \left[w_{\mathbr{\theta}'/\mathbr{\theta}}(a_t|s_t) \widehat{A}(s_t,a_t) \right]$ \newline \quad s.t. $\widehat{\ev}_{t \sim \mathbr{\theta}} \left[ D_{\mathrm{KL}}(\pi_{\mathbr{\theta}'}(\cdot|s_t) \| \pi_{\mathbr{\theta}}(\cdot|s_t))\right] \le \delta $ & Yes & step-based\\
    PPO~\cite{schulman2017proximal} & action-based & on/off-policy & $\begin{aligned} &\max \widehat{\ev}_{t \sim \mathbr{\theta}} \Big[ \min \Big\{ w_{\mathbr{\theta}'/\mathbr{\theta}}(a_t|s_t)\widehat{A}(s_t,a_t), \\ & \quad \mathrm{clip} \left( w_{\mathbr{\theta}'/\mathbr{\theta}}(a_t|s_t), 1-\epsilon, 1+\epsilon  \right) \widehat{A}(s_t,a_t) \Big\} \Big]\end{aligned}$  & Yes & step-based\\
    DDPG~\cite{lillicrap2015continuous} & action-based & off-policy & $\max  \widehat{\ev}_{t \sim \mathbr{\theta}} \left[ \pi_{\mathbr{\theta}'}(a_t|s_t) \widehat{Q}(s_t,a_t) \right]$  & Yes & step-based\\
     REPS~\cite{peters2010relative}\footnotemark & action-based & on-policy & $\max \widehat{\ev}_{t \sim \mathbr{\theta}} \left[ R(s_t, a_t) \right]$  \newline \quad s.t. $\widehat{\ev}_{t \sim \mathbr{\theta}} \left[D_{\mathrm{KL}} (d_{\mu}^{\pi_{\mathbr{\theta}'}}(s_t,a_t) \| d_{\mu}^{\pi_{\mathbr{\theta}}}(s_t,a_t))\right] \le \delta $  & Yes & step-based\\
     RWR~\cite{peters2007reinforcement} & action-based & on-policy & $\max \widehat{\ev}_{t \sim \mathbr{\theta}} \left[ \beta \exp \left( -\beta R(s_t, a_t) \right) \right]$  & No & step-based\\
    \rowcolor{blue!20}
    A-POIS & action-based & on/off-policy & $\max \widehat{\ev}_{\tau \sim \mathbr{\theta}} \left[ w_{\mathbr{\theta}'/\mathbr{\theta}}(\tau) R(\tau) \right] - \lambda \sqrt{\widehat{d_2}(p(\cdot|{\mathbr{\theta}'}) \|{p(\cdot|{\mathbr{\theta}}}))/N}$  & No & episode-based\\
     \hdashline[3pt/2pt]
    PGPE~\cite{sehnke2008policy} & parameter-based & on-policy & $\max \widehat{\ev}_{\mathbr{\theta} \sim {\mathbr{\rho}}, \tau \sim \mathbr{\theta}} \left[ R(\tau) \right]$  & No & episode-based\\
    IW-PGPE~\cite{zhao2013efficient} & parameter-based & on/off-policy & $\max  \widehat{\ev}_{\mathbr{\theta} \sim \mathbr{\rho}, \tau \sim \mathbr{\theta}} \left[ w_{\mathbr{\rho}'/\mathbr{\rho}}(\mathbr{\theta}) R(\tau) \right]$  & No & episode-based\\
    \rowcolor{red!20}
    P-POIS & parameter-based & on/off-policy & $\max \widehat{\ev}_{\mathbr{\theta} \sim \mathbr{\rho}, \tau \sim \mathbr{\theta}} \left[w_{\mathbr{\rho}'/\mathbr{\rho}}(\mathbr{\theta}) R(\tau) \right] - \lambda\sqrt{{d_2}(\nu_{\mathbr{\rho}'} \|{\nu_{\mathbr{\rho}}})/N}$  & No & episode-based\\
    \bottomrule
  \end{tabularx}
\end{table}
\footnotetext{We indicate with $d_{\mu}^{\pi_{\mathbr{\theta}}}(s_t,a_t)$ the state-action occupancy~\cite{sutton2000policy}.}

\end{landscape}

\section{Proofs and Derivations}
\label{apx:proofs}
\boundVariance*
\begin{proof}
	From the fact that $x_i$ are i.i.d. we can write:
	\begin{align*}
		\Var_{\mathbr{x} \sim Q}  \left[ \widehat{\mu}_{P/Q} \right] & \le \frac{1}{N} \Var_{x_1 \sim Q} \left[  \frac{p(x_1)}{q(x_1)} f(x_1)  \right] \le \frac{1}{N} \ev_{x_1 \sim Q} \left[ \left( \frac{p(x_1)}{q(x_1)} f(x_1) \right)^2 \right] \\
		& \le \frac{1}{N} \|f\|_{\infty}^2 \ev_{x_1 \sim Q} \left[ \left( \frac{p(x_1)}{q(x_1)}  \right)^2 \right] =  \frac{1}{N} \|f\|_{\infty}^2 d_2 \left( P \| Q \right).
	\end{align*}
\end{proof}

\bound*
\begin{proof}
	We start from Cantelli's inequality applied on the random variable $\widehat{\mu}_{P/Q} =  \frac{1}{N} \sum_{i=1}^N w_{P/Q}(x_i) f(x_i) $:
	\begin{equation}
		\Pr \left( \widehat{\mu}_{P/Q} - \ev_{x \sim P} \left[ f(x) \right] \ge \lambda \right) \le \frac{1}{1+ \frac{\lambda^2}{\Var_{\mathbr{x} \sim Q}  \left[ \widehat{\mu}_{P/Q} \right] }}.
	\end{equation}
	By calling $\delta = \frac{1}{1+ \frac{\lambda^2}{\Var_{\mathbr{x} \sim Q}  \left[ \widehat{\mu}_{P/Q} \right] }}$ and considering the complementary event, we get that with probability at least $1-\delta$ we have:
	\begin{equation}
		\ev_{x \sim P} \left[ f(x) \right] \ge \widehat{\mu}_{P/Q} - \sqrt{\frac{1-\delta}{\delta} \Var_{\mathbr{x} \sim Q}  \left[ \widehat{\mu}_{P/Q} \right] }.
	\end{equation}
	By replacing the variance with the bound in Theorem~\ref{thr:bound} we get the result.
\end{proof}

\riemannMetric*
\begin{proof}
	We need to compute the second-order Taylor expansion of the $\alpha$-\Renyi divergence. We start considering the term:
	\begin{equation}
		I(\mathbr{\omega}') = \int_\mathcal{X} \left( \frac{ p_{\mathbr{\omega}'}(x)}{ p_{\mathbr{\omega}}(x) } \right)^{\alpha} p_{\mathbr{\omega}}(x) \de x = \int_\mathcal{X} { p_{\mathbr{\omega}'}(x)}^{\alpha} { p_{\mathbr{\omega}}(x) }^{1-\alpha}  \de x.
	\end{equation}
	The gradient is given by:
	\begin{align*}
		\nabla_{\mathbr{\omega}'} I(\mathbr{\omega}') & = \int_\mathcal{X} { \nabla_{\mathbr{\omega}'} p_{\mathbr{\omega}'}(x)}^{\alpha} { p_{\mathbr{\omega}}(x) }^{1-\alpha}  \de x =  \alpha \int_\mathcal{X} {  p_{\mathbr{\omega}'}(x)}^{\alpha-1} { p_{\mathbr{\omega}}(x) }^{1-\alpha}  \nabla_{\mathbr{\omega}'} p_{\mathbr{\omega}'}(x)  \de x.
	\end{align*}
	Thus, $\nabla_{\mathbr{\omega}'} I(\mathbr{\omega}') \rvert_{\mathbr{\omega}' = \mathbr{\omega}} = \mathbr{0}$. We now compute the Hessian:
	
\resizebox{.99\linewidth}{!}{
\begin{minipage}{\linewidth}
	\begin{align*}
	\displaystyle 
		\mathcal{H}_{\mathbr{\omega}'} I(\mathbr{\omega}') & = \nabla_{\mathbr{\omega}'} \nabla_{\mathbr{\omega}'}^T I(\mathbr{\omega}') = \alpha  \nabla_{\mathbr{\omega}'} \int_\mathcal{X} {  p_{\mathbr{\omega}'}(x)}^{\alpha-1} { p_{\mathbr{\omega}}(x) }^{1-\alpha}  \nabla_{\mathbr{\omega}'}^T p_{\mathbr{\omega}'}(x)  \de x \\
		& = \alpha  \int_\mathcal{X}   \left( (\alpha - 1) p_{\mathbr{\omega}'}(x)^{\alpha-2}  p_{\mathbr{\omega}}(x) ^{1-\alpha} \nabla_{\mathbr{\omega}'} p_{\mathbr{\omega}'}(x) \nabla_{\mathbr{\omega}'}^T p_{\mathbr{\omega}'}(x) +  p_{\mathbr{\omega}'}(x)^{\alpha-1}  p_{\mathbr{\omega}}(x) ^{1-\alpha}  \mathcal{H}_{\mathbr{\omega}'} p_{\mathbr{\omega}'}(x) \right)\de x. 
	\end{align*}
	\end{minipage}}
	
	Evaluating the Hessian in $\mathbr{\omega}$ we have:
	\begin{align*}
		\mathcal{H}_{\mathbr{\omega}'} I(\mathbr{\omega}')\rvert_{\mathbr{\omega}' = \mathbr{\omega}} & = \alpha(\alpha-1) \int_\mathcal{X} p_{\mathbr{\omega}}(x) ^{-1} \nabla_{\mathbr{\omega}} p_{\mathbr{\omega}}(x) \nabla_{\mathbr{\omega}}^T p_{\mathbr{\omega}}(x) \de x \\
		& = \alpha(\alpha-1) \int_\mathcal{X} p_{\mathbr{\omega}}(x) \nabla_{\mathbr{\omega}} \log p_{\mathbr{\omega}}(x) \nabla_{\mathbr{\omega}}^T \log p_{\mathbr{\omega}}(x) \de x = \alpha (\alpha-1) \mathcal{F}(\mathbr{\omega}).
	\end{align*}
	Now, $D_{\alpha} (p_{\mathbr{\omega}'} \| p_{\mathbr{\omega}}) = \frac{1}{\alpha-1} \log I(\mathbr{\omega}')$. Thus:
	\begin{equation*}
		\nabla_{\mathbr{\omega}'} D_{\alpha} (p_{\mathbr{\omega}'} \| p_{\mathbr{\omega}}) \rvert_{\mathbr{\omega}' = \mathbr{\omega}} = \frac{1}{\alpha-1} \frac{\nabla_{\mathbr{\omega}'} I(\mathbr{\omega}') }{I(\mathbr{\omega}')} \bigg\rvert_{\mathbr{\omega}' = \mathbr{\omega}} = \mathbr{0},
	\end{equation*}
	\begin{align*}
		\mathcal{H}_{\mathbr{\omega}'} D_{\alpha}(p_{\mathbr{\omega}'} \| p_{\mathbr{\omega}}) \rvert_{\mathbr{\omega}'  = \mathbr{\omega}} & = \frac{1}{\alpha-1} \frac{I(\mathbr{\omega}')\mathcal{H}_{\mathbr{\omega}'}I(\mathbr{\omega}') + \nabla_{\mathbr{\omega}'} I(\mathbr{\omega}')\nabla_{\mathbr{\omega}'}^T I(\mathbr{\omega}')} {\left( I(\mathbr{\omega}') \right) ^2} \bigg\rvert_{\mathbr{\omega}' = \mathbr{\omega}} \\
		& = \frac{1}{\alpha-1}  \mathcal{H}_{\mathbr{\omega}'}I(\mathbr{\omega}') \rvert_{\mathbr{\omega}' = \mathbr{\omega}}  = \alpha  \mathcal{F}(\mathbr{\omega}),
	\end{align*}
	having observed that $I(\mathbr{\omega}) = 1$.
	For what concerns the $d_\alpha (p_{\mathbr{\omega}'} \| p_{\mathbr{\omega}})$, we have:
	\begin{align*}
		\nabla_{\mathbr{\omega}'} d_{\alpha} (p_{\mathbr{\omega}'} \| p_{\mathbr{\omega}})\rvert_{\mathbr{\omega}' = \mathbr{\omega}} & = \nabla_{\mathbr{\omega}'} \exp \left(D_{\alpha} (p_{\mathbr{\omega}'} \| p_{\mathbr{\omega}}) \right)\rvert_{\mathbr{\omega}' = \mathbr{\omega}} \\
		& = \exp \left(D_{\alpha} (p_{\mathbr{\omega}'} \| p_{\mathbr{\omega}}) \right) \nabla_{\mathbr{\omega}'} D_{\alpha} (p_{\mathbr{\omega}'} \| p_{\mathbr{\omega}})\rvert_{\mathbr{\omega}' = \mathbr{\omega}} = \mathbr{0},
	\end{align*}
	\begin{align*}
		\mathcal{H}_{\mathbr{\omega}'} & d_{\alpha}(p_{\mathbr{\omega}'} \| p_{\mathbr{\omega}}) \rvert_{\mathbr{\omega}' = \mathbr{\omega}}  = \mathcal{H}_{\mathbr{\omega}'} \exp \left( D_{\alpha}(p_{\mathbr{\omega}'} \| p_{\mathbr{\omega}})  \right) \rvert_{\mathbr{\omega}' = \mathbr{\omega}} \\
		& = \exp \left( D_{\alpha}(p_{\mathbr{\omega}'} \| p_{\mathbr{\omega}}) \right)  \left(\mathcal{H}_{\mathbr{\omega}'} D_{\alpha}(p_{\mathbr{\omega}'} \| p_{\mathbr{\omega}}) + \nabla_{\mathbr{\omega}'} D_{\alpha} (p_{\mathbr{\omega}'} \| p_{\mathbr{\omega}}) \nabla_{\mathbr{\omega}'}^T D_{\alpha} (p_{\mathbr{\omega}'} \| p_{\mathbr{\omega}}) \right) \rvert_{\mathbr{\omega}' = \mathbr{\omega}}\\
		& = \alpha  \mathcal{F}(\mathbr{\omega}).
	\end{align*}
	
\end{proof}

\section{Analysis of the IS estimator}
\label{apx:IS}
In this Appendix, we analyze the behavior of the importance weights when the behavioral and target distributions are Gaussians. We start providing a closed-form expression for the \Renyi divergence between multivariate Gaussian distributions~\cite{burbea1984convexity}. Let $P\sim \mathcal{N}(\mathbr{\mu}_P, \mathbr{\Sigma}_P)$ and $Q \sim  \mathcal{N}(\mathbr{\mu}_Q, \mathbr{\Sigma}_Q)$ and $\alpha \in [0, \infty]$:
\begin{equation}
D_{\alpha}(P \| Q) = \frac{\alpha}{2} (\mathbr{\mu}_P - \mathbr{\mu}_Q)^T \mathbr{\Sigma}_{\alpha}^{-1} (\mathbr{\mu}_P - \mathbr{\mu}_Q) - \frac{1}{2(\alpha-1)} \log \frac{\det(\mathbr{\Sigma}_{\alpha})}{\det(\mathbr{\Sigma}_P)^{1-\alpha} \det(\mathbr{\Sigma}_Q)^{\alpha}},
\end{equation}
where $\mathbr{\Sigma}_{\alpha} = \alpha \mathbr{\Sigma}_Q + (1-\alpha) \mathbr{\Sigma}_P$ under the assumption that $\mathbr{\Sigma}_{\alpha}$ is positive-definite.

From now on, we will focus on univariate Gaussian distributions and we provide a closed-form expression for the importance weights and their probability density function $f_w$. We consider $Q \sim \mathcal{N}(\mu_Q, \sigma^2_Q)$ as behavioral distribution and $P \sim \mathcal{N}(\mu_P, \sigma^2_P)$ as target distribution. We assume that $\sigma_Q^2, \sigma_P^2>0$ and we consider the two cases: unequal variances and equal variances. For brevity, we will indicate
with $w(x)$ the weight $w_{P/Q}(x)$.
\subsection{Unequal variances} When $\sigma_Q^2 \neq \sigma_P^2$, the expression of the importance weights is given by:
\begin{equation}
	w(x) = \frac{\sigma_Q}{\sigma_P} \exp{\left( \frac{1}{2}\frac{(\mu_P - \mu_Q)^2}{\sigma_Q^2 - \sigma_P^2} \right)} \exp{\left( - \frac{1}{2}\frac{\sigma_Q^2 - \sigma_P^2}{\sigma_Q^2\sigma_P^2} \Bigg (x - \frac{\sigma_Q^2\mu_P - \sigma_P^2\mu_Q}{\sigma_Q^2 - \sigma_P^2}\Bigg)^2 \right)},
\end{equation}
for $x \sim Q$. Let us first notice two distinct situations: if $\sigma_Q^2 - \sigma_P^2 > 0$ the weight $w(x)$ is upper bounded by $A = \frac{\sigma_Q}{\sigma_P} \exp{\left( \frac{1}{2}\frac{(\mu_P - \mu_Q)^2}{\sigma_Q^2 - \sigma_P^2} \right)}$, whereas if $\sigma_Q^2 - \sigma_P^2 < 0$, $w(x)$ is unbounded but it admits a minimum of value $A$. Let us investigate the probability density function.
\begin{restatable}[]{prop}{}
 Let $Q \sim \mathcal{N}(\mu_Q, \sigma^2_Q)$ be the behavioral distribution and $P \sim \mathcal{N}(\mu_P, \sigma^2_P)$ be the target distribution, with $\sigma_Q^2\neq\sigma_P^2$. The probability density function of $w(x) = p(x)/q(x)$ is given by:
\begin{equation*}
	f_{w} (y) = \begin{cases}
					\frac{\overline{\sigma}}{ y \sqrt{\pi \log \frac{A}{y}}} \exp \left(-\frac{1}{2} \overline{\mu}^2 \right) \left( \frac{y}{A} \right) ^{ \overline{\sigma}^2} \cosh \left( \overline{\mu}\overline{\sigma} \sqrt{2 \log \frac{A}{y}} \right), & \text{ if } \sigma_Q^2 > \sigma_P^2,\, y \in [0,A],\\
					\frac{\overline{\sigma}}{ y \sqrt{\pi \log \frac{y}{A}}} \exp \left(-\frac{1}{2} \overline{\mu}^2 \right) \left( \frac{A}{y} \right) ^{ \overline{\sigma}^2} \cosh \left( \overline{\mu}\overline{\sigma} \sqrt{2 \log \frac{y}{A}} \right), & \text{ if } \sigma_Q^2 < \sigma_P^2,\,y \in [A,\infty), 
					\end{cases}
\end{equation*}
where $\overline{\mu} = \frac{\sigma_Q}{\sigma_Q^2 - \sigma_P^2} (\mu_P - \mu_Q)$ and $\overline{\sigma}^2 = \frac{\sigma_P^2}{\left| \sigma_Q^2 - \sigma_P^2 \right|}$.
\end{restatable}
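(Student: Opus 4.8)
The plan is to treat $w$ as a differentiable function of the random variable $x\sim Q$ and to apply the standard change-of-variables formula for the density of a transformed random variable. The starting point is the closed form for $w(x)$ already derived above, which I rewrite as $w(x) = A\exp\bigl(-B(x-c)^2\bigr)$, with $B = \tfrac{1}{2}\tfrac{\sigma_Q^2-\sigma_P^2}{\sigma_Q^2\sigma_P^2}$ and $c = \tfrac{\sigma_Q^2\mu_P - \sigma_P^2\mu_Q}{\sigma_Q^2-\sigma_P^2}$. The crucial structural observation is that this map is \emph{two-to-one}: it is symmetric about $x=c$, so each admissible value $y$ has exactly two preimages $x_\pm = c \pm \sqrt{\tfrac{1}{B}\log(A/y)}$ (when $B>0$), obtained by inverting $w$. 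The sign of $B$, i.e. whether $\sigma_Q^2 \gtrless \sigma_P^2$, controls both the support of $w$ and whether the relevant logarithm is $\log(A/y)$ or $\log(y/A)$; this is precisely the origin of the two cases in the statement.

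For the change of variables I would compute $w'(x) = -2B(x-c)\,w(x)$, so that $\lvert w'(x_\pm)\rvert = 2\lvert B\rvert\,\lvert x_\pm - c\rvert\,y$ is the same at both preimages. The density of $Y = w(X)$ is then
\begin{equation*}
	f_w(y) = \frac{q(x_+)}{\lvert w'(x_+)\rvert} + \frac{q(x_-)}{\lvert w'(x_-)\rvert} = \frac{q(x_+) + q(x_-)}{\lvert w'(x_\pm)\rvert}.
\end{equation*}
Writing $d := c - \mu_Q$ and $s := \lvert x_\pm - c\rvert = \sqrt{\tfrac{1}{\lvert B\rvert}\log(A/y)}$, the two Gaussian evaluations $q(x_\pm)$ differ only through the cross term $\exp(\pm ds/\sigma_Q^2)$ after expanding $(d\pm s)^2$; summing them yields a factor $2\cosh(ds/\sigma_Q^2)$. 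This $\cosh$ is exactly what appears in the target expression, and its even symmetry is what lets me collapse the two preimages into a single closed form.

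The remaining work is bookkeeping: substituting the explicit values of $B$, $c$, $A$ and simplifying into the parameters $\overline{\mu} = \tfrac{\sigma_Q}{\sigma_Q^2 - \sigma_P^2}(\mu_P-\mu_Q)$ and $\overline{\sigma}^2 = \tfrac{\sigma_P^2}{\lvert \sigma_Q^2-\sigma_P^2\rvert}$. The identities to verify are $d = \sigma_Q\overline{\mu}$ (so that $\exp(-d^2/2\sigma_Q^2) = \exp(-\overline{\mu}^2/2)$), $\tfrac{s^2}{2\sigma_Q^2} = \overline{\sigma}^2\log(A/y)$ (giving the power $(y/A)^{\overline{\sigma}^2}$), $\tfrac{ds}{\sigma_Q^2} = \overline{\mu}\,\overline{\sigma}\sqrt{2\log(A/y)}$ (the argument of the $\cosh$), and finally $\tfrac{\sigma_P}{\sqrt{\lvert\sigma_Q^2-\sigma_P^2\rvert}} = \overline{\sigma}$ together with $\sqrt{2\pi}\,\sqrt{2} = 2\sqrt{\pi}$ to assemble the prefactor $\tfrac{\overline{\sigma}}{y\sqrt{\pi\log(A/y)}}$. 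The case $\sigma_Q^2 < \sigma_P^2$ is handled identically after replacing $\log(A/y)$ by $\log(y/A)$ throughout, which also flips the power to $(A/y)^{\overline{\sigma}^2}$ and shifts the support from $[0,A]$ to $[A,\infty)$.

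I expect the main obstacle to be purely the algebraic simplification into the $(\overline{\mu},\overline{\sigma})$ notation, in particular keeping track of the signs of $\sigma_Q^2-\sigma_P^2$ so that the two cases emerge with the correct support, the correct orientation of the logarithm, and the correct exponent on $y/A$ versus $A/y$. Everything else is a routine, if careful, application of the transformation theorem; the only genuinely non-obvious move is recognizing that the two-to-one structure produces a $\cosh$, which is what makes the final expression compact.
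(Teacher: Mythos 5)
Your proposal is correct and reaches exactly the stated formula; I verified the identities $d=\sigma_Q\overline{\mu}$, $\tfrac{s^2}{2\sigma_Q^2}=\overline{\sigma}^2\log(A/y)$, $\tfrac{ds}{\sigma_Q^2}=\overline{\mu}\,\overline{\sigma}\sqrt{2\log(A/y)}$, and the prefactor $\sigma_Q|B|s=\tfrac{1}{\overline{\sigma}}\sqrt{\log(A/y)/2}$, all of which check out. The route differs from the paper's in mechanics but not in substance: the paper first computes the c.d.f.\ $F_w(y)=\Pr(\tau(x-m)^2\ge -2\log(y/A))$, splits the event into the two tails $\{x\le m-s\}\cup\{x\ge m+s\}$, expresses it via $\Phi$, and differentiates, whereas you apply the change-of-variables formula for a two-to-one map directly, summing $q(x_\pm)/\lvert w'(x_\pm)\rvert$ over the two preimages. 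These are isomorphic computations — differentiating the paper's $\Phi(\overline{\mu}-\cdot)+1-\Phi(\overline{\mu}+\cdot)$ produces precisely your $q(x_-)+q(x_+)$ divided by the common Jacobian, and the $\cosh$ arises from the same pairing of cross terms in both. What your version buys is a cleaner structural explanation of \emph{why} a $\cosh$ appears (the even symmetry of $w$ about $x=c$ collapses the two branches); what the c.d.f.\ route buys is that it sidesteps any worry about applying the transformation theorem to a non-monotone map — in your argument you should at least note that $w$ is piecewise strictly monotone on $(-\infty,c)$ and $(c,\infty)$ with $w'$ vanishing only at the null set $\{c\}$, so the multi-preimage density formula is legitimate and the resulting $(\log(A/y))^{-1/2}$ singularity at the endpoint $y=A$ is integrable. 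With that one remark added, the proof is complete.
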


\begin{proof}
	We look at $w(x)$ as a function of random variable $x \sim Q$. We introduce the following symbols:
	\begin{equation*}
		m = \frac{\sigma_Q^2\mu_P - \sigma_P^2\mu_Q}{\sigma_Q^2 - \sigma_P^2},\quad
		\tau = \frac{\sigma_Q^2 - \sigma_P^2}{\sigma_Q^2\sigma_P^2}.
	\end{equation*}
	Let us start computing the c.d.f.:
	\begin{align*}
		F_{w}(y) = \Pr \left( w(x) \le y \right) = \Pr \left( A \exp\left( -\frac{1}{2} \tau (x-m)^2 \right) \le y \right) =  \Pr \left( \tau (x-m)^2 \ge -2 \log \frac{y}{A} \right).
	\end{align*}
	We distinguish the two cases according to the sign of $\tau$ and we observe that $x = \mu_Q + \sigma_Q z$ where $z\sim \mathcal{N}(0,1)$:
	\paragraph{$\mathbr{\tau > 0}$:}
	\begin{align*}
		F_{w}(y) &=  \Pr \left(  (x-m)^2 \ge  \frac{2}{\tau} \log \frac{A}{y} \right) \\
		& = \Pr \left ( x \le m - \sqrt{\frac{2}{\tau} \log \frac{A}{y}} \right) + \Pr \left ( x \ge m + \sqrt{\frac{2}{\tau} \log \frac{A}{y}} \right)  \\
		& = \Pr \left ( z \le \frac{m - \mu_Q}{\sigma_Q} - \sqrt{\frac{2}{\tau \sigma_Q^2} \log \frac{A}{y}} \right) + \Pr \left ( z \ge \frac{m - \mu_Q}{\sigma_Q} + \sqrt{\frac{2}{\tau\sigma_Q^2} \log \frac{A}{y}} \right).
	\end{align*}
	We call $\overline{\mu} = \frac{m - \mu_Q}{\sigma_Q} = \frac{\sigma_Q}{\sigma_Q^2 - \sigma_P^2} (\mu_P - \mu_Q)$ and $\overline{\sigma}^2 = \frac{1}{\tau \sigma_Q^2} = \frac{\sigma_P^2}{\sigma_Q^2 - \sigma_P^2}$, thus we have:
	\begin{align*}
		F_{w}(y) &= \Pr \left ( z \le \overline{\mu} - \sqrt{2\overline{\sigma}^2 \log \frac{A}{y}} \right) + \Pr \left ( z \ge \overline{\mu} + \sqrt{2\overline{\sigma}^2 \log \frac{A}{y}} \right) \\
		& = \Phi\left(  \overline{\mu} - \sqrt{2\overline{\sigma}^2 \log \frac{A}{y}} \right) + 1 -  \Phi\left(  \overline{\mu} + \sqrt{2\overline{\sigma}^2 \log \frac{A}{y}} \right),
	\end{align*}
	where $\Phi$ is the c.d.f. of a normal standard distribution. By taking the derivative \wrt $y$ we get the p.d.f.:
	\begin{align*}
		f_{w}(y) &= \frac{\partial F_{w}(y)}{\partial y} = - \sqrt{2\overline{\sigma}^2} \frac{1}{2\sqrt{\log \frac{A}{y}}} \frac{y}{A} \frac{-A}{y^2} \left( \phi\left(  \overline{\mu} - \sqrt{2\overline{\sigma}^2 \log \frac{A}{y}} \right) +  \phi\left(  \overline{\mu} + \sqrt{2\overline{\sigma}^2 \log \frac{A}{y}} \right) \right)  \\
		& = \frac{\sqrt{2} \overline{\sigma}} {2y\sqrt{\log \frac{A}{y}}} \left( \phi\left(  \overline{\mu} - \sqrt{2\overline{\sigma}^2 \log \frac{A}{y}} \right) +  \phi\left(  \overline{\mu} + \sqrt{2\overline{\sigma}^2 \log \frac{A}{y}} \right) \right) \\
		& = \frac{\sqrt{2} \overline{\sigma}}{2y\sqrt{\log \frac{A}{y}} } \frac{1}{\sqrt{2\pi}} \left( \exp\left( -\frac{1}{2} \left( \overline{\mu} - \sqrt{2\overline{\sigma}^2 \log \frac{A}{y}} \right)^2 \right)  + \exp\left( -\frac{1}{2} \left( \overline{\mu} + \sqrt{2\overline{\sigma}^2 \log \frac{A}{y}} \right)^2 \right) \right) \\
		& = \frac{\overline{\sigma}}{y\sqrt{\pi \log \frac{A}{y}} } \exp \left(  -\frac{1}{2} \overline{\mu}^2 \right) \exp \left(  - \overline{\sigma}^2 \log \frac{A}{y} \right) \frac{ \exp\left ( \overline{\mu} \overline{\sigma} \sqrt{2 \log \frac{A}{y}} \right) + \exp\left (- \overline{\mu} \overline{\sigma} \sqrt{2 \log \frac{A}{y}} \right)  }{2} \\
		& = \frac{\overline{\sigma}}{y\sqrt{\pi \log \frac{A}{y}} } \exp \left(  -\frac{1}{2} \overline{\mu}^2 \right) \left( \frac{y}{A} \right)^{\overline{\sigma}^2} \cosh \left( \overline{\mu} \overline{\sigma} \sqrt{2 \log \frac{A}{y}} \right),
	\end{align*}
	where $\phi$ is the p.d.f. of a normal standard distribution.
	\paragraph{$\mathbr{\tau < 0}$:} The derivation takes similar steps, all it takes is to call
	$\overline{\sigma}^2 = -\frac{1}{\tau\sigma_Q^2} = \frac{\sigma_P^2}{\sigma_P^2 - \sigma_Q^2}$, then the c.d.f. becomes:
	\begin{align*}
		F_{w}(y) &= \Phi\left(  \overline{\mu} + \sqrt{2\overline{\sigma}^2 \log \frac{y}{A}} \right)  -  \Phi\left(  \overline{\mu} - \sqrt{2\overline{\sigma}^2 \log \frac{y}{A}} \right),
	\end{align*}
	and the p.d.f. is:
	\begin{equation*}
		f_w(x) = \frac{\overline{\sigma}}{y\sqrt{\pi \log \frac{y}{A}} } \exp \left(  -\frac{1}{2} \overline{\mu}^2 \right) \left( \frac{A}{y} \right)^{\overline{\sigma}^2} \cosh \left( \overline{\mu} \overline{\sigma} \sqrt{2 \log \frac{y}{A}} \right).
	\end{equation*}
	To unify the two cases we set $\overline{\sigma}^2 = \frac{\sigma_P^2}{\left| \sigma_Q^2 - \sigma_P^2 \right|}$.
\end{proof} 

It is interesting to investigate the properties of the tail of the distribution when $w$ is unbounded. Indeed, we discover that the distribution displays a fat-tail behavior.
\begin{restatable}[]{prop}{fatTail}
	If $\sigma_P^2 > \sigma_Q^2$ then there exists $c >0$ and $y_0 > 0$ such that for any $y \ge y_0$, the p.d.f. $f_{w}$ can be lower bounded as $f_w(y) \ge c y ^{-1- \overline{\sigma}^2 }{(\log y)^{-\frac{1}{2}}}$.
\end{restatable}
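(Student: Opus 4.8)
The plan is to argue directly from the closed-form expression for $f_w$ established in the preceding proposition, specialized to the branch $\sigma_P^2 > \sigma_Q^2$ (equivalently $\tau < 0$), where the weight is unbounded and the density is supported on $[A,\infty)$. First I would isolate the dominant polynomial--logarithmic factor by rewriting
\[
f_w(y) = C\, y^{-1-\overline{\sigma}^2}\,\Bigl(\log\tfrac{y}{A}\Bigr)^{-1/2}\cosh\!\left(\overline{\mu}\,\overline{\sigma}\sqrt{2\log\tfrac{y}{A}}\right), \quad C = \frac{\overline{\sigma}}{\sqrt{\pi}}\,A^{\overline{\sigma}^2}\exp\!\left(-\tfrac{1}{2}\overline{\mu}^2\right)>0,
\]
which follows from the stated formula after absorbing $\tfrac{1}{y}\bigl(\tfrac{A}{y}\bigr)^{\overline{\sigma}^2} = A^{\overline{\sigma}^2}y^{-1-\overline{\sigma}^2}$ and pulling the $y$-independent factors into the positive constant $C$. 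This exposes the target exponent $-1-\overline{\sigma}^2$ explicitly.

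Next I would discard the two remaining factors in a direction that only weakens the inequality. Since $\cosh(\cdot)\ge 1$ everywhere, the hyperbolic cosine can simply be dropped, yielding $f_w(y)\ge C\,y^{-1-\overline{\sigma}^2}(\log(y/A))^{-1/2}$ for all $y>A$. It then remains to compare $\log(y/A)$ with $\log y$: writing $\log(y/A)=\log y-\log A$, I would fix a threshold $y_0>\max\{A,1\}$ large enough that $\log y\ge|\log A|$ holds for every $y\ge y_0$, whence $\log(y/A)\le 2\log y$ and therefore $(\log(y/A))^{-1/2}\ge(2\log y)^{-1/2}$.

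Combining the two estimates gives $f_w(y)\ge c\,y^{-1-\overline{\sigma}^2}(\log y)^{-1/2}$ for all $y\ge y_0$, with $c=C/\sqrt{2}>0$, which is precisely the claimed bound. The argument is essentially bookkeeping, and the only delicate point is the choice of $y_0$, which must simultaneously keep $y$ inside the support $(A,\infty)$, keep $\log y$ positive so the logarithmic factors are well defined, and secure the comparison $\log(y/A)\le 2\log y$. No genuine obstacle arises, since the positivity of $\overline{\sigma}^2$ and $C$ together with the monotonicity of $\log$ render every step elementary; the real content of the statement is conceptual rather than technical, namely that the exponential decay of the Gaussian density is overwhelmed by the exponential growth of the importance weight $w(x)$, so that only a polynomial (fat) tail survives.
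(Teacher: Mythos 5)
Your proof is correct and follows essentially the same route as the paper: start from the closed-form density on the branch $\sigma_P^2>\sigma_Q^2$, isolate the $y^{-1-\overline{\sigma}^2}(\log)^{-1/2}$ factor, bound the remaining hyperbolic-cosine factor below by a constant, and absorb the $A$-dependence into $c$. If anything, your version is slightly tidier: using $\cosh(\cdot)\ge 1$ sidesteps the sign of $\overline{\mu}\,\overline{\sigma}$ (the paper instead keeps a single exponential $\exp(\sqrt{\log z})^{\sqrt{2}\overline{\mu}\overline{\sigma}}$ and asserts it is $\ge 1$, which is only immediate when $\overline{\mu}\,\overline{\sigma}\ge 0$), and you make the comparison $\log(y/A)\le 2\log y$ explicit where the paper silently folds it into the constant.
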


\begin{proof}
	Let us call $z = y/A$ and let $a>0$ be a constant, then it holds that for sufficiently large $y$ we have:
	\begin{equation}
		f_w(y) \ge a z^{-1-\overline{\sigma}^2} (\log z) ^ {-1/2} \exp \left(\sqrt{\log z} \right)^{\sqrt{2} \overline{\mu} \overline{\sigma}}.
	\end{equation}
	To get the result, we observe that for $z>1$ we have $ \exp \left(\sqrt{\log z} \right) \ge 1$. Now, by replacing $z$ with $y/A$ we just need to change the constant $a$ into $c>0$.
\end{proof}

As a consequence, the $\alpha$-th moment of $w(x)$ does not exist for $\alpha -1-\overline{\sigma}^2 \ge  -1 \; \implies \;\alpha \ge \overline{\sigma}^2 = \frac{\sigma_P^2}{\sigma_P^2 - \sigma_Q^2}$, this prevents from using Bernstein-like inequalities for bounding in probability the importance weights. The non-existence of finite moments is confirmed by the $\alpha$-\Renyi divergence. Indeed, the $\alpha$-\Renyi divergence is defined when $\sigma_{\alpha}^2 = \alpha \sigma_Q^2 + (1-\alpha) \sigma_P^2 > 0$, \ie $\alpha < \frac{\sigma_P^2}{\sigma_P^2 - \sigma_Q^2}$.

\subsection{Equal variances} If $\sigma_Q^2 = \sigma_P^2 = \sigma^2$, the importance weights have the following expression:
\begin{equation}
	w(x) = \exp \left( \frac{\mu_P-\mu_Q}{\sigma^2} \left(x - \frac{\mu_P+\mu_Q}{2} \right) \right),
\end{equation}
for $x \sim Q$. The weight $w(x)$ is clearly unbounded and has $0$ as infimum value. Let us investigate its probability density function.

\begin{restatable}[]{prop}{}
 Let $Q \sim \mathcal{N}(\mu_Q, \sigma^2)$ be the behavioral distribution and $P \sim \mathcal{N}(\mu_P, \sigma^2)$ be the target distribution. The probability density function of $w(x) = q(x)/p(x)$ is given by:
\begin{equation}
	f_w(y) = \frac{\left| \widetilde{\sigma} \right|}{\sqrt{2\pi} y^{\frac{3}{2}}} \exp \left( -\frac{1}{2} \left( \widetilde{\mu}^2 +  \widetilde{\sigma}^2 \left( \log y \right)^2 \right) \right),
\end{equation}
where $\widetilde{\mu} = \frac{\mu_P -\mu_Q}{2\sigma}$ and $\widetilde{\sigma} = \frac{\sigma}{\mu_P - \mu_Q}$.
\end{restatable}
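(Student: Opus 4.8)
The plan is to obtain $f_w$ by a direct monotone change of variables, in the same spirit as the unequal-variance proposition but with a crucial simplification: since $\sigma_Q^2=\sigma_P^2=\sigma^2$, the weight $w(x)=\exp\!\left(\frac{\mu_P-\mu_Q}{\sigma^2}\left(x-\frac{\mu_P+\mu_Q}{2}\right)\right)$ is \emph{log-linear} in $x$, hence strictly monotone, rather than depending on $(x-m)^2$. Consequently the map $x\mapsto w(x)$ is a bijection from $\mathbb{R}$ onto $(0,\infty)$ with a single inverse branch, and no symmetric folding of the two tails occurs — which is exactly why the resulting density carries no $\cosh$ factor, in contrast with the unequal-variance case. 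I would therefore either differentiate the c.d.f. $F_w(y)=\Pr(w(x)\le y)$ as before, or, more directly, apply the change-of-variables formula $f_w(y)=q\big(x(y)\big)\,\big|\mathrm{d}x/\mathrm{d}y\big|$ for $x\sim\mathcal{N}(\mu_Q,\sigma^2)$.

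Concretely, I would first invert $y=w(x)$: taking logarithms gives $x(y)=\frac{\mu_P+\mu_Q}{2}+\frac{\sigma^2}{\mu_P-\mu_Q}\log y$, valid for all $y>0$. Differentiating, $\frac{\mathrm{d}w}{\mathrm{d}x}=\frac{\mu_P-\mu_Q}{\sigma^2}\,w$, so the Jacobian of the inverse is $\big|\mathrm{d}x/\mathrm{d}y\big|=\frac{\sigma^2}{|\mu_P-\mu_Q|}\,\frac{1}{y}=\frac{|\widetilde{\sigma}|\,\sigma}{y}$, where I have used $|\widetilde{\sigma}|=\sigma/|\mu_P-\mu_Q|$. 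Substituting $x(y)$ and this Jacobian into $f_w(y)=\frac{1}{\sqrt{2\pi}\,\sigma}\exp\!\big(-\tfrac{(x(y)-\mu_Q)^2}{2\sigma^2}\big)\cdot\big|\mathrm{d}x/\mathrm{d}y\big|$ reduces the problem to simplifying a single Gaussian exponent evaluated along $x(y)$.

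The remaining work is the algebraic identification of the exponent, and this is the only delicate step. Writing $x(y)-\mu_Q=\frac{\mu_P-\mu_Q}{2}+\frac{\sigma^2}{\mu_P-\mu_Q}\log y$ and squaring, the quantity $\frac{(x(y)-\mu_Q)^2}{2\sigma^2}$ becomes a quadratic in $\log y$: the constant term collapses to $\tfrac12\widetilde{\mu}^2$ with $\widetilde{\mu}=\frac{\mu_P-\mu_Q}{2\sigma}$, the quadratic term to $\tfrac12\widetilde{\sigma}^2(\log y)^2$ with $\widetilde{\sigma}=\frac{\sigma}{\mu_P-\mu_Q}$, and — the point to watch — the linear cross term simplifies to exactly $\tfrac12\log y$, independently of the parameters. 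Exponentiating, this cross term contributes a factor $y^{-1/2}$, which multiplied by the Jacobian's $y^{-1}$ yields the announced $y^{-3/2}$; meanwhile the prefactor $\frac{1}{\sqrt{2\pi}\,\sigma}\cdot|\widetilde{\sigma}|\,\sigma$ cancels the $\sigma$'s and leaves $\frac{|\widetilde{\sigma}|}{\sqrt{2\pi}}$. Collecting the surviving $\tfrac12\widetilde{\mu}^2$ and $\tfrac12\widetilde{\sigma}^2(\log y)^2$ inside the exponential then gives the claimed form. The main obstacle is thus purely bookkeeping: correctly tracking the linear cross term — which is what produces the $y^{-3/2}$ rather than a $y^{-1}$ — and reconciling the Jacobian normalization with the Gaussian constant $\frac{1}{\sqrt{2\pi}\,\sigma}$.
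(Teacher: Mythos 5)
Your proposal is correct and is essentially the paper's argument: the paper computes $F_w(y)=\Phi\left(\widetilde{\mu}+\widetilde{\sigma}\log y\right)$ and differentiates, which for a monotone map is exactly your Jacobian change-of-variables formula, and both hinge on the same algebraic fact that the cross term $2\widetilde{\mu}\widetilde{\sigma}\log y=\log y$ turns the exponent into the $y^{-1/2}$ factor yielding $y^{-3/2}$. Your observation that monotonicity of the log-linear weight removes the two-branch folding (and hence the $\cosh$) matches the paper's handling of the sign of $\mu_P-\mu_Q$ via $|\widetilde{\sigma}|$.
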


\begin{proof}
	We start computing the c.d.f.:
	\begin{align*}
		F_w(y) = \Pr\left( \exp \left\{ \frac{\mu_P-\mu_Q}{\sigma^2} \left(x - \frac{\mu_P+\mu_Q}{2} \right) \right\} \le y \right) = \Pr\left(  \frac{\mu_P-\mu_Q}{\sigma^2} \left(x - \frac{\mu_P+\mu_Q}{2} \right) \le \log y \right).
	\end{align*}
	First, we consider the case $\mu_P-\mu_Q > 0$ and observe that  $x = \mu_Q + \sigma z$, where $z\sim \mathcal{N}(0,1)$:
	\begin{align*}
		F_w(y) = \Pr\left( x  \le \frac{\mu_P+\mu_Q}{2} + \frac{\sigma^2}{\mu_P-\mu_Q}  \log y \right) = \Pr\left( z  \le \frac{\mu_P-\mu_Q}{2 \sigma}  + \frac{\sigma}{\mu_P-\mu_Q} \log y \right).
	\end{align*}
	We call  $\widetilde{\mu} = \frac{\mu_P -\mu_Q}{2\sigma}$ and $\widetilde{\sigma} = \frac{\sigma}{\mu_P - \mu_Q}$ and we have:
	\begin{align*}
		F_w(y) = \Pr\left( z  \le \widetilde{\mu} + \widetilde{\sigma}  \log y \right) = \Phi \left( \widetilde{\mu} + \widetilde{\sigma}  \log y \right).
	\end{align*}
	We take the derivative in order to get the density function:
	\begin{align*}
		f_{w}(y) &= \frac{\partial F_{w}(y)}{\partial y} = \frac{\widetilde{\sigma}}{y} \frac{1}{\sqrt{2\pi}} \exp \left( -\frac{1}{2} \left(\widetilde{\mu} + \widetilde{\sigma} \log y \right)^2 \right) = \frac{\widetilde{\sigma} }{\sqrt{2\pi} y^{\widetilde{\mu}\widetilde{\sigma}+1}} \exp \left( -\frac{1}{2} \left( \widetilde{\mu}^2 +  \widetilde{\sigma}^2 \left( \log y \right)^2 \right) \right).
	\end{align*}
	For the case $\mu_P-\mu_Q < 0$ the derivation is symmetric and the p.d.f. differs only by a minus sign. We account for this fact by considering $|\widetilde{\sigma}|$ in the final formula.
\end{proof}
In the case of equal variances, the tail behavior is different.
\begin{restatable}[]{prop}{thinTail}
	If $\sigma_P^2 = \sigma_Q^2$ then for any $\alpha >0$ there exist $c >0$ and $y_0 > 0$ such that for any $y \ge y_0$, the p.d.f. can be upper bounded as $f_w(y) \le c y^{-\alpha}$.
\end{restatable}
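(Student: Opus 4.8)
The plan is to start from the closed-form density derived in the equal-variance proposition just above, namely
\[
f_w(y) = \frac{\left| \widetilde{\sigma} \right|}{\sqrt{2\pi}\, y^{3/2}} \exp \left( -\tfrac{1}{2} \bigl( \widetilde{\mu}^2 + \widetilde{\sigma}^2 (\log y)^2 \bigr) \right),
\]
and to notice that the only $y$-dependent factors are the polynomial $y^{-3/2}$ and the factor $\exp\bigl(-\tfrac{1}{2}\widetilde{\sigma}^2(\log y)^2\bigr)$. Since $\widetilde{\sigma}^2 = \sigma^2/(\mu_P-\mu_Q)^2 > 0$ whenever $\mu_P \neq \mu_Q$ (and the degenerate case $\mu_P = \mu_Q$ gives $w \equiv 1$, for which the claim is trivial), the latter factor decays faster than any inverse power of $y$, and this is exactly what will dominate the bound.

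Concretely, I would reduce the claim $f_w(y) \le c\, y^{-\alpha}$ to showing that the product $f_w(y)\, y^{\alpha}$ stays bounded for large $y$. Abbreviating the positive constant $C_0 := \tfrac{|\widetilde{\sigma}|}{\sqrt{2\pi}} \exp(-\tfrac{1}{2}\widetilde{\mu}^2)$ and substituting $t = \log y$, one has
\[
\log\bigl( f_w(y)\, y^{\alpha} \bigr) = \log C_0 + \Bigl( \alpha - \tfrac{3}{2} \Bigr) t - \tfrac{1}{2}\widetilde{\sigma}^2 t^2 .
\]
This is a quadratic in $t$ with strictly negative leading coefficient $-\tfrac{1}{2}\widetilde{\sigma}^2$, so it tends to $-\infty$ as $t \to \infty$. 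Hence $f_w(y)\, y^{\alpha} \to 0$ as $y \to \infty$; in particular it is eventually below $1$, so there exists a threshold $y_0$ with $f_w(y) \le y^{-\alpha}$ for all $y \ge y_0$. The proposition then holds with $c = 1$ (or, if a bound valid for all $y$ is wanted, with $c$ equal to the finite supremum of $f_w(y)\, y^{\alpha}$ over $(0,\infty)$, which exists since the function is continuous and vanishes at both $0$ and $\infty$).

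There is essentially no hard step here: the entire content is that the Gaussian-in-$\log y$ term suppresses any polynomial growth, which is immediate once the explicit density is in hand. The only point requiring a word of care is the trivial case $\mu_P = \mu_Q$, where $\widetilde{\sigma}$ is undefined and the derivation of $f_w$ does not apply; there $w \equiv 1$ is bounded and the statement is vacuous. Contrasting with the unequal-variance \emph{fat-tail} proposition, the qualitative difference is precisely the appearance of $(\log y)^2$ rather than $\log y$ in the exponent, which is what turns a genuine polynomial tail into a sub-polynomial one and guarantees that all moments of $w$ are finite.
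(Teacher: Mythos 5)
Your proof is correct and follows essentially the same route as the paper's: both start from the explicit density and observe that the $\exp\bigl(-\tfrac{1}{2}\widetilde{\sigma}^2(\log y)^2\bigr)$ factor dominates any polynomial in $y$, the only difference being that the paper solves the inequality $f_w(y)\le c\,y^{-\alpha}$ explicitly for a threshold $y_0$ while you argue via the limit of the quadratic in $t=\log y$. Your remark on the degenerate case $\mu_P=\mu_Q$ is a reasonable extra precaution that the paper omits.
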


\begin{proof}
	Condensing all the constants in $c$, the p.d.f. can be written as:
	\begin{equation}
		f_{w}(y) = c y^{-3/2} \exp \left(\left( \log y \right)^2 \right) ^{-\frac{\widetilde{\sigma}^2}{2}}.
	\end{equation}
	For any $\alpha >0$, let us solve the following inequality:
	\begin{equation}
		y^{3/2} \exp \left(\left( \log y \right)^2 \right) ^{\frac{\widetilde{\sigma}^2}{2}} \ge y^{\alpha} \quad \implies \quad y \ge \exp \left( \frac{2}{\widetilde{\sigma}^2} \left(\alpha - \frac{3}{2} \right) \right).
	\end{equation}
	Thus, for $ y \ge \exp \left( \frac{2}{\widetilde{\sigma}^2} \left(\alpha - \frac{3}{2} \right) \right)$ we have that $f_w(y) \le c y^{-\alpha}$.
\end{proof}

This is sufficient to ensure the existence of the moments of any order, indeed the corresponding \Renyi divergence is: $\frac{\alpha(\mu_P-\mu_Q)^2}{2\sigma^2}$. By the way, the distribution of $w(x)$ remains subexponential, as $\exp \left(\left( \log y \right)^2 \right) ^{-\frac{\widetilde{\sigma}^2}{2}} \ge e^{-\eta y}$ for sufficiently large $y$.

Figure~\ref{fig:distr} reports the p.d.f. of the importance weights for different values of mean
and variance of the target distribution.

\begin{figure}
\centering
\begin{subfigure}{.5\textwidth}
  \centering
  \includegraphics[scale=1]{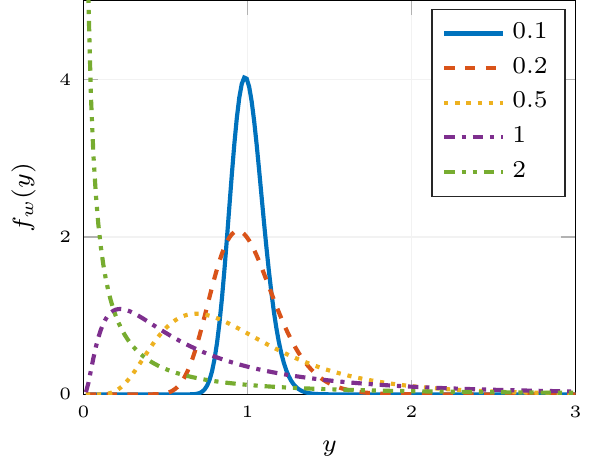}
  \label{fig:sub1}
  \caption{equal variance}
\end{subfigure}%
\begin{subfigure}{.5\textwidth}
  \centering
  \includegraphics[scale=1]{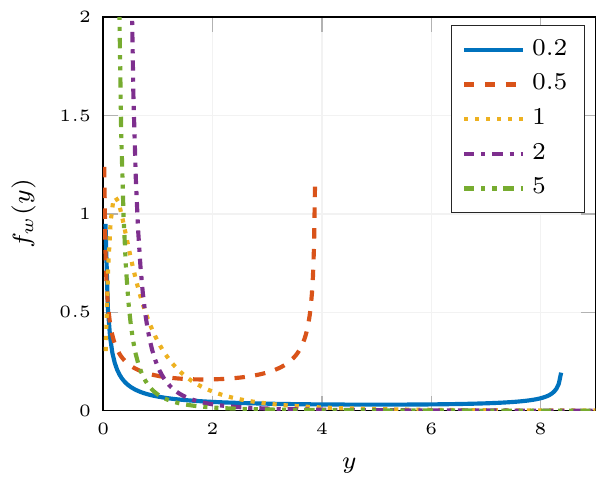}
  \label{fig:sub2}
   \caption{equal mean}
\end{subfigure}
\caption{Probability density function of the importance weights when the behavioral distribution is $\mathcal{N}(0,1)$ and the mean is changed keeping the variance equal to 1 (a) or the variance is changed keeping the target mean equal to 1 (b).}
\label{fig:distr}
\end{figure}

\section{Analysis of the SN Estimator}
\label{apx:SN}
In this Appendix, we provide some results regarding bias and variance of the self-normalized
importance sampling estimator. Let us start with the following result, derived from~\cite{cortes2010learning}, that bounds the expected squared difference
between non-self-normalized weight $w(x)$ and self-normalized weight $\widetilde{w}(x)$. 
\begin{restatable}[]{lemma}{}
\label{thr:lemmaBias}
	Let $P$ and $Q$ be two probability measures on the measurable space $\left(\mathcal{X}, \mathcal{F} \right)$ such that $P \ll Q$ and $d_2(P \| Q) < +\infty$. Let $x_1,x_2,\dots,x_N$ i.i.d. random variables sampled from $Q$. Then, for $N>0$ and for any $i=1,2,\dots,N$ it holds that:
	\begin{equation}
		\ev_{\mathbr{x} \sim Q} \left[ \left( \widetilde{w}_{P/Q}(x_i) - \frac{w_{P/Q}(x_i)}{N} \right)^2 \right] \le \frac{d_2( P \| Q) - 1}{N}.
	\end{equation}
\end{restatable}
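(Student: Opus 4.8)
The plan is to reduce the claim to the variance of a sum of i.i.d.\ importance weights by first establishing a pointwise (almost sure) upper bound on the squared difference. Writing $w_i = w_{P/Q}(x_i)$ and $S = \sum_{j=1}^N w_j$, so that $\widetilde{w}_{P/Q}(x_i) = w_i/S$, I would first rewrite the difference as
\[
\widetilde{w}_{P/Q}(x_i) - \frac{w_i}{N} = w_i \left( \frac{1}{S} - \frac{1}{N} \right) = \frac{w_i (N - S)}{N S}.
\]

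The crucial step is to eliminate the awkward $1/S^2$ factor that would otherwise make the expectation intractable. Since importance weights are nonnegative, $S = w_i + \sum_{j \neq i} w_j \ge w_i$, so whenever $w_i > 0$ we have $S^2 \ge w_i^2$ and hence
\[
\left( \frac{w_i (N - S)}{N S} \right)^2 \le \frac{w_i^2 (N - S)^2}{N^2 w_i^2} = \frac{(N - S)^2}{N^2};
\]
the case $w_i = 0$ makes the left-hand side vanish, so the bound $\left( \widetilde{w}_{P/Q}(x_i) - w_i/N \right)^2 \le (N-S)^2/N^2$ holds almost surely. This trades the problematic self-normalization term for a quantity that depends only on the sum $S$.

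It then remains to take expectations under $\mathbr{x} \sim Q$. Recalling that $\ev_{x\sim Q}[w_{P/Q}(x)] = 1$ and $\ev_{x\sim Q}[w_{P/Q}(x)^2] = d_2(P\|Q)$, so that $\Var_{x\sim Q}[w_{P/Q}(x)] = d_2(P\|Q) - 1$, the i.i.d.\ property gives $\ev_{\mathbr{x}\sim Q}[S] = N$ and $\Var_{\mathbr{x}\sim Q}[S] = N(d_2(P\|Q) - 1)$. Since $N - S = \ev_{\mathbr{x}\sim Q}[S] - S$, I would conclude
\[
\ev_{\mathbr{x}\sim Q}\left[ (N - S)^2 \right] = \Var_{\mathbr{x}\sim Q}[S] = N(d_2(P\|Q) - 1),
\]
and dividing by $N^2$ yields the stated bound $\frac{d_2(P\|Q)-1}{N}$.

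I expect the only genuine obstacle to be finding the pointwise inequality: the $1/S^2$ term couples all $N$ samples in the denominator, so a direct expectation is hopeless, and the whole argument hinges on noticing that $S \ge w_i$ lets the numerator $w_i^2$ cancel it. Everything afterward is a routine variance computation using only the first two moments of the weights.
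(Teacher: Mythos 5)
Your proof is correct and follows essentially the same route as the paper: both arguments factor the difference as $\frac{w_i}{S}\bigl(1 - \frac{S}{N}\bigr)$, discard the self-normalized factor using $w_i/S \le 1$ (your observation $S \ge w_i$), and reduce the remainder to the variance of the sample mean of the weights, which equals $(d_2(P\|Q)-1)/N$. No gaps.
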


\begin{proof}
	The result derives from simple algebraic manipulations and from the fact that $\Var_{x \sim Q} \left[ w_{P/Q}(x) \right] = d_2(P\| Q) -1$.
	\begin{align*}
		\ev_{\mathbr{x} \sim Q} & \left[ \left( \widetilde{w}_{P/Q}(x_i) - \frac{w_{P/Q}(x_i)}{N} \right)^2 \right]  = \ev_{\mathbr{x} \sim Q} \left[ \left( \frac{w_{P/Q}(x_i)}{\sum_{j=1}^{N} w_{P/Q}(x_j)} \right)^2 \left( 1 - \frac{\sum_{j=1}^N w_{P/Q}(x_j)}{N} \right)^2 \right]   \\
		& \le \ev_{\mathbr{x} \sim Q} \left[\left( 1 - \frac{\sum_{j=1}^N w_{P/Q}(x_j)}{N} \right)^2 \right] = \Var_{\mathbr{x} \sim Q} \left[\frac{\sum_{j=1}^N w_{P/Q}(x_j)}{N} \right] \\
		& = \frac{1}{N} \Var_{x_1 \sim Q} \left[ w_{P/Q}(x_1) \right] = \frac{d_2( P \| Q) - 1}{N}.
	\end{align*}
\end{proof}

A similar argument can be used to derive a bound on the bias of the SN estimator. 
\begin{restatable}[]{prop}{}
	Let $P$ and $Q$ be two probability measures on the measurable space $\left(\mathcal{X}, \mathcal{F} \right)$ such that $P \ll Q$ and $d_2(P \| Q) < +\infty$. Let $x_1,x_2,\dots,x_N$ i.i.d. random variables sampled from $Q$ and $f: \mathcal{X} \rightarrow \mathbb{R}$ be a bounded function ($\| f \|_{\infty}<\infty$). Then, the bias of the SN estimator can be bounded as:
	\begin{equation}
	\label{eq:boundBiasSN}
		\left| \ev_{\mathbr{x} \sim Q} \left[ \widetilde{\mu}_{P/Q} - \ev_{x \sim P} \left[ f(x) \right] \right] \right| \le \|f\|_{\infty} \min \left\{2,  \sqrt{\frac{d_2( P \| Q) - 1}{N}} \right\}.
	\end{equation}
\end{restatable}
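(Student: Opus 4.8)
The plan is to establish the two branches of the minimum separately. The branch $2\|f\|_{\infty}$ is immediate from boundedness: since the self-normalized weights are nonnegative and sum to one, $\widetilde{\mu}_{P/Q}$ is a convex combination of the values $f(x_i)$, so $|\widetilde{\mu}_{P/Q}| \le \|f\|_{\infty}$ (as observed in Section~\ref{sec:is}), and likewise $|\ev_{x\sim P}[f(x)]| \le \|f\|_{\infty}$; the triangle inequality then bounds the bias by $2\|f\|_{\infty}$. The substantive branch is $\|f\|_{\infty}\sqrt{(d_2(P\|Q)-1)/N}$, and the key idea is to compare $\widetilde{\mu}_{P/Q}$ not directly with the target but with the \emph{unbiased} IS estimator $\widehat{\mu}_{P/Q}$. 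Since $\ev_{\mathbr{x}\sim Q}[\widehat{\mu}_{P/Q}] = \ev_{x\sim P}[f(x)]$, I would rewrite the bias as $\ev_{\mathbr{x}\sim Q}[\widetilde{\mu}_{P/Q} - \widehat{\mu}_{P/Q}]$, reducing the problem to controlling the gap between the two estimators in expectation.

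Writing $S = \sum_{j=1}^N w_{P/Q}(x_j)$, the decisive algebraic step is the factorization
\[
\widetilde{\mu}_{P/Q} - \widehat{\mu}_{P/Q} = \sum_{i=1}^N f(x_i)\left(\widetilde{w}_{P/Q}(x_i) - \frac{w_{P/Q}(x_i)}{N}\right) = \left(1 - \frac{S}{N}\right)\widetilde{\mu}_{P/Q},
\]
which holds because $\widetilde{w}_{P/Q}(x_i) - w_{P/Q}(x_i)/N = \widetilde{w}_{P/Q}(x_i)\left(1 - S/N\right)$ and $\sum_i \widetilde{w}_{P/Q}(x_i) f(x_i) = \widetilde{\mu}_{P/Q}$. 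This is precisely the argument analogous to the preceding Lemma, and it is the step that deserves the most care. The naive alternative of bounding $\sum_i |\widetilde{w}_{P/Q}(x_i) - w_{P/Q}(x_i)/N|$ term by term, controlling each squared deviation by $(d_2(P\|Q)-1)/N$ via the Lemma, would accumulate $N$ contributions and yield the far weaker bound $\|f\|_{\infty}\sqrt{N(d_2(P\|Q)-1)}$. Factoring out the common scalar $(1-S/N)$ is exactly what avoids this spurious $\sqrt{N}$ blow-up, collapsing everything onto a single normalization fluctuation.

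With the factorization in hand, I would use $|\widetilde{\mu}_{P/Q}| \le \|f\|_{\infty}$ to get $|\widetilde{\mu}_{P/Q} - \widehat{\mu}_{P/Q}| \le \|f\|_{\infty}\,|1 - S/N|$, pass the absolute value inside the expectation, and apply Jensen's inequality to obtain $\ev_{\mathbr{x}\sim Q}[|1 - S/N|] \le \sqrt{\ev_{\mathbr{x}\sim Q}[(1 - S/N)^2]}$. Finally, since $\ev_{x\sim Q}[w_{P/Q}(x)] = 1$, the quantity $S/N$ is the empirical mean of i.i.d.\ weights with unit mean, so $\ev_{\mathbr{x}\sim Q}[(1 - S/N)^2] = \Var_{\mathbr{x}\sim Q}[S/N] = \tfrac{1}{N}\Var_{x\sim Q}[w_{P/Q}(x)] = (d_2(P\|Q)-1)/N$, invoking the identity $\Var_{x\sim Q}[w_{P/Q}(x)] = d_2(P\|Q)-1$ from Section~\ref{sec:is}. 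Combining this with the trivial branch yields the stated minimum.
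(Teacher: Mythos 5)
Your proposal is correct and follows essentially the same route as the paper's proof: reduce the bias to $\ev_{\mathbr{x}\sim Q}[\widetilde{\mu}_{P/Q}-\widehat{\mu}_{P/Q}]$ via unbiasedness of the IS estimator, factor the difference as $(1-S/N)\,\widetilde{\mu}_{P/Q}$, and control $\ev[(1-S/N)^2]$ by the variance of the importance weights. The only cosmetic difference is that you bound $|\widetilde{\mu}_{P/Q}|\le\|f\|_{\infty}$ pointwise and then apply Jensen to $\ev[|1-S/N|]$, whereas the paper applies Cauchy--Schwarz directly to the product; the two yield the identical bound.
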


\begin{proof}
	Since it holds that $|\widetilde{\mu}_{P/Q}| \le \|f\|_{\infty}$ the bias cannot be larger than $2 \|f\|_{\infty}$. We now derive a bound for the bias that vanishes as $N \rightarrow \infty$. We exploit the fact that the IS estimator is unbiased, \ie $\ev_{\mathbr{x} \sim Q} \left[ \widehat{\mu}_{P/Q} \right] = \ev_{x \sim P}  \left[ f(x) \right] $.
	\begin{align}
		\bigg| \ev_{\mathbr{x} \sim Q} & \left[  \widetilde{\mu}_{P/Q} - \ev_{x \sim P} \left[ f(x) \right] \right] \bigg|  = \left| \ev_{\mathbr{x} \sim Q} \left[ \widetilde{\mu}_{P/Q} - \ev_{\mathbr{x} \sim Q} \left[ \widehat{\mu}_{P/Q} \right] \right] \right| = \left| \ev_{\mathbr{x} \sim Q} \left[ \widetilde{\mu}_{P/Q} - \widehat{\mu}_{P/Q}  \right] \right|  \notag \\
		& \le  \ev_{\mathbr{x} \sim Q} \left[ \left| \widetilde{\mu}_{P/Q} - \widehat{\mu}_{P/Q}  \right| \right] = \notag \\
		& = \ev_{\mathbr{x} \sim Q} \left[ \left| \frac{\sum_{i=1}^N w_{P/Q}(x_i) f(x_i)}{\sum_{i=1}^N w_{P/Q}(x_i)}  - \frac{\sum_{i=1}^N w_{P/Q}(x_i) f(x_i)}{N}   \right| \right]  \notag \\
		& = \ev_{\mathbr{x} \sim Q} \left[ \left| \frac{\sum_{i=1}^N w_{P/Q}(x_i) f(x_i)}{\sum_{i=1}^N w_{P/Q}(x_i)} \right| \left| 1  - \frac{\sum_{i=1}^N w_{P/Q}(x_i)}{N}   \right| \right]  \label{line:1} \\
		& \le \ev_{\mathbr{x} \sim Q} \left[ \left( \frac{\sum_{i=1}^N w_{P/Q}(x_i) f(x_i)}{\sum_{i=1}^N w_{P/Q}(x_i)} \right)^2 \right]^{\frac{1}{2}} \ev_{\mathbr{x} \sim Q} \left[ \left( 1  - \frac{\sum_{i=1}^N w_{P/Q}(x_i)}{N} \right)^2 \right]^{\frac{1}{2}}  \label{line:2} \\
		& \le \| f \|_{\infty} \sqrt{\frac{d_2( P \| Q) - 1}{N}} \label{line:3},
	\end{align}
	where \eqref{line:2} follows from \eqref{line:1} by applying Cauchy-Schwartz inequality and \eqref{line:3} is obtained by observing that $\left( \frac{\sum_{i=1}^N w_{P/Q}(x_i) f(x_i)}{\sum_{i=1}^N w_{P/Q}(x_i)} \right)^2 \le \|f \|_{\infty}^2$. 
\end{proof}

Bounding the variance of the SN estimator is non-trivial since the the normalization term makes all the samples interdependent. Exploiting the boundedness of $\widetilde{\mu}_{P/Q}$ we can derive trivial bounds like: $\Var_{\mathbr{x} \sim Q} \left[ \widetilde{\mu}_{P/Q} \right] \le \|f\|_{\infty}^2$. However, this bound does not shrink with the number of samples $N$. Several approximations of the variance have been proposed, like the following derived using the delta method~\cite{ver2012invented, mcbook}:
\begin{equation}
\label{eq:approxVar}
	\Var_{\mathbr{x} \sim Q} \left[ \widetilde{\mu}_{P/Q} \right] = \frac{1}{N} \ev_{x_1 \sim Q} \left[  w_{P/Q}^2(x_1) \left( f(x_1) - \ev_{x \sim P} \left[ f(x) \right] \right)^2\right] + o(N^{-2}). 
\end{equation}

We will not use the approximate expression for the variance, but we will directly bound the Mean Squared Error (MSE) of the SN estimator, which
is the sum of the variance and the bias squared. 
\begin{restatable}[]{prop}{}
	Let $P$ and $Q$ be two probability measures on the measurable space $\left(\mathcal{X}, \mathcal{F} \right)$ such that $P \ll Q$ and $d_2(P \| Q) < +\infty$. Let $x_1,x_2,\dots,x_N$ i.i.d. random variables sampled from $Q$ and $f: \mathcal{X} \rightarrow \mathbb{R}$ be a bounded function ($\| f \|_{\infty}<+\infty$). Then, the $\mathrm{MSE}$ of the SN estimator can be bounded as:
	\begin{equation}
	\label{eq:boundMSE}
		\mathrm{MSE}_{\mathbr{x} \sim Q} \left[ \widetilde{\mu}_{P/Q} \right] \le 2 \|f\|_{\infty}^2 \min \left\{2,  \frac{2d_2( P \| Q)-1}{N} \right\}.
	\end{equation}
\end{restatable}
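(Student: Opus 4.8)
The plan is to bound the mean squared error straight from its definition, $\mathrm{MSE}_{\mathbr{x}\sim Q}[\widetilde{\mu}_{P/Q}] = \ev_{\mathbr{x}\sim Q}\big[(\widetilde{\mu}_{P/Q} - \ev_{x\sim P}[f(x)])^2\big]$, deriving the two arguments of the minimum separately and then intersecting them. The constant bound is immediate: since $|\widetilde{\mu}_{P/Q}| \le \|f\|_{\infty}$ (as already noted for the SN estimator) and $|\ev_{x\sim P}[f(x)]| \le \|f\|_{\infty}$, the integrand never exceeds $(2\|f\|_{\infty})^2 = 2\|f\|_{\infty}^2 \cdot 2$, which is exactly the first term inside the minimum.

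For the vanishing bound I would insert the \emph{unbiased} IS estimator $\widehat{\mu}_{P/Q}$ as an intermediary, exploiting $\ev_{\mathbr{x}\sim Q}[\widehat{\mu}_{P/Q}] = \ev_{x\sim P}[f(x)]$. Writing $\widetilde{\mu}_{P/Q} - \ev_{x\sim P}[f(x)] = (\widetilde{\mu}_{P/Q} - \widehat{\mu}_{P/Q}) + (\widehat{\mu}_{P/Q} - \ev_{x\sim P}[f(x)])$ and applying the elementary inequality $(a+b)^2 \le 2a^2 + 2b^2$ splits the MSE into $2\,\ev[(\widetilde{\mu}_{P/Q}-\widehat{\mu}_{P/Q})^2]$ plus $2\,\ev[(\widehat{\mu}_{P/Q}-\ev_{x\sim P}[f(x)])^2]$. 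Because the IS estimator is unbiased, the second piece is precisely $2\,\Var_{\mathbr{x}\sim Q}[\widehat{\mu}_{P/Q}]$, which the variance Lemma already controls by $2\|f\|_{\infty}^2\, d_2(P\|Q)/N$.

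The remaining piece is handled exactly as in the bias Proposition: I would factor $\widetilde{\mu}_{P/Q}-\widehat{\mu}_{P/Q} = \widetilde{\mu}_{P/Q}\big(1 - \tfrac{1}{N}\sum_{i=1}^N w_{P/Q}(x_i)\big)$, bound the leading factor by $\|f\|_{\infty}$, and recognize the leftover expectation as the variance of the sample mean of the weights, namely $\Var_{\mathbr{x}\sim Q}[\tfrac{1}{N}\sum_i w_{P/Q}(x_i)] = (d_2(P\|Q)-1)/N$. This contributes $2\|f\|_{\infty}^2(d_2(P\|Q)-1)/N$. Summing the two contributions telescopes to $2\|f\|_{\infty}^2(2d_2(P\|Q)-1)/N$, and taking the minimum with the constant bound yields the claim.

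The only mildly delicate point is the choice of decomposition: rather than trying to control the variance of $\widetilde{\mu}_{P/Q}$ directly (which is awkward, as the self-normalization couples the samples), one routes through the unbiased $\widehat{\mu}_{P/Q}$ so that the variance Lemma and the weight-mean variance identity can each be applied cleanly. I expect no genuine obstacle beyond bookkeeping of constants, ensuring the two $O(1/N)$ terms combine into $2d_2-1$ rather than $2d_2$.
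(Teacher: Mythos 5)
Your proposal is correct and follows essentially the same route as the paper's proof: the trivial bound $4\|f\|_{\infty}^2$ from boundedness of the SN estimator, then inserting the unbiased IS estimator, applying $(a+b)^2\le 2(a^2+b^2)$, controlling one term by the IS variance lemma and the other by factoring out $\widetilde{\mu}_{P/Q}$ and using $\Var_{\mathbr{x}\sim Q}\left[\frac{1}{N}\sum_{i}w_{P/Q}(x_i)\right]=(d_2(P\|Q)-1)/N$. The bookkeeping of constants matches, so no changes are needed.
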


\begin{proof}
	First, recall that $\widetilde{\mu}_{P/Q}$ is bounded by $\| f\|_{\infty}$ thus its MSE cannot be larger than $4\| f\|_{\infty}^2$. The idea
	of the proof is to sum and subtract the IS estimator $\widehat{\mu}_{P/Q}$:
	\begin{align}
		\mathrm{MSE}_{\mathbr{x} \sim Q} \left[ \widetilde{\mu}_{P/Q} \right] & = \ev_{\mathbr{x} \sim Q} \left[ \left( \widetilde{\mu}_{P/Q} - \ev_{x \sim P} \left[ f(x) \right] \right)^2 \right] \notag \\
		& = \ev_{\mathbr{x} \sim Q} \left[ \left( \widetilde{\mu}_{P/Q} - \ev_{x \sim P} \left[ f(x) \right]  \pm \widehat{\mu}_{P/Q} \right)^2 \right] \label{line:10} \\
		& \le 2 \ev_{\mathbr{x} \sim Q} \left[ \left( \widetilde{\mu}_{P/Q} - \widehat{\mu}_{P/Q} \right)^2 \right] + 2\ev_{\mathbr{x} \sim Q} \left[ \left( \widehat{\mu}_{P/Q} -  \ev_{x \sim P}[f(x)]\right)^2 \right]  \label{line:11} \\
		& \le 2 \ev_{\mathbr{x} \sim Q} \left[ \left( \frac{\sum_{i=1}^N w_{P/Q}(x_i) f(x_i)}{\sum_{i=1}^N w_{P/Q}(x_i)} \right)^2 \left( 1 -  \frac{\sum_{i=1}^N w_{P/Q}(x_i)}{N} \right)^2 \right] + 2 \Var_{\mathbr{x} \sim Q} \left[  \widehat{\mu}_{P/Q} \right] \label{line:12} \\
		& \le 2 \|f \|_{\infty}^2 \ev_{\mathbr{x} \sim Q} \left[ \left( 1 -  \frac{\sum_{i=1}^N w_{P/Q}(x_i)}{N} \right)^2 \right] + 2 \Var_{\mathbr{x} \sim Q} \left[  \widehat{\mu}_{P/Q} \right] \label{line:13} \\
		& \le 2 \|f \|_{\infty}^2 \Var_{\mathbr{x} \sim Q} \left[ \frac{\sum_{i=1}^N w_{P/Q}(x_i)}{N} \right] + 2 \Var_{\mathbr{x} \sim Q} \left[  \widehat{\mu}_{P/Q} \right] \label{line:14}\\
		& \le 2 \|f \|_{\infty}^2 \frac{d_2(P \| Q) -1 }{N} + 2 \|f \|_{\infty}^2 \frac{d_2(P\| Q)}{N} =2 \|f \|_{\infty}^2 \frac{2d_2( P \| Q) - 1}{N}\label{line:14},
	\end{align}
	where line \eqref{line:11} follows from line \eqref{line:10} by applying the inequality $(a+b)^2 \le 2(a^2+b^2)$, \eqref{line:13} follows from \eqref{line:12} by observing that $\left( \frac{\sum_{i=1}^N w_{P/Q}(x_i) f(x_i)}{\sum_{i=1}^N w_{P/Q}(x_i)} \right)^2 \le \|f \|_{\infty}^2$.
\end{proof}

We can use this result to provide a high confidence bound for the SN estimator.
\begin{restatable}[]{prop}{}
	Let $P$ and $Q$ be two probability measures on the measurable space $\left(\mathcal{X}, \mathcal{F} \right)$ such that $P \ll Q$ and $d_2(P \| Q) < +\infty$. Let $x_1,x_2,\dots,x_N$ i.i.d. random variables sampled from $Q$ and $f: \mathcal{X} \rightarrow \mathbb{R}$ be a bounded function ($\| f \|_{\infty}<+\infty$). Then, for any $0< \delta \le 1$ and $N>0$ with probability at least $1-\delta$:
	\begin{equation*}
	\label{eq:boundSN}
		\ev_{x \sim P} \left[ f(x) \right] \ge \frac{1}{N} \sum_{i=1}^N \widetilde{w}_{P/Q}(x_i) f(x_i) -  2\| f \|_{\infty} \min \left\{ 1, \sqrt{\frac{d_2(P\|Q) (4-3\delta)}{\delta N}} \right\}.
	\end{equation*}
\end{restatable}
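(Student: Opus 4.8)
The plan is to convert the mean-squared-error control into a one-sided high-confidence statement. A lower bound $\ev_{x\sim P}[f(x)] \ge \widetilde{\mu}_{P/Q} - \epsilon$ holding with probability $1-\delta$ is equivalent to $\Pr\!\left(\widetilde{\mu}_{P/Q} - \ev_{x\sim P}[f(x)] \ge \epsilon\right) \le \delta$, i.e. to controlling the upper deviation of the SN estimator around the \emph{true} target value. I would assemble this from the two facts already established in this appendix: the MSE bound $\mathrm{MSE}_{\mathbr{x}\sim Q}[\widetilde{\mu}_{P/Q}] \le 2\|f\|_{\infty}^2 (2d_2(P\|Q)-1)/N$ and the bias bound $|b| \le \|f\|_{\infty}\sqrt{(d_2(P\|Q)-1)/N}$, where $b = \ev_{\mathbr{x}\sim Q}[\widetilde{\mu}_{P/Q}] - \ev_{x\sim P}[f(x)]$, together with Cantelli's inequality (the same tool used for Theorem~\ref{thr:bound}).

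The difficulty is that $\widetilde{\mu}_{P/Q}$ is biased, so Cantelli cannot be applied directly about $\ev_{x\sim P}[f(x)]$. I would instead split $\widetilde{\mu}_{P/Q} - \ev_{x\sim P}[f(x)] = (\widetilde{\mu}_{P/Q} - \ev_{\mathbr{x}\sim Q}[\widetilde{\mu}_{P/Q}]) + b$ and apply Cantelli only to the centered term, whose variance is at most the MSE. Choosing $\epsilon = |b| + \sqrt{\mathrm{MSE}}\,\sqrt{(1-\delta)/\delta}$ forces $\epsilon - b \ge \sqrt{\mathrm{MSE}}\sqrt{(1-\delta)/\delta} > 0$, so Cantelli yields $\Pr(\widetilde{\mu}_{P/Q} - \ev_{\mathbr{x}\sim Q}[\widetilde{\mu}_{P/Q}] \ge \epsilon - b) \le \frac{\mathrm{MSE}}{\mathrm{MSE} + (\epsilon-b)^2} \le \delta$, which is exactly the desired tail. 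Here the monotonicity of $\sigma^2 \mapsto \sigma^2/(\sigma^2+u^2)$ justifies replacing the true variance by the MSE upper bound, and the argument is valid for either sign of $b$ precisely because $|b|$ rather than $b$ appears in $\epsilon$.

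It then remains to simplify $\epsilon$ into the stated closed form. Substituting the bias and MSE bounds and loosening $d_2-1 \le d_2$ and $2(2d_2-1) \le 4d_2$ gives $\epsilon \le \|f\|_{\infty}\sqrt{d_2(P\|Q)/N}\,\bigl(1 + 2\sqrt{(1-\delta)/\delta}\bigr)$; the final step is the elementary inequality $1 + 2\sqrt{(1-\delta)/\delta} \le 2\sqrt{(4-3\delta)/\delta}$, which after clearing denominators reduces to $4\sqrt{\delta(1-\delta)} \le 12 - 9\delta$ and is verified over $\delta \in (0,1]$, producing the term $2\|f\|_{\infty}\sqrt{d_2(P\|Q)(4-3\delta)/(\delta N)}$. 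Finally, since $|\widetilde{\mu}_{P/Q}| \le \|f\|_{\infty}$ and $|\ev_{x\sim P}[f(x)]| \le \|f\|_{\infty}$, the deviation never exceeds $2\|f\|_{\infty}$, which supplies the trivial branch of the $\min\{1,\cdot\}$. I expect the bias handling, and pinning down the exact $(4-3\delta)$ constant through that elementary inequality, to be the only delicate points; the rest is direct substitution.
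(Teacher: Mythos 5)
Your proposal is correct and follows essentially the same route as the paper's proof: split off the bias, apply Cantelli's inequality to the centered SN estimator, bound the bias by Lemma-type estimates and the variance by the MSE bound, and then perform the same loosening ($d_2-1\le d_2$, $2(2d_2-1)\le 4d_2$) to reach the $(4-3\delta)$ constant. The only cosmetic difference is that you verify $1+2\sqrt{(1-\delta)/\delta}\le 2\sqrt{(4-3\delta)/\delta}$ by squaring, while the paper invokes $\sqrt{a}+\sqrt{b}\le 2\sqrt{a+b}$; both give the identical bound.
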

\begin{proof}
	The result is obtained by applying Cantelli's inequality and accounting for the bias. Consider the random variable $\widetilde{\mu}_{P/Q} = \frac{1}{N} \sum_{i=1}^N \widetilde{w}_{P/Q}(x_i) f(x_i)$ and let $\widetilde{\lambda} = \lambda - \left| \ev_{x \sim P}\left[ f(x) \right] - \ev_{\mathbr{x} \sim P}\left[ \widetilde{\mu}_{P/Q} \right] \right|$:
	\begin{align*}
		\Pr \left( \widetilde{\mu}_{P/Q} -\ev_{x \sim P}\left[ f(x) \right]  \ge \lambda \right) & = \Pr \left( \widetilde{\mu}_{P/Q} - \ev_{\mathbr{x} \sim P}\left[ \widetilde{\mu}_{P/Q} \right] \ge \lambda + \ev_{x \sim P}\left[ f(x) \right] - \ev_{\mathbr{x} \sim P}\left[ \widetilde{\mu}_{P/Q}  \right] \right) \\
		& \le \Pr \left( \widetilde{\mu}_{P/Q} - \ev_{\mathbr{x} \sim P}\left[ \widetilde{\mu}_{P/Q} \right] \ge \lambda - \left| \ev_{x \sim P}\left[ f(x) \right] - \ev_{\mathbr{x} \sim P}\left[ \widetilde{\mu}_{P/Q}  \right] \right| \right) \\
		& = \Pr \left( \widetilde{\mu}_{P/Q} - \ev_{\mathbr{x} \sim P}\left[ \widetilde{\mu}_{P/Q} \right] \ge \widetilde{\lambda}  \right).
	\end{align*}
	Now we apply Cantelli's inequality:
	\begin{align}
		\Pr \left( \widetilde{\mu}_{P/Q} -\ev_{x \sim P} \left[ f(x) \right] \ge \lambda \right) &\le 
		\Pr \left( \widetilde{\mu}_{P/Q} -\ev_{x \sim P} \left[ \widetilde{\mu}_{P/Q} \right] \ge \widetilde{\lambda} \right) \le
		 \frac{1}{1+\frac{\widetilde{\lambda}^2}{\Var_{\mathbr{x} \sim Q} \left[ \widetilde{\mu}_{P/Q} \right]}} \nonumber \\
		 &=  \frac{1}{1+\frac{\left(\lambda -  \left| \ev_{x \sim P}\left[ f(x) \right] - \ev_{\mathbr{x} \sim P}\left[ \widetilde{\mu}_{P/Q} \right] \right| \right)^2}{\Var_{\mathbr{x} \sim Q} \left[ \widetilde{\mu}_{P/Q} \right]}} .
	\end{align}
	By calling $\delta = \frac{1}{1+\frac{\left(\lambda -  \left| \ev_{x \sim P}\left[ f(x) \right] - \ev_{\mathbr{x} \sim P}\left[ \widetilde{\mu}_{P/Q} \right] \right| \right)^2}{\Var_{\mathbr{x} \sim Q} \left[ \widetilde{\mu}_{P/Q} \right]}}$ and considering the complementary event, we get that with probability at least $1-\delta$ we have:
	\begin{equation}
		\ev_{x \sim P} \left[ f(x) \right] \ge \widetilde{\mu}_{P/Q} - \left| \ev_{x \sim P}\left[ f(x) \right] - \ev_{\mathbr{x} \sim P}\left[ \widetilde{\mu}_{P/Q} \right] \right| - \sqrt{\frac{1-\delta}{\delta} \Var_{\mathbr{x} \sim Q} \left[ \widetilde{\mu}_{P/Q} \right]}
	\end{equation}
	Then we bound the bias term $\left| \ev_{x \sim P}\left[ f(x) \right] - \ev_{\mathbr{x} \sim P}\left[ \widetilde{\mu}_{P/Q} \right] \right|$ with equation~\eqref{eq:boundBiasSN} and the variance term with the MSE in equation~\eqref{eq:boundMSE}. With some simple algebraic manipulation we have:
	\begin{align*}
		\ev_{x \sim P} \left[ f(x) \right] & \ge \widetilde{\mu}_{P/Q} - \|f\|_{\infty} \sqrt{\frac{d_2(P\|Q) - 1}{N}} - \|f\|_{\infty} \sqrt{\frac{1-\delta}{\delta}  \frac{2(2d_2(P\|Q) - 1)}{N}} \\
		& \ge \widetilde{\mu}_{P/Q} - \|f\|_{\infty} \sqrt{\frac{d_2(P\|Q)}{N}} - \|f\|_{\infty} \sqrt{\frac{1-\delta}{\delta}  \frac{4d_2(P\|Q)}{N}} \\
		& = \widetilde{\mu}_{P/Q} - \|f\|_{\infty} \sqrt{\frac{d_2(P\|Q)}{N}} \left( 1 + 2 \sqrt{\frac{1-\delta}{\delta}} \right) \\
		& \ge \widetilde{\mu}_{P/Q} - 2 \|f\|_{\infty} \sqrt{\frac{d_2(P\|Q)}{N}}  \sqrt{ 1 +  \frac{4(1-\delta)}{\delta}} \\
		& \ge \widetilde{\mu}_{P/Q} - 2 \|f\|_{\infty} \sqrt{\frac{d_2(P\|Q) (4-3\delta)}{\delta N}},
\end{align*}	 
	where the last line follows from the fact that $\sqrt{a} + \sqrt{b} \le 2 \sqrt{a+b}$ for any $a,b\ge 0$. Finally, recalling that the range of the SN estimator is $2\|f\|_{\infty}$ we get the result.
\end{proof}

It is worth noting that, apart for the constants, the bound has the same dependence on $d_2$ as in Theorem~\ref{thr:bound}. Thus, by suitably redefining the hyperparameter $\lambda$ we can optimize the same surrogate objective function for both IS and SN estimators.

\section{Implementation details}
In this Appendix, we provide some aspects about our implementation of POIS.
\label{apx:impl}

\subsection{Line Search}
\label{apx:lineSearch}
%
At each offline iteration $k$ the parameter update is performed in the direction of ${\mathcal{G}}(\mathbr{\theta}^{j}_{k})^{-1} {\nabla}_{\mathbr{\theta}_{k}^{j}}{\mathcal{L}}(\mathbr{\theta}^{j}_{k} / \mathbr{\theta}^{j}_{0})$ with a step size $\alpha_k$ determined in order to maximize the improvement. For brevity we will remove subscripts and dependence on $\mathbr{\theta}^j_0$ from the involved quantities. The rationale behind our line search is the following. Suppose that our
objective function $\mathcal{L}(\mathbr{\theta})$, restricted to the gradient direction $\mathcal{G}^{-1}(\mathbr{\theta}) \nabla_{\mathbr{\theta}} \mathcal{L}(\mathbr{\theta})$, represents a concave parabola in the Riemann manifold having $\mathcal{G}(\mathbr{\theta})$ as Riemann metric tensor. Suppose we know a point $\mathbr{\theta}_0$, the Riemann gradient in that point $ \mathcal{G}(\mathbr{\theta}_0)^{-1} \nabla_{\mathbr{\theta}} \mathcal{L}(\mathbr{\theta}_0)$ and another point:  $\mathbr{\theta}_{l} = \mathbr{\theta}_0 + \alpha_l \mathcal{G}(\mathbr{\theta}_0)^{-1} \nabla_{\mathbr{\theta}} \mathcal{L}(\mathbr{\theta}_0)$. For both points we know the value of the loss function: $\mathcal{L}_0 = \mathcal{L}(\mathbr{\theta}_0)$ and $\mathcal{L}_{l} = \mathcal{L}(\mathbr{\theta}_{l})$ and indicate with $\Delta \mathcal{L}_l = \mathcal{L}_{l} - \mathcal{L}_0$ the objective function improvement. Having this information we can compute the vertex of that parabola, which is its global maximum. 
Let us call $l(\alpha) = \mathcal{L}\left(\mathbr{\theta}_0 + \alpha \mathcal{G}^{-1}(\mathbr{\theta}_0) \nabla_{\mathbr{\theta}} \mathcal{L}(\mathbr{\theta}_0) \right) - \mathcal{L}(\mathbr{\theta}_0)$, being a parabola it can be expressed as $l(\alpha) = a\alpha^2 + b\alpha + c$. Clearly, $c=0$ by definition of $l(\alpha)$; $a$ and $b$ can be determined by enforcing the conditions: 
\begin{align*}
	b = \frac{\partial l}{ \partial \alpha} \bigg\rvert_{\alpha=0} & = \frac{\partial}{ \partial \alpha} \mathcal{L}\left(\mathbr{\theta}_0 + \alpha \mathcal{G}^{-1}(\mathbr{\theta}_0) \nabla_{\mathbr{\theta}} \mathcal{L}(\mathbr{\theta}_0) \right) - \mathcal{L}(\mathbr{\theta}_0)\rvert_{\alpha=0} = \\
	& = \nabla_{\mathbr{\theta}} \mathcal{L}(\mathbr{\theta}_0)^{T} \mathcal{G}^{-1}(\mathbr{\theta}_0) \nabla_{\mathbr{\theta}} \mathcal{L}(\mathbr{\theta}_0) = \\
	& = \| \nabla_{\mathbr{\theta}} \mathcal{L}(\mathbr{\theta}_0) \|_{\mathcal{G}^{-1}(\mathbr{\theta}_0)}^{2}, 
\end{align*}
\begin{align*}
	l(\alpha_l) & = a \alpha_l^2 + b \alpha_l = a \alpha_l^2 + \| \nabla_{\mathbr{\theta}} \mathcal{L}(\mathbr{\theta}_0) \|_{\mathcal{G}^{-1}(\mathbr{\theta}_0)}^{2} \alpha_l = \Delta \mathcal{L}_l \quad \implies \\
	& \implies \quad a = \frac{\Delta \mathcal{L}_l - \| \nabla_{\mathbr{\theta}} \mathcal{L}(\mathbr{\theta}_0) \|_{\mathcal{G}^{-1}(\mathbr{\theta}_0)}^{2} \alpha_l}{\alpha_l^2}.
\end{align*}
Therefore, the parabola has the form:
\begin{equation}
	l(\alpha) = \frac{\Delta \mathcal{L}_l - \| \nabla_{\mathbr{\theta}} \mathcal{L}(\mathbr{\theta}_0) \|_{\mathcal{G}^{-1}(\mathbr{\theta}_0)}^{2} \alpha_l}{\alpha_l^2} \alpha^2 + \| \nabla_{\mathbr{\theta}} \mathcal{L}(\mathbr{\theta}_0) \|_{\mathcal{G}^{-1}(\mathbr{\theta}_0)}^{2} \alpha.
\end{equation}
Clearly, the parabola is concave only if $\Delta \mathcal{L}_l < \| \nabla_{\mathbr{\theta}} \mathcal{L}(\mathbr{\theta}_0) \|_{\mathcal{G}^{-1}(\mathbr{\theta}_0)}^{2} \alpha_l$. The vertex is located at:
\begin{equation}
	\alpha_{l+1} = \frac{\| \nabla_{\mathbr{\theta}} \mathcal{L}(\mathbr{\theta}_0) \|_{\mathcal{G}^{-1}(\mathbr{\theta}_0)}^{2} \alpha_l^2 }{2\left(\| \nabla_{\mathbr{\theta}} \mathcal{L}(\mathbr{\theta}_0) \|_{\mathcal{G}^{-1}(\mathbr{\theta}_0)}^{2} \alpha_l - \Delta \mathcal{L}_l\right)}.
\end{equation}
To simplify the expression, like in~\cite{matsubara2010adaptive} we define $\alpha_l = \epsilon_l /  \| \nabla_{\mathbr{\theta}} \mathcal{L}(\mathbr{\theta}_0) \|_{\mathcal{G}^{-1}(\mathbr{\theta}_0)}^2$. Thus, we get:
\begin{equation}
\label{eq:epsStar}
	\epsilon_{l+1} = \frac{\epsilon_l^2 }{2(\epsilon_l - \Delta \mathcal{L}_l)}.
\end{equation}
Of course, we need also to manage the case in which the parabola is convex, \ie $\Delta \mathcal{L}_l \ge \| \nabla_{\mathbr{\theta}} \mathcal{L}(\mathbr{\theta}_0) \|_{\mathcal{G}^{-1}(\mathbr{\theta}_0)}^{2} \alpha_l$. Since our objective function is not really a parabola we reinterpret the two
cases: i) $\Delta \mathcal{L}_l > \| \nabla_{\mathbr{\theta}} \mathcal{L}(\mathbr{\theta}_0) \|_{\mathcal{G}^{-1}(\mathbr{\theta}_0)}^{2} \alpha_l$, the function is sublinear and in this case we use \eqref{eq:epsStar} to determine the new step size $\alpha_{l+1} = \epsilon_{l+1} /  \| \nabla_{\mathbr{\theta}} \mathcal{L}(\mathbr{\theta}_0) \|_{\mathcal{G}^{-1}(\mathbr{\theta}_0)}^2$; ii) $\Delta \mathcal{L}_l \ge \| \nabla_{\mathbr{\theta}} \mathcal{L}(\mathbr{\theta}_0) \|_{\mathcal{G}^{-1}(\mathbr{\theta}_0)}^{2} \alpha_l$, the function is superlinear, in this
case we increase the step size multiplying by $\eta>1$, \ie $\alpha_{l+1} = \eta \alpha_l$. Finally the update rule becomes:
\begin{equation}
	\epsilon_{l+1} = \begin{cases}
						\eta \epsilon_l & \text{if } \Delta \mathcal{L}_l > \frac{\epsilon_l (2\eta - 1)}{2 \eta} \\
						\frac{\epsilon_l^2}{2 (\epsilon_l - \Delta \mathcal{L}_l )} & \text{otherwise}
					 \end{cases}.
\end{equation}
The procedure is iterated until a maximum number of attempts is reached (say 30) or the objective function improvement is too small (say 1e-4).
The pseudocode of the line search is reported in Algorithm~\ref{alg:lineSearch}.
\begin{algorithm}[H]
  \caption{Parabolic Line Search}
  \label{alg:lineSearch}
   \hspace*{\algorithmicindent} \textbf{Input}: $\mathrm{tol}_{\Delta\mathcal{L}} = 1e-4$, $M_{\mathrm{ls}}=30$, $\mathcal{L}_0$ \\
   \hspace*{\algorithmicindent} \textbf{Output} : $\alpha^*$
    \begin{algorithmic}
    \State $\alpha_0 = 0$
	\State $\epsilon_1 = 1$
	\State $\Delta \mathcal{L}_{k-1} = - \infty$
	\For{$l=1,2,\dots, M_{\mathrm{ls}}$}
		\State $\alpha_l = \epsilon_l / \| \nabla_{\mathbr{\theta}} \mathcal{L}(\mathbr{\theta}_0) \|_{\mathcal{G}^{-1}(\mathbr{\theta}_0)}^2$
		\State $\mathbr{\theta}_l = \alpha_l \mathcal{G}^{-1}(\mathbr{\theta}_0)  \nabla_{\mathbr{\theta}} \mathcal{L}(\mathbr{\theta}_0)$
		\State $\Delta \mathcal{L}_l = \mathcal{L}_l - \mathcal{L}_0$
		\If{$\Delta \mathcal{L}_l  < \Delta \mathcal{L}_{l-1} + \mathrm{tol}_{\Delta\mathcal{L}} $}
			\State \Return $\alpha_{l-1}$
		\EndIf
		\State {$
			 \epsilon_{l+1} = \begin{cases}
						\eta \epsilon_l & \text{if } \Delta \mathcal{L}_l > \frac{\epsilon_l (1-2\eta)}{2\eta}\\
						\frac{\epsilon_l^2}{2 (\epsilon_l - \Delta \mathcal{L}_l )} & \text{otherwise}
					 \end{cases}
		$}
	\EndFor
\end{algorithmic}
  \end{algorithm}

\subsection{Estimation of the \Renyi divergence}
\label{apx:estRenyi}
In A-POIS, the \Renyi divergence needs to be computed between the behavioral $p(\cdot|\mathbr{\theta})$ and target $p(\cdot|\mathbr{\theta}')$ distributions on trajectories. This is likely impractical as it requires to integrate over the trajectory space. Moreover, for stochastic environments it cannot be computed unless we know the transition model $P$. The following result provides an exact, although loose, bound to this quantity in the case of finite-horizon tasks.
\begin{restatable}[]{prop}{}
	Let $p(\cdot|\mathbr{\theta})$ and $p(\cdot|\mathbr{\theta}')$ be the behavioral and target
	trajectory probability density functions. Let $H< \infty$ be the task-horizon. Then, it holds that:
	\begin{equation*}
		d_{\alpha}\left( p(\cdot|\mathbr{\theta}') \| p(\cdot|\mathbr{\theta}) \right) \le \left( \sup_{s \in \mathcal{S}} d_{\alpha}\left( \pi_{\mathbr{\theta}'}(\cdot|s) \|  \pi_{\mathbr{\theta}}(\cdot|s)  \right) \right)^H.
	\end{equation*}
\end{restatable}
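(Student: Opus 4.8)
The plan is to work from the integral form of the exponentiated divergence. By the definition $d_\alpha = \exp(D_\alpha)$ in \eqref{eq:renyiDiv}, we have $d_{\alpha}\left( p(\cdot|\mathbr{\theta}') \| p(\cdot|\mathbr{\theta}) \right)^{\alpha-1} = \int_{\mathcal{T}} p(\tau|\mathbr{\theta}')^{\alpha}\, p(\tau|\mathbr{\theta})^{1-\alpha}\de\tau$, so it suffices to bound this unnormalized integral and then take the $(\alpha-1)$-th root. I would assume $\alpha>1$ throughout (the case of interest, in particular $\alpha=2$), so that $t\mapsto t^{1/(\alpha-1)}$ is increasing and all the manipulations below preserve the direction of the inequality.

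First I would substitute the Markovian factorization $p(\tau|\mathbr{\theta})=D(s_0)\prod_{t=0}^{H-1}\pi_{\mathbr{\theta}}(a_t|s_t)P(s_{t+1}|s_t,a_t)$ into the integrand. The key observation is that the environment terms $D(s_0)$ and $P(s_{t+1}|s_t,a_t)$ do not depend on the policy parameters, hence in the product $p(\tau|\mathbr{\theta}')^{\alpha}p(\tau|\mathbr{\theta})^{1-\alpha}$ they appear with combined exponent $\alpha+(1-\alpha)=1$ and survive to the first power, whereas the policy factors combine into $\pi_{\mathbr{\theta}'}(a_t|s_t)^{\alpha}\pi_{\mathbr{\theta}}(a_t|s_t)^{1-\alpha}$. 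This cancellation of the dynamics is what makes the final bound depend only on the per-state policy divergences.

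Next I would integrate out the trajectory variables in reverse temporal order. The last-state integral gives $\int P(s_H|s_{H-1},a_{H-1})\de s_H = 1$; the last-action integral then yields the per-state factor $\int \pi_{\mathbr{\theta}'}(a|s)^{\alpha}\pi_{\mathbr{\theta}}(a|s)^{1-\alpha}\de a = d_{\alpha}\left( \pi_{\mathbr{\theta}'}(\cdot|s) \|  \pi_{\mathbr{\theta}}(\cdot|s)  \right)^{\alpha-1}$, which I bound above by $M := \left( \sup_{s \in \mathcal{S}} d_{\alpha}\left( \pi_{\mathbr{\theta}'}(\cdot|s) \|  \pi_{\mathbr{\theta}}(\cdot|s)  \right) \right)^{\alpha-1}$. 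A short induction on the horizon then finishes the estimate: the inductive claim is that integrating out the suffix $(a_t,s_{t+1},\dots,s_H)$ leaves at most $M^{H-t}$ times the surviving sub-trajectory density $D(s_0)\prod_{u<t}\pi_{\mathbr{\theta}'}(a_u|s_u)^{\alpha}\pi_{\mathbr{\theta}}(a_u|s_u)^{1-\alpha}P(s_{u+1}|s_u,a_u)$. At each inductive step the per-state factor is replaced by its supremum $M$, which can then be pulled out of the ensuing state integral so that the transition $P$ normalizes to $1$; thus each of the $H$ action integrals contributes a factor of at most $M$ while every transition and the initial distribution integrate to $1$, giving $\int_{\mathcal{T}} p(\tau|\mathbr{\theta}')^{\alpha}p(\tau|\mathbr{\theta})^{1-\alpha}\de\tau \le M^{H}$.

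Finally, raising both sides to the power $1/(\alpha-1)$ and using $M^{1/(\alpha-1)} = \sup_{s}d_{\alpha}\left( \pi_{\mathbr{\theta}'}(\cdot|s) \|  \pi_{\mathbr{\theta}}(\cdot|s)  \right)$ yields the claimed inequality $d_{\alpha}\left( p(\cdot|\mathbr{\theta}') \| p(\cdot|\mathbr{\theta}) \right) \le \left( \sup_{s} d_{\alpha}\left( \pi_{\mathbr{\theta}'}(\cdot|s) \|  \pi_{\mathbr{\theta}}(\cdot|s)  \right) \right)^{H}$. The main point to handle carefully is the backward induction together with the sign condition $\alpha>1$: this condition is what guarantees both that $M$ is an increasing function of the per-state divergence (so that pulling out the supremum is legitimate and loses only worst-case tightness, matching the ``loose'' qualifier) and that the concluding root preserves the inequality; for $\alpha<1$ the exponent $1/(\alpha-1)$ is negative and the direction would reverse, so the statement would need restating.
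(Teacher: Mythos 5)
Your proof is correct and follows essentially the same route as the paper's: factorize the trajectory densities so that the dynamics terms carry unit exponent and cancel, integrate backward in time, bound each per-state action integral by its supremum over $\mathcal{S}$, and induct on the horizon $H$. The only difference is bookkeeping: the paper works directly with the quantity $\int_{\mathcal{T}} p(\tau|\vtheta)\left(p(\tau|\vtheta')/p(\tau|\vtheta)\right)^{\alpha}\de\tau$ (which it denotes $d_{\alpha}$, consistently with its moment convention $\ev_{x\sim Q}[w_{P/Q}(x)^{\alpha}]=d_{\alpha}(P\|Q)$), so your explicit $(\alpha-1)$-th-root step and the accompanying $\alpha>1$ caveat are just a slightly more careful rendering of the same argument.
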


\begin{proof}
	We prove the proposition by induction on the horizon $H$. We define $d_{\alpha, H}$ as the
	$\alpha$-\Renyi divergence at horizon $H$. For $H=1$ we have:
	\begin{align*}
		d_{\alpha, 1}\left( p(\cdot|\mathbr{\theta}') \| p(\cdot|\mathbr{\theta}) \right) & = 
			\int_{\mathcal{S}} D(s_0) \int_{\mathcal{A}} \pi_{\mathbr{\theta}}(a_0|s_0) \left(\frac{ \pi_{\mathbr{\theta}'}(a_0|s_0)}{ \pi_{\mathbr{\theta}}(a_0|s_0)} \right)^{\alpha} \int_{\mathcal{S}} P(s_1|s_0,a_0) \de s_1 \de a_0 \de s_0 \\
			&  = \int_{\mathcal{S}} D(s_0) \int_{\mathcal{A}} \pi_{\mathbr{\theta}}(a_0|s_0) \left(\frac{ \pi_{\mathbr{\theta}'}(a_0|s_0)}{ \pi_{\mathbr{\theta}}(a_0|s_0)} \right)^{\alpha}  \de a_0 \de s_0 \\
			& \le \int_{\mathcal{S}} D(s_0) \de s_0 \sup_{s \in \mathcal{S}} \int_{\mathcal{A}} \pi_{\mathbr{\theta}}(a_0|s) \left(\frac{ \pi_{\mathbr{\theta}'}(a_0|s)}{ \pi_{\mathbr{\theta}}(a_0|s)} \right)^{\alpha}  \de a_0 \\
			& \le \sup_{s \in \mathcal{S}} d_{\alpha}\left( \pi_{\mathbr{\theta}'}(\cdot|s) \|  \pi_{\mathbr{\theta}}(\cdot|s) \right),
	\end{align*}
	where the last but one passage follows from Holder's inequality. Suppose that the proposition
	holds for any $H' < H$, let us prove the proposition for $H$.
	\begin{align*}
		d_{\alpha, H}\Big( p(\cdot|\mathbr{\theta}') & \| p(\cdot|\mathbr{\theta}) \Big) = 
			\int_{\mathcal{S}} D(s_0) \; \dots \int_{\mathcal{A}} \pi_{\mathbr{\theta}}(a_{H-2}|s_{H-2}) \left(\frac{ \pi_{\mathbr{\theta}'}(a_{H-2}|s_{H-2})}{ \pi_{\mathbr{\theta}}(a_{H-2}|s_{H-2})} \right)^{\alpha} \int_{\mathcal{S}} P(s_{H-1}|s_{H-2},a_{H-2}) \\
			& \quad \times \int_{\mathcal{A}} \pi_{\mathbr{\theta}}(a_{H-1}|s_{H-1}) \left(\frac{ \pi_{\mathbr{\theta}'}(a_{H-1}|s_{H-1}) }{ \pi_{\mathbr{\theta}}(a_{H-1}|s_{H-1}) } \right)^{\alpha} \int_{\mathcal{S}} P(s_H|s_{H-1},a_{H-1}) \de s_0 \dots \de s_{H-1} \\
			& \quad \times \de a_{H-2} \de s_{H-1} \de a_{H-1} \de s_H \\
			& = \int_{\mathcal{S}} D(s_0) \; \dots \int_{\mathcal{A}} \pi_{\mathbr{\theta}}(a_{H-2}|s_{H-2}) \left(\frac{ \pi_{\mathbr{\theta}'}(a_{H-2}|s_{H-2})}{ \pi_{\mathbr{\theta}}(a_{H-2}|s_{H-2})} \right)^{\alpha} \int_{\mathcal{S}} P(s_{H-1}|s_{H-2},a_{H-2})  \\
			& \quad \times \int_{\mathcal{A}} \pi_{\mathbr{\theta}}(a_{H-1}|s_{H-1}) \left(\frac{ \pi_{\mathbr{\theta}'}(a_{H-1}|s_{H-1}) }{ \pi_{\mathbr{\theta}}(a_{H-1}|s_{H-1}) } \right)^{\alpha} \de s_0 \dots \de s_{H-1} \de a_{H-2} \de s_{H-1} \de a_{H-1}  \\
			& \le \int_{\mathcal{S}} D(s_0) \; \dots \int_{\mathcal{A}} \pi_{\mathbr{\theta}}(a_{H-2}|s_{H-2}) \left(\frac{ \pi_{\mathbr{\theta}'}(a_{H-2}|s_{H-2})}{ \pi_{\mathbr{\theta}}(a_{H-2}|s_{H-2})} \right)^{\alpha} \int_{\mathcal{S}} P(s_{H-1}|s_{H-2},a_{H-2}) \\
			& \quad \times  \de s_0 \dots \de s_{H-1} \de a_{H-2} \de s_{H-1} \times \sup_{s \in \mathcal{S}} \int_{\mathcal{A}} \pi_{\mathbr{\theta}}(a_{H-1}|s) \left(\frac{ \pi_{\mathbr{\theta}'}(a_{H-1}|s) }{ \pi_{\mathbr{\theta}}(a_{H-1}|s) } \right)^{\alpha}\de a_{H-1} \ \\
			& \le d_{\alpha, H-1}\left( p(\cdot|\mathbr{\theta}') \| p(\cdot|\mathbr{\theta}) \right) \sup_{s \in \mathcal{S}} d_{\alpha}\left( \pi_{\mathbr{\theta}'}(\cdot|s) \|  \pi_{\mathbr{\theta}}(\cdot|s) \right) \ \\
		& \le \left( \sup_{s \in \mathcal{S}} d_{\alpha}\left( \pi_{\mathbr{\theta}'}(\cdot|s) \|  \pi_{\mathbr{\theta}}(\cdot|s) \right) \right)^H,
	\end{align*}
%
	where we applied Holder's inequality again and the last passage is obtained for the inductive hypothesis.
\end{proof}

The proposed bound, however, is typically ultraconservative, thus we propose two alternative estimators of the $\alpha$-\Renyi divergence. The first estimator is obtained by simply
rephrasing the definition~\eqref{eq:renyiDiv} into a sample-based version:
\begin{equation}
	\widehat{d}_\alpha \left( P \| Q \right) = \frac{1}{N} \sum_{i=1}^N \left(\frac{p(x_i)}{q(x_i)}\right)^{\alpha} = \frac{1}{N} \sum_{i=1}^N w_{P/Q}^{\alpha}(x_i),
\end{equation}
where $x_i \sim Q$. This estimator is clearly unbiased and applies to any pair of probability distributions. However, in A-POIS
$P$ and $Q$ are distributions over trajectories, their densities are expressed as products, thus the $\alpha$-\Renyi divergence becomes:
\begin{align*}
	d_{\alpha}\left( p(\cdot|\mathbr{\theta}') \| p(\cdot|\mathbr{\theta}) \right)  & = \int_{\mathcal{T}} p(\cdot|\mathbr{\theta})(\tau) \left( \frac{p(\tau|\mathbr{\theta}')}{p(\tau|\mathbr{\theta})} \right)^{\alpha} \de \tau = \\
	& = \int_{\mathcal{T}} D(s_{\tau,0}) \prod_{t=0}^{H-1}P(s_{\tau,t+1}|s_{\tau,t}, a_{\tau,t}) \prod_{t=0}^{H-1} \pi_{\mathbr{\theta}}(a_{\tau,t} | s_{\tau,t}) \left( \frac{ \pi_{\mathbr{\theta}'}(a_{\tau,t} | s_{\tau,t}) }{ \pi_{\mathbr{\theta}}(a_{\tau,t} | s_{\tau,t})} \right)^{\alpha}  \de \tau.
\end{align*}
Since both $ \pi_{\mathbr{\theta}}$ and $ \pi_{\mathbr{\theta}'}$ are known we are able to compute exactly for each state $d_{\alpha} \left( \pi_{\mathbr{\theta}'}(\cdot|s) \| \pi_{\mathbr{\theta}}(\cdot|s) \right)$ with no need to sample the action $a$. Therefore, we suggest to
estimate the \Renyi divergence between two trajectory distributions as:
\begin{equation}
	\widehat{d}_{\alpha}\left( p(\cdot|\mathbr{\theta}') \| p(\cdot|\mathbr{\theta}) \right) = \frac{1}{N} \sum_{i=1}^N \prod_{t=0}^{H-1} d_{\alpha} \left( \pi_{\mathbr{\theta}'}(\cdot|s_{\tau_i,t}) \| \pi_{\mathbr{\theta}}(\cdot|s_{\tau_i,t}) \right).
\end{equation}

\subsection{Computation of the Fisher Matrix}
\label{apx:implFIM}
In A-POIS the Fisher Information Matrix needs to be estimated off-policy from samples. We can
use, for this purpose, the IS estimator:
\begin{equation*}
	\widehat{\mathcal{F}}(\mathbr{\theta}'/\mathbr{\theta}) = \frac{1}{N} \sum_{i=1}^N w_{\mathbr{\theta}'/\mathbr{\theta}}(\tau_i) \left( \sum_{t=0}^{H-1} \nabla_{\mathbr{\theta}'} \log \pi_{\mathbr{\theta}'}(a_{\tau_i,t}|s_{\tau_i,t})\right)^T \left( \sum_{t=0}^{H-1} \nabla_{\mathbr{\theta}'} \log \pi_{\mathbr{\theta}'}(a_{\tau_i,t}|s_{\tau_i,t})\right).
\end{equation*}
The SN estimator is obtained by replacing $w_{\mathbr{\theta}'/\mathbr{\theta}}(\tau_i)$ with $\widetilde{w}_{\mathbr{\theta}'/\mathbr{\theta}}(\tau_i)$. Those estimators become very unreliable
when $\mathbr{\theta}'$ is far from $\mathbr{\theta}$, making them difficult to use in practice. On the contrary, in P-POIS in presence of Gaussian hyperpolicies the FIM can be computed exactly~\cite{sun2009efficient}. If the hyperpolicy has diagonal covariance matrix, \ie $\nu_{\mathbr{\mu}, \mathbr{\sigma}} = \mathcal{N}(\mathbr{\mu}, \mathrm{diag} (\mathbr{\sigma}^2))$, the FIM is also diagonal:
\begin{equation*}
	\mathcal{F}(\mathbr{\mu}, \mathbr{\sigma}) = \left(
\begin{array}{c|c}
\mathrm{diag}(1 / \mathbr{\sigma}^2) & \mathbr{0} \\
\hline
\mathbr{0} & 2 \mathbr{I}
\end{array}
\right),
\end{equation*}
where $\mathbr{I}$ is a properly-sized identity matrix.

\subsection{Practical surrogate objective functions}\label{apx:psof}
In practice, the \Renyi divergence term $d_2$ in the surrogate objective functions presented so far, either exact in P-POIS or approximate in A-POIS, tends
to be overly-conservative. To mitigate this problem, by observing that $d_2(P \| Q) / N = 1/ \ess(P \| Q)$ from equation~\eqref{eq:ess} we can replace the whole quantity with an estimator like $\widehat{\ess}(P \| Q)$, as presented in equation~\eqref{eq:ess}. This leads to the following approximated surrogate objective functions:
\begin{equation*}
	\widetilde{\mathcal{L}}_{\lambda}^{\mathrm{A-POIS}}(\mathbr{\theta'}/\mathbr{\theta}) = \frac{1}{N} \sum_{i=1}^N w_{\mathbr{\theta}'/\mathbr{\theta}}(\tau_i) R(\tau_i) -  \frac{\lambda}{\sqrt{\widehat{\ess}\left(p(\cdot|\mathbr{\theta}') \| p(\cdot|\mathbr{\theta})\right)}},
\end{equation*}
\begin{equation*}
	\widetilde{\mathcal{L}}_{\lambda}^{\mathrm{P-POIS}}(\mathbr{\rho}'/\mathbr{\rho}) = \frac{1}{N} \sum_{i=1}^N w_{\mathbr{\rho}'/\mathbr{\rho}}(\mathbr{\theta}_i) R(\tau_i) -  \frac{\lambda}{\sqrt{\widehat{\ess}\left(\nu_{\mathbr{\rho}'} \| \nu_{\mathbr{\rho}} \right) }}.
\end{equation*}
Moreover, in all the experiments, we use the empirical maximum reward in place of the true $R_{\max}$.

\subsection{Practical P-POIS for Deep Neural Policies (N-POIS)}
\label{apx:implPGPE}
As mentioned in Section \ref{sec:dnp}, P-POIS applied to deep neural policies suffers from a curse of dimensionality due to the high number of (scalar) parameters (which are $\sim 10^3$ for the network used in the experiments). The corresponding hyperpolicy is a multi-variate Gaussian (diagonal covariance) with a very high dimensionality. As a result, the \Renyi divergence, used as a penalty, is extremely sensitive even to small perturbations, causing an overly-conservative behavior.
First, we give up the exact \Renyi computation and use the practical surrogate objective function $\widetilde{\mathcal{L}}_{\lambda}^{\mathrm{P-POIS}}$ proposed in Appendix \ref{apx:psof}. This, however, is not enough. The importance weights, being the products of thousands of probability densities, can easily become zero, preventing any learning. Hence, we decide to group the policy parameters in smaller blocks, and independently learn the corresponding hyperparameters. In general, we can define a family of $M$ orthogonal policy-parameter subspaces $\{\Theta_m\leq\Theta\}_{m=1}^M$, where $V\leq W$ reads \quotes{$V$ is a subspace of $W$}. For each $\Theta_m$, we consider a multi-variate diagonal-covariance Gaussian with $\Theta_m$ as support, obtaining a corresponding hyperparameter subspace $\mathcal{P}_m\leq\mathcal{P}$. Then, for each $\mathcal{P}_m$, we compute a separate surrogate objective (where we employ self-normalized importance weights):
\begin{equation*}
\widetilde{\mathcal{L}}_{\lambda}^{\mathrm{N-POIS}}(\mathbr{\rho}_m'/\mathbr{\rho}_m) = \frac{1}{N} \sum_{i=1}^N \widetilde{w}_{\mathbr{\rho}_m'/\mathbr{\rho}_m}(\mathbr{\theta}^i_m) R(\tau_i) -  \frac{\lambda}{\sqrt{\widehat{\ess}\left(\nu_{\mathbr{\rho}_m'} \| \nu_{\mathbr{\rho}_m} \right) }},
\end{equation*}
where $\vrho_m, \vrho_m'\in\mathcal{P}_m, \vtheta_m\in\Theta_m$. 
Each objective is independently optimized via natural gradient ascent, where the step size is found via a line search as usual. It remains to define a meaningful grouping for the policy parameters, \ie for the weights of the deep neural policy. We choose to group them by network unit, or neuron (counting output units but not input units). More precisely, let denote a network unit as a function:
\[
	U_i(\mathbr{x}\vert\vtheta_m) = g(\mathbr{x}^T\vtheta_m),
\]
where $\mathbr{x}$ is the vector of the inputs to the unit (including a $1$ that multiplies the bias parameter) and $g(\cdot)$ is an activation function. To each unit $U_m$ we associate a block $\Theta_m$ such that $\vtheta_m\in\Theta_m$. In more connectivist-friendly terms, we group connections by the neuron they go into. For the network we used in the experiments, this reduces the order of the multivariate Gaussian hyperpolicies from $\sim10^3$ to $\sim10^2$. We call this practical variant of our algorithm Neuron-Based POIS (N-POIS). Although some design choices seem rather arbitrary, and independently optimizing hyperparameter blocks clearly neglects some potentially meaningful interactions, the practical results of N-POIS are promising, as reported in Section \ref{sec:dnp}.
Figure~\ref{fig:ablation} is an ablation study showing the performance of P-POIS variants on Cartpole. Only using both the tricks discussed in this section, we are able to solve the task (this experiment is on 50 iterations only).
\begin{figure}[t]
	\centering
	\includegraphics[]{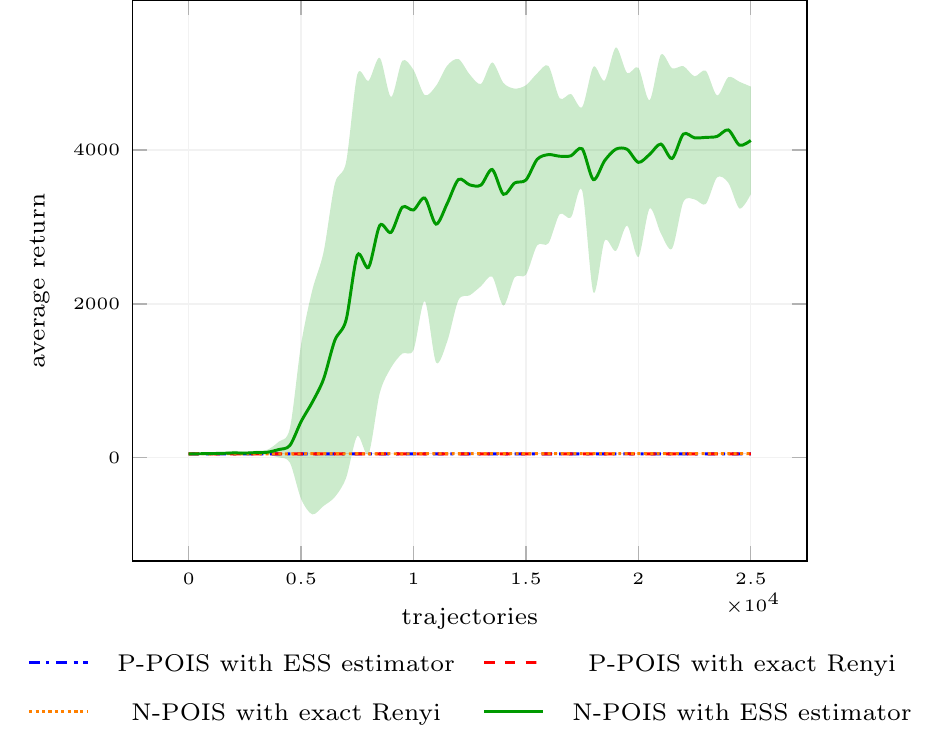}
	\caption{Ablation study for N-POIS (5 runs, 95\% c.i.).}
	\label{fig:ablation}
\end{figure}

\section{Experiments Details}
\label{apx:exp}
In this Appendix, we report the hyperparameter values used in the experimental evaluation and some additional plots and experiments. We adopted different criteria to decide the batch
size: for linear policies at each iteration 100 episodes are collected regardless of their length, whereas for deep neural policies, in order 
to be fully comparable with~\cite{duan2016benchmarking}, 50000 timesteps are collected at each iteration regardless of the resulting number of episodes (the last episode
is cut so that the number of timesteps sums up exactly to 50000). Clearly, this difference is relevant only for episodic tasks.
%

\subsection{Linear policies}
In the following we report the hyperparameters shared by all tasks and algorithms for the 
experiments with linear policies:
\begin{itemize}
	\item Policy architecture: Normal distribution $\mathcal{N}(u_{\mathbr{M}}(\mathbr{s}), e^{2\mathbr{\Omega}})$, where the mean $u_{\mathbr{M}}(\mathbr{s}) = \mathbr{M} \mathbr{s}$ is a linear function in the state variables with no bias, and the variance is state-independent and parametrized as $e^{2\mathbr{\Omega}}$, with diagonal $\mathbr{\Omega}$.
	\item Number of runs: 20 (95\% c.i.)
	\item seeds: \underline{10}, \underline{109}, \underline{904}, \underline{160}, \underline{570}, 662, 963, 100, 746, 236, 247, 689, 153,
      947, 307, 42, 950, 315, 545, 178
     \item Policy initialization: mean parameters sampled from $\mathcal{N}(0,0.01^2)$, variance initialized to 1
     \item Task horizon: 500
     \item Number of iterations: 500
     \item Maximum number of line search attempts (POIS only): 30
     \item Maximum number of offline iterations (POIS only): 10
     \item Episodes per iteration: 100
	 \item Importance weight estimator (POIS only): IS for A-POIS, SN for P-POIS
	 \item Natural gradient (POIS only): No for A-POIS, Yes for P-POIS
\end{itemize}
Table~\ref{tab:hypLin} reports the hyperparameters that have been tuned specifically for each task  selecting the
best combination based on the runs corresponding to the \underline{first 5 seeds}.

\begin{table}[!h]
  \caption{Task-specific hyperparameters for the experiments with linear policy. $\delta$ is the significance level for POIS while $\delta$ is the step-size for TRPO and PPO. In \textbf{bold}, the best hyperparameters found.}
  \label{tab:hypLin} 
  \centering
  \begin{tabular}{lcc}
    \toprule
    Environment     & A-POIS ($\delta$)     & P-POIS  ($\delta$) \\
    \midrule
    Cart-Pole Balancing & 0.1, 0.2, 0.3, \textbf{0.4}, 0.5 & 0.1, 0.2, 0.3, \textbf{0.4}, 0.5, 0.6, 0.7, 0.8, 0.9 1  \\
    Inverted Pendulum & 0.8, \textbf{0.9}, 0.99, 1 & 0.1, 0.2, 0.3, 0.4, 0.5, 0.6, 0.7, \textbf{0.8}, 0.9 1 \\
    Mountain Car & 0.8, \textbf{0.9}, 0.99, 1 & 0.1, 0.2, 0.3, 0.4, 0.5, 0.6, 0.7, 0.8, 0.9, \textbf{1}  \\
    Acrobot & 0.1, 0.3, 0.5, \textbf{0.7}, 0.9 & 0.1,\textbf{ 0.2}, 0.3, 0.4, 0.5, 0.6, 0.7, 0.8, 0.9 1  \\
    Double Inverted Pendulum & \textbf{0.1}, 0.2, 0.3, 0.4, 0.5& \textbf{0.1}, 0.2, 0.3, 0.4, 0.5, 0.6, 0.7, 0.8, 0.9 1  \\
    \bottomrule
  \end{tabular}
  \vspace{0.25cm}
  \vfill
  \begin{tabular}{lcc}
    \toprule
    Environment  & TRPO ($\delta$) & PPO ($\delta$) \\
    \midrule
    Cart-Pole Balancing & 0.001, 0.01, \textbf{0.1}, 1  & 0.001, \textbf{0.01}, 0.1 , 1 \\
    Inverted Pendulum & 0.001, \textbf{0.01}, 0.1, 1 & 0.001, \textbf{0.01}, 0.1, 1\\
    Mountain Car & 0.001, \textbf{0.01}, 0.1, 1 & 0.001, 0.01, 0.1, \textbf{1} \\
    Acrobot &  0.001, 0.01, 0.1, \textbf{1} &  0.001, 0.01, 0.1, \textbf{1}  \\
    Double Inverted Pendulum & 0.001, 0.01, \textbf{0.1}, 1   & 0.001, 0.01, 0.1, \textbf{1} \\
    \bottomrule
  \end{tabular}
\end{table}

\subsection{Deep neural policies}
In the following we report the hyperparameters shared by all tasks and algorithms for the 
experiments with deep neural policies:
\begin{itemize}
	\item Policy architecture: Normal distribution $\mathcal{N}(u_{\mathbr{M}}(\mathbr{s}), e^{2\mathbr{\Omega}})$, where the mean $u_{\mathbr{M}}(\mathbr{s})$ is a 3-layers MLP (100, 50, 25) with bias (activation functions: tanh for hidden-layers, linear for output layer), the variance is state-independent and parametrized as $e^{2\mathbr{\Omega}}$ with diagonal $\mathbr{\Omega}$.
	\item Number of runs: 5 (95\% c.i.)
	\item seeds: \underline{10}, \underline{109}, \underline{904}, \underline{160}, \underline{570}
     \item Policy initialization: uniform Xavier initialization~\cite{glorot2010understanding}, variance initialized to 1
     \item Task horizon: 500
     \item Number of iterations: 500
     \item Maximum number of line search attempts (POIS only): 30
     \item Maximum number of offline iterations (POIS only): 20
     \item Timesteps per iteration: 50000
	 \item Importance weight estimator (POIS only): IS for A-POIS, SN for P-POIS
	 \item Natural gradient (POIS only): No for A-POIS, Yes for P-POIS
\end{itemize}
Table~\ref{tab:hypNN} reports the hyperparameters that have been tuned specifically for each task  selecting the best combination based on the runs corresponding to the \underline{5 seeds}.

\begin{table}[!h]
  \caption{Task-specific hyperparameters for the experiments with deep neural policies. $\delta$ is the significance level for POIS. In \textbf{bold}, the best hyperparameters found.}
  \label{tab:hypNN}
  \centering
  \begin{tabular}{lcc}
     \toprule
    Environment     & A-POIS ($\delta$)     & P-POIS  ($\delta$)\\
    \midrule
    Cart-Pole Balancing &0.9, \textbf{0.99}, 0.999 & 0.4, 0.5, \textbf{0.6}, 0.7, 0.8 \\
    Mountain Car &  0.9, \textbf{0.99}, 0.999&    0.1, 0.2, \textbf{0.3}, 0.4, 0.5, 0.6, 0.7, 0.8\\
    Double Inverted Pendulum &  0.9, \textbf{0.99}, 0.999 &   0.4, 0.5, 0.6, 0.7, \textbf{0.8}\\
    Swimmer & 0.9, \textbf{0.99}, 0.999 &  0.4, 0.5, \textbf{0.6}, 0.7, 0.8 \\
    \bottomrule
  \end{tabular}
\end{table}

\subsection{Full experimental results}
In this section, we report the complete set of results we obtained by testing the two versions of POIS. 

In Figure~\ref{fig:plotDelta2} we report additional plots \wrt Figure~\ref{fig:plotDelta} for A-POIS when changing the $\delta$ parameter in the Cartpole environment. It is worth noting that the value of $\delta$ has also an effect on the speed with which the variance of the policy approaches zero. Indeed, smaller policy variances induce a larger \Renyi divergence and thus with a higher penalization (small $\delta$) reducing the policy variance is discouraged.
Moreover, we can see the values of the bound before and after the optimization. Clearly, the higher the value of $\delta$, the higher the value of the bound after the optimization process, as the penalization term is weaker. It is interesting to notice that when $\delta=1$ the bound after the optimization reaches values that are impossible to reach for any policy and this is a consequence of the high uncertainty in the importance sampling estimator.

\begin{figure}[h!]
\centering
\includegraphics[scale=1]{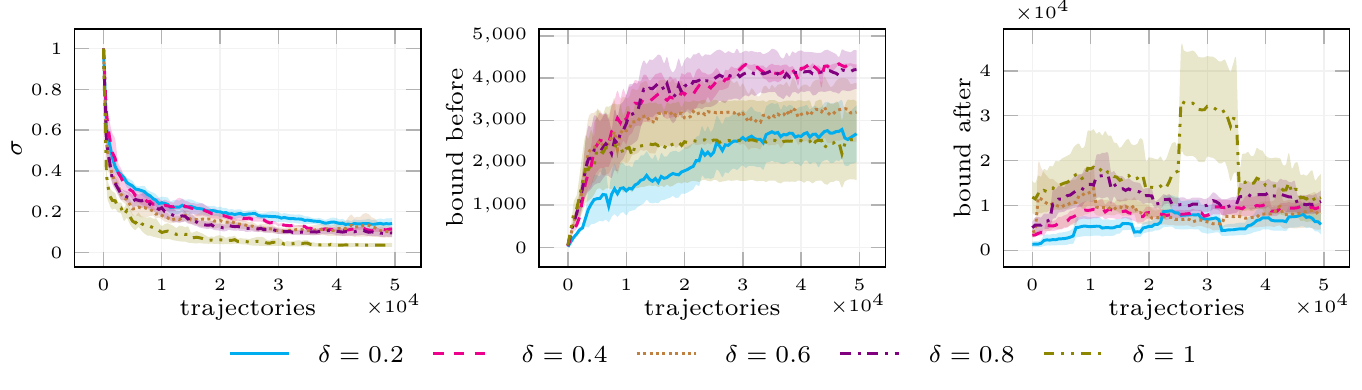}
\caption{Standard Deviation of the policy ($\sigma$), value of the bound before and after the optimization as a function of the number of trajectories for A-POIS in the Cartpole environment for different values of $\delta$ (5 runs, 95\% c.i.).}
\label{fig:plotDelta2}
\end{figure}

We report the comparative table taken from~\cite{duan2016benchmarking} containing all the benchmarked algorithms and the two versions of POIS (Table~\ref{tab:benchmarkapx}).

\setlength{\tabcolsep}{4pt}
\begin{table}[H]
  \caption{Cumulative return compared with~\cite{duan2016benchmarking} on deep neural policies (5 runs, 95\% c.i.). In \textbf{bold}, the performances that are not statistically significantly different from the best algorithm in each task.}
  \label{tab:benchmarkapx}
  \centering
  \footnotesize
  \begin{tabular}{lccccc}
    \toprule
        & Cart-Pole     &   & Double Inverted  &   \\
    	Algorithm		& Balancing & Mountain Car & Pendulum & Swimmer \\
    \midrule
    Random & $77.1 \pm 0.0  $  & $-415.4\pm 0.0$ & $149.7 \pm 0.1$ & $-1.7 \pm 0.1$    \\
    REINFORCE & $ 4693.7 \pm 14.0 $  & $-67.1 \pm 1.0 $ & $4116.5 \pm 65.2 $ & $92.3 \pm 0.1$     \\
    TNPG & $ \mathbf{3986.4 \pm 748.9} $  & $\mathbf{ -66.5 \pm 4.5 }$ & $\mathbf{4455.4 \pm 37.6}$ & $\mathbf{96.0 \pm 0.2}$      \\
    RWR & $ \mathbf{4861.5 \pm 12.3} $  & $ -79.4 \pm 1.1$ & $3614.8 \pm 368.1$ & $60.7 \pm 5.5$      \\
    REPS & $ 565.6 \pm 137.6 $  & $ -275.6 \pm 166.3 $ & $446.7\pm 114.8$ & $3.8 \pm 3.3$      \\
    TRPO & $ \mathbf{4869.8 \pm 37.6} $  & $\mathbf{ -61.7 \pm 0.9 }$ & $\mathbf{4412.4 \pm 50.4}$ & $\mathbf{96.0 \pm 0.2}$    \\
    DDPG & $ 4634.4 \pm 87.6 $  & $ -288.4 \pm 170.3$ & $2863.4 \pm 154.0$ & $85.8 \pm 1.8$      \\
    \rowcolor{blue!20}
    A-POIS & $ \mathbf{4842.8 \pm 13.0} $  & $ -63.7 \pm 0.5$ & $\mathbf{4232.1 \pm 189.5 }$ & $88.7 \pm 0.55$     \\
     \hdashline[3pt/2pt]
    CEM & $ 4815.4 \pm 4.8 $  & $ -66.0 \pm 2.4 $ & $2566.2 \pm 178.9$ & $68.8 \pm 2.4$     \\
    CMA-ES & $ 2440.4 \pm 568.3 $  & $ -85.0 \pm 7.7$ & $1576.1 \pm 51.3$ & $64.9 \pm 1.4$       \\
    \rowcolor{red!20}
    P-POIS & $ 4428.1 \pm 138.6 $  & $ -78.9 \pm 2.5 $ & $3161.4 \pm 959.2$  & $76.8 \pm 1.6$       \\
    \bottomrule
  \end{tabular}
\end{table}

In the following (Figure~\ref{fig:plotNN}) we show the learning curves of POIS in its two versions for the experiments with deep neural policies.


\begin{figure}[h!]
\centering
\footnotesize
\begin{subfigure}[l]{.5\textwidth}
  \centering
  \includegraphics[scale=1]{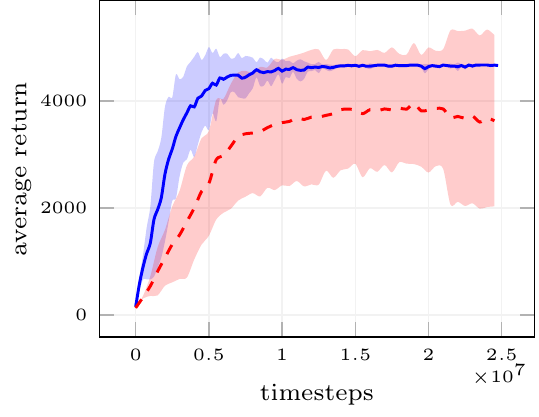}
  \caption{Inverted Double Pendulum}
\end{subfigure}%
\hfill
\begin{subfigure}[l]{.5\textwidth}
  \centering
   \includegraphics[scale=1]{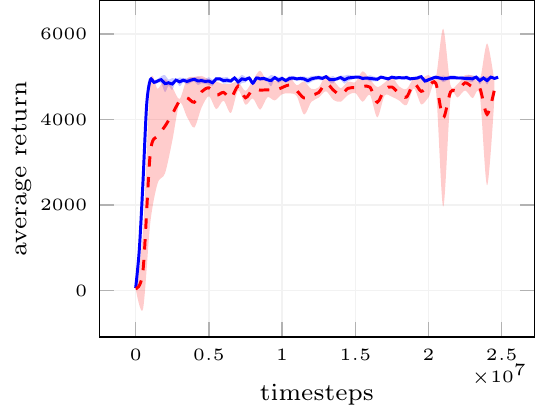}
   \caption{Cartpole}
\end{subfigure}%
\hfill
\begin{subfigure}[l]{.5\textwidth}
  \centering
   \includegraphics[scale=1]{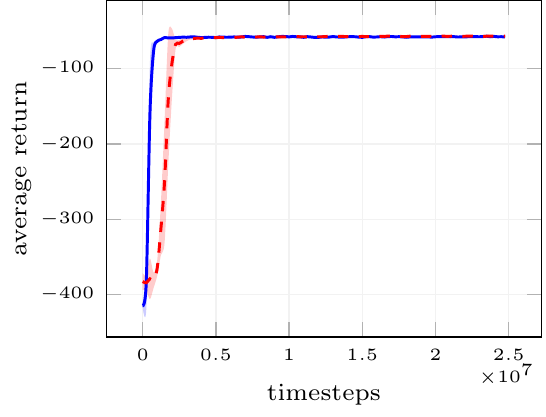}
  \caption{Mountain Car}
\end{subfigure}%
\hfill
\begin{subfigure}[l]{.5\textwidth}
  \centering
   \includegraphics[scale=1]{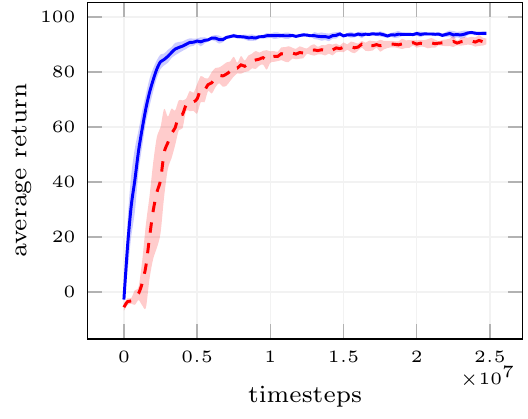}
   \caption{Swimmer}
\end{subfigure}%
\hfill
\begin{subfigure}[l]{\textwidth}
  \centering
   \includegraphics[scale=1]{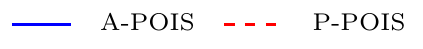}
\end{subfigure}%
\caption{Average return as a function of the number of trajectories for A-POIS, P-POIS with deep neural policies (5 runs, 95\% c.i.).}
\label{fig:plotNN}
\end{figure}

\end{document}